\icmltitlerunning{Stronger and Faster Wasserstein Adversarial Attacks}
\newcommand{\cmark}{\ding{51}}
\newcommand{\xmark}{\ding{55}}
\newcommand{\EE}{\mathds{E}}
\newcommand*{\textcite}{\citet}
\newcommand*{\parencite}{\citep}
\newcites{add}{Additional References}
\pgfplotsset{width=10cm,compat=1.9}
\newcommand{\getcorners}[1]{
\coordinate (#1_nw) at (#1.north west);
\coordinate (#1_ne) at (#1.north east);
\coordinate (#1_sw) at (#1.south west);
\coordinate (#1_se) at (#1.south east);
}
\newcommand{\drawtext}[4]{
\coordinate (rec_nw) at ($#1*(adv_imgs_nw)+(adv_imgs_ne)-#1*(adv_imgs_ne)$);
\coordinate (rec_ne) at ($#2*(adv_imgs_nw)+(adv_imgs_ne)-#2*(adv_imgs_ne)$);
\coordinate (rec_sw) at ($#1*(adv_imgs_sw)+(adv_imgs_se)-#1*(adv_imgs_se)$);
\coordinate (rec_se) at ($#2*(adv_imgs_sw)+(adv_imgs_se)-#2*(adv_imgs_se)$);

\draw[decorate,decoration={brace,mirror}] ($(rec_sw)+(0.03,0)$) -- ($(rec_se)-(0.03,0)$);

\node[inner sep=0] (xxx) at ($0.5*(rec_sw)+0.5*(rec_se)-0.25*(0,1)$) {#3};
\node[inner sep=0] (yyy) at ($0.5*(rec_sw)+0.5*(rec_se)-0.50*(0,1)$) {#4};
}
\begin{document}

\twocolumn[
\icmltitle{
Stronger and Faster Wasserstein Adversarial Attacks
}




\begin{icmlauthorlist}
\icmlauthor{Kaiwen Wu}{waterloo,vector}
\icmlauthor{Allen Houze Wang}{waterloo,vector}
\icmlauthor{Yaoliang Yu}{waterloo,vector}
\end{icmlauthorlist}

\icmlaffiliation{waterloo}{David R. Cheriton School of Computer Science, University of Waterloo}
\icmlaffiliation{vector}{Vector Institute}

\icmlcorrespondingauthor{Kaiwen Wu}{kaiwen.wu@uwaterloo.ca}
\vskip 0.3in
]



\printAffiliationsAndNotice{}  

\begin{abstract}

Deep models, while being extremely flexible and accurate, are surprisingly vulnerable to ``small, imperceptible'' perturbations known as adversarial attacks. While the majority of existing attacks focus on measuring perturbations under the $\ell_p$ metric, Wasserstein distance, which takes geometry in pixel space into account, has long been known to be a suitable metric for measuring image quality and has recently risen as a compelling alternative to the $\ell_p$ metric in adversarial attacks. However, constructing an effective attack under the Wasserstein metric is computationally much more challenging and calls for better optimization algorithms. We address this gap in two ways: (a) we develop an exact yet efficient projection operator to enable a stronger projected gradient attack; (b) we show that the Frank-Wolfe method equipped with a suitable linear minimization oracle works extremely fast under Wasserstein constraints. Our algorithms not only converge faster but also generate much stronger attacks. For instance, we decrease the accuracy of a residual network on CIFAR-10 to $3.4\%$ within a Wasserstein perturbation ball of radius $0.005$, in contrast to $65.6\%$ using the previous Wasserstein attack based on an \emph{approximate} projection operator. Furthermore, employing our stronger attacks in adversarial training significantly improves the robustness of adversarially trained models.

\end{abstract}

\section{Introduction}

Deep models are surprisingly vulnerable to adversarial attacks, namely small or even imperceptible perturbations that completely change the prediction \parencite{SzegedyZSBEGF14}.
The existence of adversarial examples has raised a lot of security concerns on deep models, and a substantial amount of work has devoted to this emerging field \parencite{GoodfellowMP18}, including various attacks as well as defences
\parencite[\eg][]{GoodfellowSS15,PapernotMWJS16,CarliniWagner17,Moosavi-DezfooliFF16,KurakinGB17,MadryMSTV18}.
Some empirical defences are shown ineffective later under stronger attacks \parencite{AthalyeCW18}, which has motivated a line of research on certified defences with \emph{provable} guarantees \parencite[\eg][]{WongKolter18,TjengXT19,GowalDSBQUAMK19,RaghunathanSL18,CohenRK19}.

\begin{figure}[t]
\centering
\includegraphics[width=\linewidth]{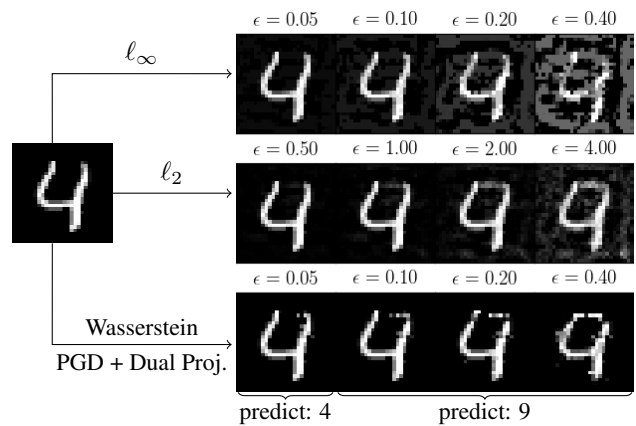}
\caption{$\ell_\infty$, $\ell_2$ and Wasserstein adversarial examples generated by projected gradient descent (PGD) with dual projection.
While $l_\infty$ and $l_2$ norm adversarial examples tend to perturb the background, Wasserstein adversarial examples \emph{redistribute} the pixel mass.
}
\label{fig:comparison_lp_wasserstein_adversarial}
\end{figure}

\begin{table*}[t]
\centering
\caption{A summary of projected Sinkhorn and our proposed algorithms. \textbf{3rd and 4th column:} Computational complexity for a single iteration of each algorithm (without or with local transportation constraint). $n$ is the dimension of inputs, and $k$ is the local transportation region size (see \S\ref{sec:local_transportation}). \textbf{5th column:} The exact convergence rate of projected Sinkhorn is not known yet. \textbf{Last column:} Whether the method is an exact or approximate algorithm.}
\vspace{-.4em}
\label{tb:summary}
\begin{tabular}{c | c @{\hspace{0.9\tabcolsep}}  c @{\hspace{0.9\tabcolsep}} c @{\hspace{0.99\tabcolsep}} c  @{\hspace{0.99\tabcolsep}} c}
\toprule
method & optimization space & cost/iter & cost/iter (local) & convergence rate & exact? \\
\midrule
projected Sinkhorn \parencite{WongSK19} & image space & $O(n^2)$ & $O(n k^2)$ & ? & \xmark \\
dual projection (ours) & coupling matrix & $O(n^2 \log n)$ & $O(n k^2\log k)$ & linear & \cmark \\ 
dual linear minimization oracle (ours) & coupling matrix & $O(n^2)$ & $O(n k^2)$ & linear & \xmark \\
\bottomrule
\end{tabular}
\vspace{-.4em}
\end{table*}

The majority of existing work on adversarial robustness focused on the $\ell_p$ threat model where the perturbation is measured using the $\ell_p$ norm.
However, the $\ell_p$ norm, despite being computationally convenient, is long known to be a poor proxy for measuring image similarity: two semantically similar images for human perception are not necessarily close under $\ell_p$ norm, see \parencite[\eg][]{WangBovik09} for some astonishing examples.
To this end, threat models beyond $\ell_p$ norms have been proposed, \eg \textcite{EngstromTTSM19} explore geometric transformation to fool deep networks; \citet{LaidlawFeizi19} use point-wise functions on pixel values to flip predictions; \textcite{TramerBoneh19} study robustness against multiple ($\ell_p$) perturbations.

In the same spirit, \textcite{WongSK19} recently proposed the Wasserstein threat model, \ie, adversarial examples are subject to a perturbation budget measured by the Wasserstein distance \parencite[a.k.a. earth mover's distance, see \eg][]{PeyreCuturi19}. The idea is to \emph{redistribute} pixel mass
instead of adjusting each pixel value as in previous $\ell_p$ threat models. Examples of Wasserstein adversarial attacks (generated by our algorithm) and comparison to $\ell_2$ and $\ell_\infty$ adversarial attacks are shown in \Cref{fig:comparison_lp_wasserstein_adversarial}. A key advantage of the former 
is that it explicitly captures geometric information in the image space (\ie how mass moves around matters). For example, a slight translation or rotation of an image usually induces small change in the Wasserstein distance, but may change the $l_p$ distance drastically. In addition, Wasserstein distance has played a pivotal role in generative adversarial networks \parencite{ArjovskyCB17},  computer vision \parencite{RubnerGT97}, and much beyond \parencite{PeyreCuturi19}.

\vspace{.12em}

\textbf{Contributions}  Generating Wasserstein adversarial examples requires solving a Wasserstein constrained optimization problem.
\textcite{WongSK19} developed a projected gradient attack using \emph{approximate} projection (projected Sinkhorn), which we find sometimes too crude and generating suboptimal attacks.
In this work, we develop two stronger and faster attacks, based on reformulating the optimization problem (\S\ref{sec:prob}) and applying projected gradient descent (PGD) and Frank-Wolfe (FW), respectively.
For the PGD attack, we design a specialized algorithm to compute the projection \emph{exactly}, which significantly improves the attack quality (\S\ref{sec:dual_proj}).
For FW, we develop a \emph{faster} algorithm to solve the linear minimization step with entropic smoothing (\S\ref{sec:frank_wolfe}).
Both subroutines enjoy fast linear convergence rates.
Synthetic experiments on simple convex functions (\Cref{tab:sinkhorn_proj_experiment}) show that both algorithms are able to converge to high precision solutions.
Extensive experiments on large scale datasets (\S\ref{sec:exp}) confirm the improved quality and speed of our attacks.
In particular, for the first time we successfully construct Wasserstein adversarial examples on the ImageNet dataset.
A quick comparison of projected Sinkhorn and our algorithms is shown in \Cref{tb:summary}.
Finally, we show that employing our stronger and faster attacks in adversarial training can significantly improve the robustness of adversarially trained models.
Our implementation is available at \url{https://github.com/watml/fast-wasserstein-adversarial}.

\section{Formulation}
\label{sec:prob}

Wasserstein distance (a.k.a. earth mover's distance) is a metric defined on the space of finite measures with equal total mass \citep{PeyreCuturi19}.
For images, we view them as discrete measures supported on pixel locations. Let $\xv, \zv \in [0,1]^{n}$ be two vectorized images such that $\one^\top\xv = \one^\top \zv$ (equal mass). Their Wasserstein distance is defined as:
\begin{align}
\label{eq:wasserstein_distance}
    \Wc(\xv, \zv) = \min_{\Pi \geq 0} ~ \langle \Pi, C \rangle ~ \st \Pi \one = \xv, \Pi^\top \one = \zv,
\end{align}
where $C \in \Rb^{n \times n}$ is a cost matrix, with $C_{ij}$ representing the cost of transportation from the $i$-th to the $j$-th pixel; and $\Pi \in \Rb^{n \times n}$ is a transportation/coupling matrix, with $\Pi_{ij}$ representing the amount of mass transported from the $i$-th to the $j$-th pixel. Intuitively, Wasserstein distance measures the minimum cost to move mass from $\xv$ to $\zv$. Unlike usual statistical divergences (\eg KL), Wasserstein distance takes the distance between pixels into account hence able to capture the underlying geometry.
It has been widely used in statistics, image processing, graphics, machine learning, \etc, see the excellent monograph \citep{PeyreCuturi19}.

Throughout the paper, w.l.o.g. we assume that all entries in the cost matrix $C$ are nonnegative and $C_{ij} = 0 \Leftrightarrow i = j$.
All common cost matrices satisfy this assumption.

\subsection{PGD with projected Sinkhorn}

Given a deep model that already minimizes some training loss $\EE \ell(X, Y; \thetav)$,
we fix (hence also suppress in notation) the model parameter $\thetav$ and 
aim to generate adversarial examples by \emph{maximizing} the loss $\ell$ subject to some perturbation budget on an input image $\xv$.
Following \textcite{WongSK19}, we use the Wasserstein distance to measure perturbations:
\begin{align}
\label{eq:wasserstein_constrained_optimization_image_space}
    \maxi_{\zv \in [0,1]^n} ~ \ell(\zv, y) ~ \st ~ \Wc(\xv, \zv) \leq \delta = \epsilon \one^\top\xv,
\end{align}
where the perturbation budget $\delta$ is proportional to the total mass in the input image $\xv$ and $\epsilon$ indicates the ``proportion''.
We focus on untargeted attack throughout, where $y$ is the true label of the input image $\xv$.
All techniques in this paper can be easily adapted for targeted attacks as well.

\begin{table*}[t]
\centering
\caption{(Exact) Wasserstein distances $\Wc(\av, \hat{\pv})$ and number of dual iterations in projected Sinkhorn in the first four columns.
$\gamma$ is the entropic regularization constant.
Projected Sinkhorn encountered numerical issues for small $\gamma = 5 \cdot 10^{-5}$.
}
\begin{tabular}{c | c c | c c | c c | c c | c c | c}
\toprule
\multirow{2}{*}{$\gamma$} & \multicolumn{2}{c|}{$10^{-3}$} & \multicolumn{2}{c|}{$2 \cdot 10^{-4}$} & \multicolumn{2}{c|}{$10^{-4}$} &  \multicolumn{2}{c|}{$5 \cdot 10^{-5}$} &  \multicolumn{2}{c|}{ours} & \multirow{2}{*}{ground truth} \\
& $\Wc$ & iter & $\Wc$ & iter & $\Wc$ & iter & $\Wc$ & iter & PGD & FW & \\
\midrule
$\epsilon = 0.5$ & $0.267$ & $ 28$ & $0.402$ & $ 44$ & $0.437$ & $205$ & $-$ & $-$ & $0.500$ & $0.500$ & $0.500$ \\
$\epsilon = 1.0$ & $0.356$ & $ 21$ & $0.498$ & $111$ & $0.555$ & $197$ & $-$ & $-$ & $0.797$ & $0.797$ & $0.797$ \\
\bottomrule
\end{tabular}
\label{tab:sinkhorn_proj_experiment}
\end{table*}

To optimize \eqref{eq:wasserstein_constrained_optimization_image_space}, \textcite{WongSK19} developed an \emph{approximate} projection operator to the Wasserstein ball constraint, called projected Sinkhorn, to enable the projected gradient (PGD) adversarial attack \citep{MadryMSTV18}.
The approximate projection is based on solving an entropic regularized quadratic program (see \Cref{sec:appendix_projected_sinkhorn}).
However, we observed that this approximation is not always accurate in practice.
To test this, we randomly generate two vectors $\av, \bv \in [0,1]^{400}$ with unit total mass.
The initial Wasserstein distance between $\av$ and $\bv$ is $0.797$.
Next, we project $\bv$ using projected Sinkhorn onto Wasserstein balls centered at $\av$ with two different radii $\epsilon = 0.5$ and $\epsilon = 1.0$, respectively. 
We report in \Cref{tab:sinkhorn_proj_experiment} the number of iterations for projected Sinkhorn to output an approximate projection $\hat\pv$, and we compute the (exact) Wasserstein distance between $\av$ and $\hat\pv$ using a linear programming solver for \eqref{eq:wasserstein_distance}.

In the first row of \Cref{tab:sinkhorn_proj_experiment}, we expect $\Wc(\av, \hat\pv) = 0.5$, since  for an exterior point the exact projection should be at the boundary of the Wasserstein ball.
In the second row, we expect $\Wc(\av, \hat\pv) = 0.797$, since an exact projection of an interior point should be itself.
However, in both cases, the actual Wasserstein distances of projected Sinkhorn are always much smaller than the ground truth.
Although $\Wc(\av, \hat\pv)$ gets closer to the ground truth as $\gamma$ (the entropic regularization constant) decreases, non-negligible gaps remain.
Further decreasing $\gamma$ may potentially reduce the approximation error, but (a) overly small $\gamma$ causes numerical issues easily and (b) the number of iterations increases as $\gamma$ decreases.
As we will confirm in \S\ref{sec:exp}, this approximation error in projected Sinkhorn leads to a substantially suboptimal attack.
In comparison, we minimize the quadratic objective in (Euclidean) projection iteratively by PGD with dual projection (\S\ref{sec:dual_proj}) and by FW with dual LMO (\S\ref{sec:frank_wolfe}).
Their outputs are exact in the first three digits after the decimal point, which serves as a simple sanity check of our algorithms in the convex setting.

\subsection{Our Reformulation}
\label{sec:reformulation}

The large approximation error in projected Sinkhorn motivates us to develop more accurate algorithms for stronger attacks. First, we slightly reformulate \eqref{eq:wasserstein_constrained_optimization_image_space} to simplify the constraint. We expand the constraint in \eqref{eq:wasserstein_constrained_optimization_image_space}, and jointly maximize the objective over $\zv$ and $\Pi$:
\begin{align*}
    \maxi_{\zv, \Pi \geq 0} ~~ & \ell(\zv, y) \\
    \sbjto ~ & \Pi \one = \xv, ~ \Pi^\top \one = \zv, ~ \langle \Pi, C \rangle \leq \delta.
\end{align*}
Note that we have dropped the domain constraint $\zv \in [0,1]^n$, which we will revisit in \S\ref{sec:hypercube}.
We plug in the constraint $\Pi^\top \one = \zv$ into the objective to eliminate $\zv$:
\begin{align}
\label{eq:wasserstein_constrained_optimization_coupling}
    \maxi_{\Pi \geq 0} ~ & \ell(\Pi^\top \one, y)
    ~\st ~  \Pi \one = \xv, ~ \langle \Pi, C \rangle \leq \delta,
\end{align}
arriving at a constrained optimization problem \wrt $\Pi$ alone with two linear constraints on it.
Yet, problem \eqref{eq:wasserstein_constrained_optimization_image_space} and \eqref{eq:wasserstein_constrained_optimization_coupling} are clearly equivalent.
Moreover, problem \eqref{eq:wasserstein_constrained_optimization_coupling} has its own interpretation: instead of maximizing the loss in the image space, it maximizes the loss in the transportation space, searching for a feasible transportation plan $\Pi$ with cost no greater than $\delta$, to transport $\xv$ to an adversarial example.
Given a solution $\Pi$ to \eqref{eq:wasserstein_constrained_optimization_coupling}, we generate adversarial examples by summing over the columns of $\Pi$, \ie, $\xv_{adv} = \Pi^\top \one$. We note that $\nabla_{\Pi} \ell = \one (\nabla_{\xv_{adv}} \ell)^\top$, where  $\nabla_{\xv_{adv}} \ell$ can be computed efficiently using backpropagation.

We propose PGD and FW to optimize \eqref{eq:wasserstein_constrained_optimization_coupling}.
While both PGD and FW have been previously used to generate adversarial examples \parencite{MadryMSTV18,ChenZYG20}, they are based on the $\ell_p$ threat model that measures image perturbations under the $\ell_p$ distance.
Instead, for the Wasserstein problem in \eqref{eq:wasserstein_constrained_optimization_coupling}, 
applying PGD and FW requires specialized algorithms for the projection operator and the linear minimization oracle, which are the main goals of \S\ref{sec:dual_proj} and \S\ref{sec:frank_wolfe}.

\section{Projected Gradient with Dual Projection}
\label{sec:dual_proj}

We apply PGD to maximize \eqref{eq:wasserstein_constrained_optimization_coupling} for generating Wasserstein adversarial examples, and arrive at the following update rule on the coupling matrix $\Pi$:
\begin{align*}
    \Pi^{(t + 1)} &= \projto{\Cc}{G}, ~~ \mbox{where} \\
    G &= \Pi^{(t)} + \eta_t \nabla_{\Pi} \ell \big((\Pi^{(t)})^\top \one, y\big)
\end{align*}
and $\projto{\Cc}{\cdot}$ denotes the (Euclidean) projection operator onto the convex set $\Cc$, represented by the constraints in \eqref{eq:wasserstein_constrained_optimization_coupling}. Namely, we take a gradient step and then project it back to the feasible set.
The projection operator $\projto{\Cc}{G}$ is given by the following quadratic program:
\begin{align}
\label{eq:coupling_projection}
    \mini_{\Pi \geq 0} & ~ \tfrac{1}{2} \| \Pi - G \|_\mathrm{F}^2 ~
    \st  ~\Pi \one = \xv, ~ \langle \Pi, C \rangle \leq \delta.
\end{align}
While any quadratic programming solver (\eg interior point method) could be used to solve \eqref{eq:coupling_projection}, they do not scale well in high dimensions.
For high resolution images, \eqref{eq:coupling_projection} could involve millions of variables.
Since this projection is called in each iteration of PGD, it needs to be solved by a highly efficient algorithm.
Below, we exploit the structure of this problem to design a fast specialized projection operator.

\subsection{A First Attempt: Dykstra's Projection}

A simple observation is that the constraint in \eqref{eq:coupling_projection} is precisely the intersection of the following two convex sets:
\begin{align*}
    \Cc_s &= \left\{ \Pi \in \Rb^{n \times n} : \Pi \geq 0, ~ \Pi \one = \xv \right\}, ~ \mbox{and} \\
    \Cc_h &= \left\{ \Pi \in \Rb^{n \times n}: \langle \Pi, C \rangle \leq \delta \right\},
\end{align*}
where each row of $\Cc_s$ is a simplex constraint, requiring Wassserstein adversarial examples to preserve total mass; and $\Cc_h$ is a half space constraint, restricting the perturbation budget of Wasserstein adversarial examples.
It is known that projection to the intersection of convex sets can be computed by Dykstra's algorithm \parencite{BoyleDykstra86, Dykstra83}, provided that the projection to each convex set can be computed easily.
Hence, our first attempt is to apply Dykstra's algorithm to solve \eqref{eq:coupling_projection} (see \Cref{sec:appendix_dykstra} for a description of Dykstra's algorithm and methods for projection to each convex set).
However, the convergence rate of Dykstra's algorithm highly relies on the geometry of these two convex sets $\Cc_s$ and $\Cc_h$ \parencite{DeutschHundal94}.
In fact, we observe that Dykstra's algorithm converges very slowly in some cases (see \Cref{sec:appendix_dykstra}).

\subsection{Dual Projection}

Instead, we develop a dual projection method which converges much faster than Dykstra's algorithm. By the method of Lagrange multipliers, we derive the dual problem: 
\begin{restatable}{proposition}{projectiondualobjective}
\label{prop:projection_dual_objective}
The dual of \eqref{eq:coupling_projection} is
\begin{align}
\label{eq:projection_dual_problem}
    &\maxi_{\lambda \geq 0}~ g(\lambda), ~~~ \mbox{where} \\
\label{eq:projection_dual_objective}
    g(\lambda) = \min_{\Pi \one = \xv, \Pi \geq 0}&~ \tfrac{1}{2} \| \Pi - G \|_{\mathrm{F}}^2 + \lambda \left(\langle \Pi, C \rangle - \delta \right).
\end{align}
In addition, the derivative of $g(\lambda)$ at a point $\lambda = \tilde\lambda$ is
\begin{align}
\label{eq:projection_dual_objective_gradient}
    g^\prime(\tilde\lambda) = \langle \tilde\Pi, C \rangle - \delta, ~~~ \mbox{where} \\
    \tilde\Pi = \argmin_{\Pi \one = \xv, \Pi \geq 0} ~ \| \Pi - G + \tilde \lambda C \|_{\mathrm{F}}^2.
\end{align}
Both $g(\lambda)$ and $g^\prime(\lambda)$ can be evaluated in $O(n^2 \log n)$ time deterministically for any given $\lambda$.
\end{restatable}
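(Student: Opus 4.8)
The plan is to treat \eqref{eq:coupling_projection} as a convex quadratic program and to dualize \emph{only} the budget inequality $\langle \Pi, C\rangle \le \delta$, keeping the row-sum and nonnegativity constraints inside the inner minimization. First I would form the partial Lagrangian $L(\Pi,\lambda) = \tfrac12\|\Pi - G\|_{\mathrm{F}}^2 + \lambda(\langle\Pi,C\rangle - \delta)$ with multiplier $\lambda \ge 0$, so that $g(\lambda) = \min_{\Pi\one=\xv,\,\Pi\ge0} L(\Pi,\lambda)$ is exactly \eqref{eq:projection_dual_objective}, and $g$ is concave as a pointwise infimum of functions affine in $\lambda$. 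To conclude that $\max_{\lambda\ge0} g(\lambda)$ attains the primal value with no duality gap, I would use that the objective is convex while every constraint is affine, so strong duality holds once the primal is feasible; the diagonal coupling $\Pi = \operatorname{diag}(\xv)$ provides a point that is in fact strictly budget-feasible, since $\Pi\one=\xv$, $\Pi\ge0$, and $\langle\Pi,C\rangle = \sum_i \xv_i C_{ii} = 0 < \delta$ by the standing assumption $C_{ii}=0$ together with $\delta = \epsilon\one^\top\xv > 0$. This yields \eqref{eq:projection_dual_problem}.

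For the derivative formula \eqref{eq:projection_dual_objective_gradient}, the key observation is that the inner objective is \emph{strictly} convex in $\Pi$ (owing to the $\tfrac12\|\Pi\|_{\mathrm{F}}^2$ term) on the compact feasible set $\{\Pi\one=\xv,\,\Pi\ge0\}$, so its minimizer $\Pi^\star(\lambda)$ is unique for every $\lambda$. Since $L(\Pi,\lambda)$ is affine in $\lambda$ for fixed $\Pi$, Danskin's (envelope) theorem applies and gives $g'(\lambda) = \partial_\lambda L(\Pi^\star(\lambda),\lambda) = \langle \Pi^\star(\lambda), C\rangle - \delta$. It then remains to identify $\Pi^\star(\tilde\lambda)$ with the $\tilde\Pi$ in the statement, which I would do by completing the square: $\tfrac12\|\Pi-G\|_{\mathrm{F}}^2 + \tilde\lambda\langle\Pi,C\rangle = \tfrac12\|\Pi-(G-\tilde\lambda C)\|_{\mathrm{F}}^2 + \mathrm{const}$, with the constant independent of $\Pi$. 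Hence minimizing the inner objective over the feasible set is equivalent to minimizing $\|\Pi - G + \tilde\lambda C\|_{\mathrm{F}}^2$, which is precisely the defining relation for $\tilde\Pi$.

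The complexity claim follows from the separable structure of this inner minimization. The feasible set $\{\Pi\one=\xv,\,\Pi\ge0\}$ couples entries only within each row, so the Euclidean projection defining $\tilde\Pi$ decomposes into $n$ independent projections, where row $i$ is projected onto the scaled simplex $\{\pv\ge0 : \one^\top\pv = \xv_i\}$. Each such simplex projection is computable deterministically in $O(n\log n)$ time by the standard sort-and-threshold procedure, for a total of $O(n^2\log n)$ to form $\tilde\Pi$; evaluating $g(\tilde\lambda)$ and the inner product $\langle\tilde\Pi,C\rangle$ in \eqref{eq:projection_dual_objective_gradient} then costs only $O(n^2)$, so the overall cost is $O(n^2\log n)$.

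I expect the main obstacle to be the rigorous justification of differentiability of $g$, rather than the algebra: one must verify that the inner minimizer is unique and varies continuously so that the envelope theorem delivers an exact derivative and not merely a subgradient. Strict convexity of the quadratic term secures uniqueness, and continuity of $\Pi^\star(\cdot)$ follows from standard parametric-optimization arguments, after which the envelope identity and the completing-the-square reduction are routine.
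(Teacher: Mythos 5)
Your proposal matches the paper's proof essentially step for step: dualize only the budget constraint $\langle \Pi, C\rangle \le \delta$ via a partial Lagrangian, complete the square so the inner problem becomes the Euclidean projection of $G - \lambda C$ onto the row-wise (scaled) simplices, invoke Danskin's theorem for the derivative formula, and use the $O(n\log n)$ sort-and-threshold simplex projection per row for the $O(n^2\log n)$ total cost. The only difference is that you are somewhat more careful than the paper --- explicitly verifying strong duality via Slater's condition with $\Pi = \operatorname{diag}(\xv)$ and noting the uniqueness of the inner minimizer needed for Danskin's theorem to yield an exact derivative --- which strengthens rather than alters the argument.
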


The derivation of \Cref{prop:projection_dual_objective} is based on the observation that \eqref{eq:projection_dual_objective} is equivalent to computing a projection operator $\projto{\Cc_s}{G - \lambda C}$.
Since the constraint $\Cc_s$ is independent for each row, it can be further reduced to projecting each row of $G - \lambda C$ to a simplex.
The well-known simplex projection algorithm \parencite[\eg][]{DuchiSSC08} takes $O(n \log n)$ time, thus the projection to $\Cc_s$ takes $O(n^2 \log n)$ time.

Although this dual problem does not have a closed form expression, \Cref{prop:projection_dual_objective} provides a method to evaluate its objective and gradient, which is sufficient to use first order optimization algorithms.
Here, we choose a simple algorithm with linear convergence rate, by exploiting the fact that the dual objective \eqref{eq:projection_dual_objective} is a univariate function.
First, we derive an upper bound of the dual solution.

\begin{restatable}{proposition}{projectiondualupperbound}
\label{prop:projection_dual_upper_bound}
The dual solution $\lambda^\star$ of \eqref{eq:projection_dual_problem} satisfies
\begin{align}
\label{eq:projection_dual_upper_bound}
    0 \leq \lambda^\star \leq \frac{2  \norm{\mathrm{vec}(G)}_\infty + \norm{\xv}_\infty}{\min_{i \neq j} \{C_{ij} \}}.
\end{align}
\end{restatable}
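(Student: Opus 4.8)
The plan is to prove the lower bound directly and the upper bound by showing that the dual derivative becomes strictly negative once $\lambda$ exceeds the claimed threshold. The lower bound $\lambda^\star \geq 0$ is immediate, since the feasible region of \eqref{eq:projection_dual_problem} is precisely $\{\lambda \geq 0\}$. For the upper bound, I would first record two structural facts about $g$. Because each inner minimand in \eqref{eq:projection_dual_objective} is affine in $\lambda$, the dual objective $g$ is a pointwise minimum of affine functions and hence concave, so $g'$ is non-increasing. By the first-order optimality condition for maximizing a concave function over $\{\lambda \geq 0\}$, any optimum with $\lambda^\star > 0$ must satisfy $g'(\lambda^\star) = 0$. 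It therefore suffices to exhibit a threshold $\bar\lambda$ such that $g'(\lambda) < 0$ for all $\lambda \geq \bar\lambda$; this rules out an interior optimum beyond $\bar\lambda$ and forces $\lambda^\star \leq \bar\lambda$.

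The core of the argument is to analyze the minimizer $\tilde\Pi$ in \eqref{eq:projection_dual_objective_gradient}, which by the remark following \Cref{prop:projection_dual_objective} is exactly the projection $\projto{\Cc_s}{G - \lambda C}$, i.e.\ the row-wise projection of $G - \lambda C$ onto scaled simplices of masses $x_i$. I would invoke the threshold characterization of simplex projection \parencite{DuchiSSC08}: the projection of a vector onto the simplex of total mass $s$ concentrates all mass on its single largest coordinate whenever that coordinate exceeds the second largest by at least $s$. Applied to row $i$, the diagonal coordinate of $G - \lambda C$ equals $G_{ii}$ since $C_{ii} = 0$, whereas every off-diagonal coordinate is bounded by $\norm{\mathrm{vec}(G)}_\infty - \lambda \min_{k \neq l} \{C_{kl}\}$, using $C_{ij} \geq \min_{k \neq l}\{C_{kl}\}$ for $i \neq j$. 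Combined with $G_{ii} \geq -\norm{\mathrm{vec}(G)}_\infty$, the gap between the diagonal coordinate and every off-diagonal coordinate is at least $\lambda \min_{k \neq l}\{C_{kl}\} - 2\norm{\mathrm{vec}(G)}_\infty$. Requiring this gap to be at least $x_i$ for all $i$ — which holds once $\lambda \geq \bar\lambda$ with $\bar\lambda$ the right-hand side of \eqref{eq:projection_dual_upper_bound}, because $x_i \leq \norm{\xv}_\infty$ — guarantees that $\tilde\Pi$ places all mass on the diagonal, so that $\tilde\Pi = \mathrm{diag}(\xv)$.

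Once $\tilde\Pi$ is diagonal, $\langle \tilde\Pi, C \rangle = 0$ since $C_{ii} = 0$, whence \eqref{eq:projection_dual_objective_gradient} gives $g'(\lambda) = -\delta < 0$ for every $\lambda \geq \bar\lambda$ (using $\delta > 0$). Together with the concavity observation, this yields $\lambda^\star \leq \bar\lambda$, which is exactly \eqref{eq:projection_dual_upper_bound}. I expect the main obstacle to be the quantitative simplex-projection estimate: one must carefully justify the margin condition under which simplex projection retains only the largest coordinate, and verify that the uniform lower bound on the diagonal-to-off-diagonal gap, assembled from $\norm{\mathrm{vec}(G)}_\infty$ and $\min_{k \neq l}\{C_{kl}\}$, is large enough to dominate $x_i$ across all rows simultaneously. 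The boundary case $x_i = 0$ (where the row-simplex is the singleton $\{0\}$ and that row vanishes automatically) and the degenerate case $\delta = 0$ should be checked separately, but both are straightforward.
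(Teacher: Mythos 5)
Your proposal is correct and follows essentially the same route as the paper: show that for $\lambda$ beyond the claimed threshold the row-wise simplex projection of $G - \lambda C$ concentrates all mass on the diagonal (the paper's Lemma~\ref{lma:projection_simple_case}, with exactly your margin condition $v_i \geq v_j + z$), so that $g'(\lambda) = \langle \tilde\Pi, C\rangle - \delta = -\delta < 0$ rules out optimality. The only cosmetic difference is that you invoke the threshold characterization from \parencite{DuchiSSC08} for the concentration step, whereas the paper gives a short self-contained exchange argument (while noting the \parencite{DuchiSSC08} route works too).
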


Since $g(\lambda)$ is concave and differentiable, we make the following simple observation: (a) any point $l > 0$ with positive derivative is a lower bound of $\lambda^\star$;
(b) any point $u > 0$ with negative derivative is an upper bound of $\lambda^\star$.
Thus, we start with the lower bound and upper bound in \Cref{prop:projection_dual_upper_bound}, and use bisection method to search for $\lambda^\star$, by iteratively testing the sign of the derivative.
Eventually, the bisection method converges to either a stationary point or the boundary $\lambda = 0$, which are exactly maximizers in both cases.
Since the bisection method halves the gap between lower and upper bounds in each iteration, it converges linearly.
Once we solve the dual, we can recover the primal solution by the following.

\begin{restatable}{proposition}{projectiondualrecover}
\label{prop:projection_dual_recover}
The primal solution $\Pi^\star$ and the dual solution $\lambda^\star$ satisfies
\begin{align*}
    \Pi^\star = \argmin_{\Pi \one = \xv, \Pi \geq 0} ~ \| \Pi - G + \lambda^\star C \|_{\mathrm{F}}^2,
\end{align*}
thus $\Pi^\star$ can be computed in $O(n^2 \log n)$ time given $\lambda^\star$.
\end{restatable}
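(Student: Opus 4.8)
The plan is to read the claim off from Lagrangian strong duality for the convex quadratic program \eqref{eq:coupling_projection}, combined with the saddle-point characterization of optimal primal–dual pairs. First I would check Slater's condition so that strong duality holds and the optimal multiplier $\lambda^\star$ is attained. The feasible set of \eqref{eq:coupling_projection} contains the strictly interior point $\Pi_0 = \mathrm{diag}(\xv)$: indeed $\Pi_0 \one = \xv$ and $\Pi_0 \ge 0$, while $\langle \Pi_0, C \rangle = \sum_i x_i C_{ii} = 0 < \delta$ because $C_{ii} = 0$ by our standing assumption and $\delta = \epsilon \one^\top \xv > 0$. Thus the inequality constraint $\langle \Pi, C \rangle \le \delta$ is strictly satisfiable, Slater's condition holds, and strong duality between \eqref{eq:coupling_projection} and its dual \eqref{eq:projection_dual_problem} follows (together with existence of $\lambda^\star$).

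Next I would show that the unique primal optimum $\Pi^\star$ is precisely a minimizer of the Lagrangian $L(\Pi, \lambda) := \tfrac12 \|\Pi - G\|_\mathrm{F}^2 + \lambda\left(\langle \Pi, C\rangle - \delta\right)$ at $\lambda = \lambda^\star$, over the simplified feasible set $\Cc_s = \{\Pi : \Pi\one = \xv, \Pi \ge 0\}$. Writing $p^\star$ for the optimal value of \eqref{eq:coupling_projection}, strong duality gives $g(\lambda^\star) = p^\star = \tfrac12\|\Pi^\star - G\|_\mathrm{F}^2$. Since $\Pi^\star \in \Cc_s$, the definition \eqref{eq:projection_dual_objective} of $g$ yields the chain
\begin{align*}
g(\lambda^\star) &\le L(\Pi^\star, \lambda^\star) = \tfrac12\|\Pi^\star - G\|_\mathrm{F}^2 + \lambda^\star\left(\langle \Pi^\star, C\rangle - \delta\right) \\
&\le \tfrac12\|\Pi^\star - G\|_\mathrm{F}^2 = g(\lambda^\star),
\end{align*}
where the second inequality uses $\lambda^\star \ge 0$ and primal feasibility $\langle \Pi^\star, C\rangle \le \delta$. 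All inequalities are therefore equalities; in particular complementary slackness $\lambda^\star(\langle \Pi^\star, C\rangle - \delta) = 0$ holds, and $\Pi^\star$ attains the minimum defining $g(\lambda^\star)$, i.e. $\Pi^\star \in \argmin_{\Pi \in \Cc_s} L(\Pi, \lambda^\star)$.

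Then I would convert this Lagrangian minimization into the stated projection by completing the square. Dropping the $\Pi$-independent term $-\lambda^\star\delta$ and expanding, one gets $L(\Pi, \lambda^\star) = \tfrac12\|\Pi - G + \lambda^\star C\|_\mathrm{F}^2 + \mathrm{const}$, where the additive constant does not depend on $\Pi$. Hence $\argmin_{\Pi \in \Cc_s} L(\Pi, \lambda^\star) = \argmin_{\Pi\one = \xv, \Pi \ge 0} \|\Pi - G + \lambda^\star C\|_\mathrm{F}^2$, exactly the right-hand side of the claim. Because this quadratic objective is strongly convex and $\Cc_s$ is closed and convex, the minimizer is unique, so the set membership of the previous step upgrades to the asserted equality $\Pi^\star = \argmin_{\Pi\one = \xv, \Pi \ge 0} \|\Pi - G + \lambda^\star C\|_\mathrm{F}^2$.

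Finally, the complexity claim reuses the structure already exploited in \Cref{prop:projection_dual_objective}: the constraint $\Cc_s$ decouples across rows, so computing $\Pi^\star$ reduces to projecting each of the $n$ rows of $G - \lambda^\star C$ onto the scaled simplex $\{\zv \ge 0 : \one^\top \zv = x_i\}$, each costing $O(n\log n)$ via the standard simplex-projection algorithm, for $O(n^2 \log n)$ in total. I expect the only genuine subtlety to be the attainment argument of the second paragraph, namely ensuring that $\Pi^\star$ itself (and not merely some minimizer of the Lagrangian) is the primal optimum; the sandwiching equalities together with strict convexity settle this cleanly, and the remaining steps are routine.
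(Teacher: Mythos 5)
Your proof is correct and takes essentially the same route as the paper, whose entire proof of this proposition is the one-line remark that it is ``a direct implication of KKT conditions'': your Slater check, the saddle-point sandwich yielding complementary slackness and Lagrangian minimality of $\Pi^\star$, the completion of the square, and the row-wise simplex-projection complexity count are precisely the details that citation compresses. If anything, you make explicit two points the paper leaves implicit --- attainment of the dual optimum $\lambda^\star$ and uniqueness of the minimizer via strong convexity, which is what upgrades membership in the $\argmin$ to the asserted equality.
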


\begin{algorithm2e}[t]
  \DontPrintSemicolon
  \KwIn{$G, C \in \Rb^{n \times n}$, $\xv \in \Rb^{n}$, $\delta > 0$, $l = 0$, $u > 0$}
  \KwOut{$\tilde\Pi \in \Rb^{n \times n}$}
 \label{line:bisection_search_judge_convergence} \While{not converged}{
    $\tilde{\lambda} = \frac{1}{2}\left(l + u\right)$

    $\tilde\Pi = \argmin_{\Pi \one = \xv, \Pi \geq 0} ~ \| \Pi - G + \tilde\lambda C \|_{\mathrm{F}}^2$
    \label{line:dual_projection_recover}

    \lIf{$\langle \tilde{\Pi}, C \rangle > \delta$}{
      $l = \tilde{\lambda}$
    }
    \lElse{
      $u = \tilde{\lambda}$
    }
  }
  \caption{Dual Projection}
  \label{alg:dual_projection}
\end{algorithm2e}

The full dual projection is presented in \Cref{alg:dual_projection}, where $u$ is initialized as the upper bound \eqref{eq:projection_dual_upper_bound}.
A discussion of the stopping criterion in \Cref{line:bisection_search_judge_convergence} is deferred to \Cref{sec:appendix_bisection_converge}.

Finally, when the loss $\ell$ is convex (concave) in $\xv$, it is also convex (concave) in $\Pi$ after the reformulation in \S\ref{sec:reformulation}.
In this case, projected gradient with dual projection is \emph{guaranteed} to converge to a global optimum. When $\ell$ is nonconvex, projected gradient still converges to a stationary point.

\section{Frank-Wolfe with Dual LMO}
\label{sec:frank_wolfe}

In this section, we apply the Frank-Wolfe algorithm \parencite{FrankWolfe56} to maximize \eqref{eq:wasserstein_constrained_optimization_coupling}. During each iteration, FW first solves the following linear minimization problem:
\begin{align}
\label{eq:lmo}
    \hat\Pi &= \argmin_{\Pi \in \Cc} \left\langle \Pi, H \right\rangle, ~~ \mbox{ where }\\
    H &= - \nabla_{\Pi} \ell \big((\Pi^{(t)})^\top \one, y\big)
\end{align}
and $\Cc$ is the convex set represented by constraints in \eqref{eq:wasserstein_constrained_optimization_coupling}.
Then, we take a convex combination of $\Pi^{(t)}$ and $\hat\Pi$:
\begin{align*}
    \Pi^{(t + 1)} = (1 - \eta_t) \Pi^{(t)} + \eta_t \hat\Pi .
\end{align*}
Step \eqref{eq:lmo} is referred as the linear minimization oracle (LMO) in the literature, and is reduced to solving the following linear program in each iteration:
\begin{align}
\label{eq:frank_wolfe_lmo_primal}
\begin{split}
    \mini_{\Pi \geq 0} ~ & \langle \Pi, H \rangle ~
    \st ~ \Pi \one = \xv, ~ \langle \Pi, C \rangle \leq \delta.
\end{split}
\end{align}
Standard linear programming solvers do not scale well in high dimensions. Instead, we exploit the problem structure again to design a fast, specialized algorithm for \eqref{eq:frank_wolfe_lmo_primal}.

\subsection{A First Attempt: Optimizing the Dual}
Our fist attempt is to extend the idea in \S\ref{sec:dual_proj} to the linear minimization step.
We first derive an equivalent dual problem via the method of Lagrange multipliers:
\begin{restatable}{proposition}{frankdualobjective}
\label{prop:frank_dual_objective}
The dual problem of \eqref{eq:frank_wolfe_lmo_primal} is
\begin{align}
\label{eq:frank_dual_objective}
    \maxi_{\lambda \geq 0} ~ - \lambda \delta + \sum_{i = 1}^{n} x_i \min_{1 \leq j \leq n} \Big(H_{ij} + \lambda C_{ij}\Big).
\end{align}
\end{restatable}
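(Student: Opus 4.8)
The plan is to derive the stated dual directly by the method of Lagrange multipliers, dualizing only the scalar budget constraint $\langle \Pi, C\rangle \le \delta$ while keeping the cheap constraints $\Pi \ge 0$ and $\Pi\one = \xv$ inside the domain of the inner minimization. I would introduce a single multiplier $\lambda \ge 0$ for the inequality and form the partial Lagrangian $L(\Pi,\lambda) = \langle \Pi, H\rangle + \lambda(\langle \Pi, C\rangle - \delta)$. The dual function is then $g(\lambda) = \min_{\Pi \ge 0,\, \Pi\one = \xv} L(\Pi,\lambda)$, and the dual problem is $\max_{\lambda \ge 0} g(\lambda)$. Dualizing only the budget constraint is what keeps the Lagrangian a single-variable problem, matching the claimed form.

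The key step is to evaluate $g(\lambda)$ in closed form. Pulling the constant $-\lambda\delta$ out, it remains to compute $\min_{\Pi \ge 0,\, \Pi\one = \xv} \langle \Pi, H + \lambda C\rangle$. Because both the objective and the retained constraints couple the entries of $\Pi$ only within each row, this separates into $n$ independent row problems; the $i$-th minimizes $\sum_j \Pi_{ij}(H_{ij}+\lambda C_{ij})$ over the scaled simplex $\{ \Pi_{i\cdot} \ge 0 : \sum_j \Pi_{ij} = x_i \}$. A linear functional over a scaled simplex attains its minimum at a vertex, so the optimal row places all its mass $x_i$ on a column with the smallest coefficient, giving value $x_i \min_{1 \le j \le n}(H_{ij}+\lambda C_{ij})$; here $x_i \ge 0$ since $\xv \in [0,1]^n$, which is what makes the scaled simplex nonempty and the vertex argument valid. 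Summing over rows yields $g(\lambda) = -\lambda\delta + \sum_{i=1}^n x_i \min_{1\le j\le n}(H_{ij}+\lambda C_{ij})$, exactly the claimed objective.

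Finally, I would certify that this really is the dual with no duality gap, so that the dual optimum equals the primal optimum of \eqref{eq:frank_wolfe_lmo_primal}. Since \eqref{eq:frank_wolfe_lmo_primal} is a linear program, it suffices to exhibit a feasible point: the diagonal transport plan defined by $\Pi_{ii} = x_i$ and $\Pi_{ij} = 0$ for $i \neq j$ satisfies $\Pi \ge 0$ and $\Pi\one = \xv$, and has cost $\langle \Pi, C\rangle = \sum_i x_i C_{ii} = 0 < \delta$ because $C_{ii} = 0$ and $\delta > 0$. Primal feasibility together with standard LP strong duality then gives $\max_{\lambda \ge 0} g(\lambda)$ equal to the primal value.

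The derivation is largely routine; the only points demanding care are (a) justifying that the inner minimization decouples across rows and attains its optimum at a simplex vertex, which relies on the nonnegativity of both $\Pi$ and $\xv$, and (b) invoking strong duality, where one must exhibit primal feasibility, handled by the diagonal plan. I expect no genuine obstacle beyond bookkeeping; the main conceptual content is recognizing the per-row simplex structure that collapses the inner linear program into the pointwise minimum $\min_{1 \le j \le n}(H_{ij}+\lambda C_{ij})$.
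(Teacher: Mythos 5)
Your proposal matches the paper's proof essentially verbatim: both dualize only the budget constraint $\langle \Pi, C\rangle \le \delta$, keep $\Pi \ge 0$ and $\Pi\one = \xv$ in the inner minimization, and exploit row-wise separability so that each row's linear program over a scaled simplex collapses to $x_i \min_{1\le j\le n}(H_{ij}+\lambda C_{ij})$. Your additional remark certifying zero duality gap via the feasible diagonal plan (using $C_{ii}=0$) is a small, correct bonus the paper leaves implicit.
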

In addition, we provide an upper bound of the maximizer.
\begin{restatable}{proposition}{frankdualupperbound}
\label{prop:frank_dual_upper_bound}
The dual solution $\lambda^\star$ of \eqref{eq:frank_dual_objective} satisfies
\begin{align}
\label{eq:frank_dual_upper_bound}
    0 \leq \lambda^\star \leq \frac{2 \norm{\mathrm{vec}(H)}_\infty}{\min_{i \neq j} \left\{C_{ij}\right\}}.
\end{align}
\end{restatable}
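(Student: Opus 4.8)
The plan is to treat the dual objective in \eqref{eq:frank_dual_objective} as a one-dimensional concave function of $\lambda$ and to locate a threshold beyond which it is strictly decreasing; since a concave function that is decreasing on $[\Lambda,\infty)$ cannot attain its maximum to the right of $\Lambda$, this threshold is exactly the upper bound we want. Write $d(\lambda) := -\lambda\delta + \sum_{i=1}^n x_i \min_{1\le j\le n}(H_{ij}+\lambda C_{ij})$. Each map $\lambda \mapsto H_{ij}+\lambda C_{ij}$ is affine, so the inner minimum is a pointwise minimum of affine functions and hence concave; multiplying by $x_i \ge 0$ (the pixel masses are nonnegative since $\xv = \Pi\one \ge 0$), summing over $i$, and adding the linear term $-\lambda\delta$ all preserve concavity, so $d$ is concave on $[0,\infty)$. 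The lower bound $\lambda^\star \ge 0$ is immediate from the feasibility constraint $\lambda \ge 0$.

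The crux is the behavior of the inner minimization for large $\lambda$, which is governed by the standing assumption $C_{ii}=0$ and $C_{ij}>0$ for $i\ne j$. For fixed $i$, the candidate $j=i$ contributes the constant $H_{ii}$ (since $C_{ii}=0$), while every $j\ne i$ contributes $H_{ij}+\lambda C_{ij}$, which grows without bound in $\lambda$. I would show that once $\lambda$ is large enough the diagonal index $j=i$ attains the minimum in every row simultaneously. Concretely, $H_{ii}\le H_{ij}+\lambda C_{ij}$ is equivalent to $\lambda \ge (H_{ii}-H_{ij})/C_{ij}$ for $j\ne i$, and bounding $H_{ii}-H_{ij}\le |H_{ii}|+|H_{ij}|\le 2\norm{\mathrm{vec}(H)}_\infty$ together with $C_{ij}\ge \min_{i\ne j}\{C_{ij}\}$ gives the uniform threshold $\Lambda := 2\norm{\mathrm{vec}(H)}_\infty / \min_{i\ne j}\{C_{ij}\}$. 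The essential observation is that this threshold does not depend on $i$ or $j$, so for every $\lambda \ge \Lambda$ and every row $i$ we have $\min_{j}(H_{ij}+\lambda C_{ij}) = H_{ii}$.

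Substituting this into $d$ yields $d(\lambda) = -\lambda\delta + \sum_i x_i H_{ii}$ for all $\lambda \ge \Lambda$, i.e., $d$ is affine with slope $-\delta$ on $[\Lambda,\infty)$. Since $\delta = \epsilon\,\one^\top\xv > 0$, this slope is strictly negative, so $d$ is strictly decreasing past $\Lambda$; hence no point in $(\Lambda,\infty)$ can maximize $d$, and therefore $\lambda^\star \le \Lambda$, which is precisely \eqref{eq:frank_dual_upper_bound}. The only subtle steps, which I would state carefully rather than grind through, are (i) verifying that the threshold is genuinely uniform over all $(i,j)$ so that the row-minima all collapse onto the diagonal at the same $\lambda$, and (ii) invoking strict monotonicity, which requires $\delta>0$, to exclude maximizers beyond $\Lambda$ --- concavity alone would only yield non-increase. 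This mirrors the argument behind \Cref{prop:projection_dual_upper_bound}, where the same diagonal-domination phenomenon (driven by $C_{ii}=0$) forces the analogous bound.
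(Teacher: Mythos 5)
Your proof is correct and follows essentially the same route as the paper's: both arguments show that once $\lambda$ exceeds $2\norm{\mathrm{vec}(H)}_\infty/\min_{i\neq j}\{C_{ij}\}$, the diagonal entries (with $C_{ii}=0$) attain every row-minimum, so the dual objective collapses to the affine function $-\lambda\delta + \sum_i x_i H_{ii}$ with strictly negative slope $-\delta$, ruling out any maximizer beyond the threshold. Your added framing via concavity of the piecewise-linear dual is sound but not needed for the paper's slope argument, and your care about $\delta>0$ and tie-breaking at $\lambda=\Lambda$ matches implicit assumptions the paper makes.
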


Unlike dual projection, the dual objective \eqref{eq:frank_dual_objective} here is not differentiable.
In fact, it is piecewise linear.
Nevertheless, one can still solve it using derivative-free methods such as bisection method on the supergradient, or golden section search on the objective, both of which converge linearly.

However, after obtaining the dual solution, one cannot recover the primal solution easily.
Consider the following recovery rule by minimizing the Lagrangian:
\begin{align}
\label{eq:lmo_primal_recovery}
    \Pi^\star \in \argmin_{\Pi \geq 0, \Pi \one = \xv} ~ \langle \Pi, H + \lambda^\star C \rangle - \lambda^\star \delta,
\end{align}
where $\Pi^\star$ and $\lambda^\star$ are primal and dual solutions respectively.
There are two issues: (a) there might be infinitely many solutions to \eqref{eq:lmo_primal_recovery} and it is not easy to determine $\Pi^\star$ among them;
(b) even if the solution to \eqref{eq:lmo_primal_recovery} is unique, a slight numerical perturbation could change the minimizer drastically.
In practice, such instability may even result in an infeasible $\Pi$, generating \emph{invalid} adversarial examples outside the Wasserstein perturbation budget.
We direct readers to \Cref{sec:appendix_failure_lmo} for a concrete example and further discussions.

\subsection{Dual LMO via Entropic Regularization}


To address the above issues, we instead solve an entropic regularized version of \eqref{eq:frank_wolfe_lmo_primal} as an approximation:
\begin{align}
\label{eq:frank_wolfe_entropy_primal_problem}
\begin{split}
    & \mini_{\Pi \geq 0} ~~ \langle \Pi, H \rangle + \gamma \sum_{i=1}^{n}\sum_{j=1}^{n} \Pi_{ij} \log \Pi_{ij} \\
    & \sbjto ~ \Pi \one = \xv, ~ \langle \Pi, C \rangle \leq \delta,
\end{split}
\end{align}

where $\gamma$ is a regularization parameter.
The new objective in \eqref{eq:frank_wolfe_entropy_primal_problem} is strongly convex after the entropic regularization.
For any dual variable, the corresponding primal variable minimizing the Lagrangian is always unique, which allows us to recover the primal solution from dual easily.
\begin{restatable}{proposition}{frankwolfeentropydualproblem}
\label{prop:frank_wolfe_entropy_dual_problem}
The dual problem of \eqref{eq:frank_wolfe_entropy_primal_problem} is
\begin{align}
\label{eq:frank_wolfe_entropy_dual_problem}
    \maxi_{\lambda \geq 0} & - \lambda \delta + \gamma \sum_{i : x_i > 0} x_i \log x_i \\
                           & - \gamma \sum_{i = 1}^{n} x_i \log \sum_{j = 1}^{n} \exp\left(- \frac{H_{ij} + \lambda C_{ij}}{\gamma}\right). \nonumber
\end{align}
\end{restatable}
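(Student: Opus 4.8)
The plan is to form the \emph{partial} Lagrangian that dualizes only the budget constraint $\langle \Pi, C\rangle \le \delta$ in \eqref{eq:frank_wolfe_entropy_primal_problem}, while keeping the row-sum constraint $\Pi\one = \xv$ and the nonnegativity $\Pi \ge 0$ inside the inner minimization. Introducing a single multiplier $\lambda \ge 0$, the dual function becomes
\begin{align*}
    g(\lambda) = -\lambda\delta + \min_{\Pi \ge 0,\, \Pi\one = \xv} \Big( \langle \Pi, H + \lambda C\rangle + \gamma \sum_{ij} \Pi_{ij}\log\Pi_{ij}\Big),
\end{align*}
and the dual problem is $\max_{\lambda \ge 0} g(\lambda)$. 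Since the primal objective is strongly convex, the constraints are affine, and a strictly feasible $\Pi$ exists whenever $\delta > 0$ (place most of each row's mass on the zero-cost diagonal and spread a vanishing amount off-diagonal), Slater's condition holds, so strong duality is in force; moreover the entropy term acts as a barrier forcing $\Pi_{ij} > 0$, so the nonnegativity constraints are never active and need no separate multiplier.

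The key structural observation is that the inner minimization \emph{separates across rows}. Writing $a_{ij} = H_{ij} + \lambda C_{ij}$, row $i$ contributes the independent subproblem of minimizing $\sum_j \Pi_{ij} a_{ij} + \gamma \sum_j \Pi_{ij}\log\Pi_{ij}$ subject to $\sum_j \Pi_{ij} = x_i$ and $\Pi_{ij} \ge 0$. First I would solve this row subproblem in closed form: introducing a multiplier $\mu_i$ for the equality constraint and setting the gradient to zero yields the Gibbs/softmax form
\begin{align*}
    \Pi_{ij}^\star = x_i\, \frac{\exp(-a_{ij}/\gamma)}{\sum_{j'} \exp(-a_{ij'}/\gamma)},
\end{align*}
i.e.\ a softmax over columns rescaled by the prescribed row mass $x_i$, which is the unique global minimizer by strict convexity of the negative entropy.

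The main step is to substitute $\Pi^\star$ back and simplify. Writing $Z_i = \sum_j \exp(-a_{ij}/\gamma)$ one has $\gamma\log\Pi_{ij}^\star = \gamma\log x_i - a_{ij} - \gamma\log Z_i$, so that in the per-row objective the linear term $\Pi_{ij}^\star a_{ij}$ \emph{exactly cancels} against the $-a_{ij}$ carried inside $\gamma\Pi_{ij}^\star\log\Pi_{ij}^\star$. What survives is $\sum_j \Pi_{ij}^\star(\gamma\log x_i - \gamma\log Z_i) = \gamma x_i\log x_i - \gamma x_i\log Z_i$, using $\sum_j \Pi_{ij}^\star = x_i$. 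Summing over $i$ and reinstating $-\lambda\delta$ gives exactly the log-sum-exp dual \eqref{eq:frank_wolfe_entropy_dual_problem}.

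I expect the only genuine subtlety to be the bookkeeping for the $x_i = 0$ rows together with the cancellation: when $x_i = 0$ the whole row is forced to zero and contributes nothing, which is why the pure entropy term is restricted to $\{i : x_i > 0\}$ whereas the log-sum-exp term ranges over all $i$ (the prefactor $x_i$ annihilates the vanishing rows). Adopting the standard convention $0\log 0 = 0$ keeps both boundary terms consistent, and no further delicacy arises since strict convexity already guarantees each row minimizer is global and the overall problem attains strong duality.
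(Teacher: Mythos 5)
Your proposal is correct and follows essentially the same route as the paper's proof: partial Lagrangian over the budget constraint, row-wise separation of the inner minimization, closed-form Gibbs/softmax row minimizer, and back-substitution with the $x_i\log x_i$ and $\log\sum_j\exp(\cdot)$ terms emerging exactly as in \eqref{eq:frank_wolfe_entropy_dual_problem}. The only cosmetic difference is that you derive the row minimizer via KKT stationarity (with the entropy-as-barrier justification for interiority), whereas the paper obtains it via the nonnegativity of a KL divergence, which yields the minimizer and the minimum value in one step; your handling of the $x_i=0$ rows and the Slater remark are fine, the latter being unnecessary since the proposition only asserts the form of the dual.
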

The third term in \eqref{eq:frank_wolfe_entropy_dual_problem} is essentially a softmin operator along each row of $H + \lambda C$, by observing that
\begin{align*}
    \lim_{\gamma \to 0} - \gamma \log \sum_{j = 1}^{n} {\textstyle \exp\left(- \frac{H_{ij} + \lambda C_{ij}}{\gamma}\right)} \!=\! \min_{1 \leq j \leq n}{\textstyle \Big(H_{ij} + \lambda C_{ij}\Big)}.
\end{align*}
Thus, from the point view of dual, \eqref{eq:frank_wolfe_entropy_dual_problem} is a smooth approximation of \eqref{eq:frank_dual_objective}, recovering \eqref{eq:frank_dual_objective} precisely as $\gamma \to 0$. In our implementation, we use the usual log-sum-exp trick to enhance the numerical stability of the softmin operator.

With entropic regularization, we have the following recovery rule for primal solution and upper bound on dual solution.
\begin{restatable}{proposition}{frankwolfeentropydualrecover}
\label{prop:frank_wolfe_entropy_dual_recover}
The primal solution $\Pi^\star$ and the dual solution $\lambda^\star$ satisfy
\begin{align}
\label{eq:frank_wolfe_entropy_dual_recover}
    \Pi_{ij}^\star = x_i \cdot \frac{\exp\left(- \frac{H_{ij} + \lambda^\star C_{ij}}{\gamma}\right)}{\sum_{j = 1}^{n} \exp\left(- \frac{H_{ij} + \lambda^\star C_{ij}}{\gamma}\right)}. 
\end{align}
\end{restatable}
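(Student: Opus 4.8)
The plan is to exploit the fact that the recovery formula \eqref{eq:frank_wolfe_entropy_dual_recover} is precisely the closed-form inner minimizer that already surfaces when deriving the dual objective in \Cref{prop:frank_wolfe_entropy_dual_problem}, and then to invoke strong duality to identify this minimizer with the true primal optimum at $\lambda = \lambda^\star$. First I would dualize \emph{only} the single inequality constraint $\langle \Pi, C \rangle \leq \delta$ with multiplier $\lambda \geq 0$, keeping $\Pi \geq 0$ and $\Pi \one = \xv$ as explicit constraints, forming the partial Lagrangian
\begin{align*}
    L(\Pi, \lambda) = \langle \Pi, H + \lambda C \rangle + \gamma \sum_{i, j} \Pi_{ij} \log \Pi_{ij} - \lambda \delta.
\end{align*}

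Next I would minimize $L(\cdot, \lambda)$ over $\{\Pi \geq 0, \Pi \one = \xv\}$ for fixed $\lambda$. The key structural observation is that this minimization decouples across rows: the $i$-th row is an entropy-regularized linear objective over the scaled simplex $\{\Pi_{ij} \geq 0, \sum_j \Pi_{ij} = x_i\}$. Introducing a single multiplier $\nu_i$ for the row-sum constraint and setting $\partial L / \partial \Pi_{ij} = 0$ yields $H_{ij} + \lambda C_{ij} + \gamma(\log \Pi_{ij} + 1) + \nu_i = 0$, hence a Gibbs-form solution $\Pi_{ij} \propto \exp(-(H_{ij} + \lambda C_{ij})/\gamma)$. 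Eliminating $\nu_i$ through the constraint $\sum_j \Pi_{ij} = x_i$ normalizes each row and produces exactly \eqref{eq:frank_wolfe_entropy_dual_recover}. A helpful simplification here is that the entropic term acts as a logarithmic barrier forcing $\Pi_{ij} > 0$ at the optimum, so the nonnegativity constraints are never active and need no separate multipliers; degenerate rows with $x_i = 0$ are handled consistently, since the formula then returns an all-zero row.

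Finally I would argue that evaluating this inner minimizer at the dual optimum $\lambda = \lambda^\star$ recovers the primal optimum $\Pi^\star$. This step relies on strong duality: the regularized objective is strongly convex and Slater's condition holds, witnessed by the strictly feasible point $\Pi = \mathrm{diag}(\xv)$, for which $\langle \Pi, C \rangle = \sum_i x_i C_{ii} = 0 < \delta$ because $C_{ii} = 0$. Strong convexity guarantees a unique primal minimizer, and the saddle-point characterization of the Lagrangian then forces the (unique) minimizer of $L(\cdot, \lambda^\star)$ to coincide with $\Pi^\star$.

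The step I expect to be the main obstacle is not the closed-form row computation, which is routine, but rigorously tying the Lagrangian minimizer at $\lambda^\star$ to the genuine primal solution, i.e., justifying that no duality gap exists and that complementary slackness is automatically respected. Once Slater's condition is verified this follows from standard convex duality, but the nonnegativity constraint $\Pi \geq 0$ and the possibility of zero-mass rows ($x_i = 0$) require a little care to confirm that the formula remains valid and that the implicit assumption $\Pi_{ij} > 0$ used in the first-order conditions does not break down.
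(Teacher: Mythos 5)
Your proposal is correct and takes essentially the same route as the paper: the paper likewise dualizes only the cost constraint $\langle \Pi, C\rangle \leq \delta$, obtains the identical row-wise Gibbs/softmin minimizer in the proof of \Cref{prop:frank_wolfe_entropy_dual_problem} (there via a KL-divergence lower-bound argument rather than your first-order stationarity computation, an equivalent calculation), and then states the identification at $\lambda^\star$ as ``a direct implication of KKT conditions.'' Your explicit verification of strong duality---the strictly feasible point $\mathrm{diag}(\xv)$ using $C_{ii}=0$, uniqueness from strong convexity, and the saddle-point identification---merely spells out what the paper leaves implicit, and your handling of the $x_i=0$ rows matches the paper's ``without loss of generality'' remark.
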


\begin{restatable}{proposition}{frankentropydualupperbound}
\label{prop:frank_entropy_dual_upper_bound}
The dual solution $\lambda^\star$ of \eqref{eq:frank_wolfe_entropy_dual_problem} satisfies
\begin{align}
\label{eq:frank_wolfe_entropy_dual_upper_bound}
    0 \leq \lambda^\star \leq \frac{ \left[2 \norm{\mathrm{vec}(H)}_\infty + \gamma \log \left(\frac{1}{\delta}\xv^\top C \one \right)\right]_+}{\min_{i \neq j}\left\{ C_{ij} \right\}}.
\end{align}
\end{restatable}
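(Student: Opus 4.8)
The plan is to exploit concavity of the dual objective in \eqref{eq:frank_wolfe_entropy_dual_problem} and reduce the claim to a one-sided derivative test. Write $\phi(\lambda)$ for the objective in \eqref{eq:frank_wolfe_entropy_dual_problem}. First I would record that $\phi$ is concave on $\lambda \ge 0$: for each $i$ the map $\lambda \mapsto \log\sum_j \exp(-(H_{ij}+\lambda C_{ij})/\gamma)$ is a log-sum-exp of affine functions, hence convex, so $-\gamma x_i$ times it is concave (recall $\gamma > 0$ and $x_i \ge 0$), while the remaining terms are affine in $\lambda$. Consequently $\phi'$ is non-increasing, and to prove $\lambda^\star \le u$ for a candidate $u \ge 0$ it suffices to show $\phi'(u) \le 0$: beyond such a $u$ the objective can only decrease, so no maximizer lies to its right. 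When the bracket $[\cdot]_+$ in \eqref{eq:frank_wolfe_entropy_dual_upper_bound} collapses to $0$, the same computation instead gives $\phi'(0)\le 0$, forcing $\lambda^\star = 0$ since $\phi$ is then non-increasing on $[0,\infty)$; this is exactly what the positive part encodes.

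Next I would compute $\phi'(\lambda)$ explicitly. Differentiating the log-sum-exp term and writing $w_{ij}(\lambda) = \exp(-(H_{ij}+\lambda C_{ij})/\gamma)/\sum_{k}\exp(-(H_{ik}+\lambda C_{ik})/\gamma)$ for the induced row-wise softmin weights, a direct calculation gives $\phi'(\lambda) = \sum_i x_i \sum_j C_{ij} w_{ij}(\lambda) - \delta = \langle \Pi(\lambda), C\rangle - \delta$, where $\Pi_{ij}(\lambda) = x_i\, w_{ij}(\lambda)$ is precisely the primal point recovered in \Cref{prop:frank_wolfe_entropy_dual_recover}. Thus the derivative test $\phi'(u)\le 0$ is the intuitive statement that the transport cost of the recovered plan has dropped to the budget $\delta$ by the time $\lambda$ reaches $u$.

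The technical core is then a uniform upper bound on $\langle \Pi(\lambda), C\rangle$ that decays in $\lambda$. For each row I would lower-bound the partition function by its diagonal term alone, using $C_{ii}=0$: $\sum_j \exp(-(H_{ij}+\lambda C_{ij})/\gamma) \ge \exp(-H_{ii}/\gamma) \ge \exp(-\norm{\mathrm{vec}(H)}_\infty/\gamma)$. For the numerator $\sum_{j\ne i} C_{ij}\exp(-(H_{ij}+\lambda C_{ij})/\gamma)$, I would bound $-H_{ij}\le \norm{\mathrm{vec}(H)}_\infty$ and $C_{ij}\ge \min_{i\ne j}C_{ij}$ inside the exponent (legitimate since $\lambda \ge 0$) while keeping the leading factor $C_{ij}$, so that the remaining row sum of $C_{ij}$ equals $(C\one)_i$. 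Combining the two bounds collapses the ratio to $\exp((2\norm{\mathrm{vec}(H)}_\infty - \lambda\min_{i\ne j}C_{ij})/\gamma)\,(C\one)_i$, and weighting by $x_i$ yields $\langle \Pi(\lambda), C\rangle \le \exp\big((2\norm{\mathrm{vec}(H)}_\infty - \lambda \min_{i\ne j}C_{ij})/\gamma\big)\, \xv^\top C\one$. Setting this $\le \delta$ and solving for $\lambda$ gives exactly the threshold in \eqref{eq:frank_wolfe_entropy_dual_upper_bound}, and concavity finishes the argument.

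I expect the main obstacle to be the row-wise estimate in the last paragraph: one must isolate the zero-cost diagonal entry to control the partition function from below and simultaneously keep the factors $C_{ij}$ outside the exponential, so the bound telescopes into $\xv^\top C\one = \sum_i x_i (C\one)_i$ rather than into a crude $n$-dependent constant. Some care is also needed to ensure $\xv^\top C\one > 0$ so the logarithm is well defined (guaranteed whenever $\xv \neq 0$, since $C$ has positive off-diagonal entries), and to match the boundary case cleanly against the $[\cdot]_+$ in the statement.
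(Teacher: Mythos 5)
Your proof is correct and takes essentially the same route as the paper's: both bound the derivative $g'(\lambda)=\langle \Pi(\lambda),C\rangle-\delta$ of the smooth dual, lower-bound each row's partition function by its diagonal term (using $C_{ii}=0$), keep the factor $C_{ij}$ outside the exponential so the weighted row sums collapse to $\xv^\top C \one$, and conclude the derivative is nonpositive beyond the stated threshold. The only cosmetic differences are that the paper argues by contradiction (strictly negative derivative at any $\lambda$ exceeding the bound, hence suboptimality) rather than invoking concavity explicitly, and shifts the exponents by $H_{ii}$ via translation invariance of the softmin instead of splitting $2\norm{\mathrm{vec}(H)}_\infty$ into two separate $\norm{\mathrm{vec}(H)}_\infty$ estimates.
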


Note that the recovery rule \eqref{eq:frank_wolfe_entropy_dual_recover} is essentially applying the softmin activation along each row of $H + \lambda^\star C$.
With the upper bound \eqref{eq:frank_wolfe_entropy_dual_upper_bound}, we can apply the same technique in \S\ref{sec:dual_proj} to solve the dual.
In particular, the bisection method on the dual objective results in an algorithm almost identical to \Cref{alg:dual_projection} with only two exceptions:
(a) we replace the upper bound \eqref{eq:projection_dual_upper_bound} with \eqref{eq:frank_wolfe_entropy_dual_upper_bound} and
(b) we replace \Cref{line:dual_projection_recover} with the primal recover rule \eqref{eq:frank_wolfe_entropy_dual_recover}.

Unlike dual projection, the second order derivative of \eqref{eq:frank_wolfe_entropy_dual_problem} can be computed in a closed form, which enables the usage of second order methods, \eg, Newton's method, for further acceleration.
However, we observed that Newton's method fails to converge in some cases.
The smooth dual \eqref{eq:frank_wolfe_entropy_dual_problem}, as an approximation of \eqref{eq:frank_dual_objective}, still behaves similar to a piecewise linear function after the regularization.
Pure Newton's method might easily overshoot in this case.
Thus, we still choose bisection method due to its stability and relatively fast convergence rate.
In other applications where the convergence speed of the dual is a concern, it is possible to consider second order methods for acceleration.

Although both projected Sinkhorn \parencite{WongSK19} and dual LMO use entropic approximation, we emphasize that there are two key differences.
First, the entropic regularization in dual LMO does not affect the convergence rate.
In contrast, the convergence rate of projected Sinkhorn highly depends on $\gamma$.
In particular, small $\gamma$ often slows down the convergence rate empirically.
Second, unlike the entropic regularized quadratic program used in projected Sinkhorn, entropic regularized linear program like \eqref{eq:frank_wolfe_entropy_primal_problem} has been well studied.
Applications in optimal transport \parencite{Cuturi13} have demonstrated its empirical success; theoretical guarantees on the exponential decay of approximation error have been established \parencite{CominettiMartin94,Weed18}.

For a thorough discussion on the convergence properties of FW on convex or nonconvex functions and beyond, we direct readers to \citep{YuZS17}.

\vspace{-.4em}

\section{Practical Considerations}
In this section, we comment on some practical considerations for implementations of algorithms in \S\ref{sec:dual_proj} and \S\ref{sec:frank_wolfe}.

\subsection{Acceleration via Exploiting Sparsity}
\label{sec:local_transportation}

Both algorithms in \S\ref{sec:dual_proj} and \S\ref{sec:frank_wolfe} require computation on a full transportation matrix $\Pi \in \Rb^{n \times n}$, which is computationally expensive and memory consuming, especially in high dimensions.
For example, for ImageNet where $n = 224 \times 224 \times 3$, the transportation matrix $\Pi$ has billions of variables.
The memory storage for it
(using single precision floating numbers)
exceeds the limit of most GPUs.

To accelerate computation and reduce memory usage, we enforce a local transportation constraint to impose a \emph{structured sparsity} in the transportation matrix \parencite{WongSK19}.
Specifically, we only allow moving pixels in a $k \times k$ neighborhood (see \Cref{fig:local_transportation}).
For images with multiple channels, we only allow transportation within each channel.
Adversarial examples generated with these restrictions are still valid under the original Wasserstein threat models.

With local transportation, each pixel can be only transported to at most $k^2$ possible locations, thus $\Pi$ is a highly sparse matrix with at most $k^2$ nonzero entries in each row.
Such sparsity reduces the per iteration cost of dual projection from $O(n^2 \log n)$ to $O(n k^2 \log k)$, and reduces that of dual LMO from $O(n^2)$ to $O(n k^2)$, both of which are linear \wrt $n$, treating $k$ (typically much smaller than $n$) as a constant.

Operations on sparse matrices in general are not easy to parallelize on GPUs.
Nevertheless, for dual projection and dual LMO, we do have simple customized strategies to support the sparse operations on GPUs by exploring the sparsity pattern in $\Pi$ (see \Cref{sec:sparse_matrix_operation} for a discussion).

\begin{figure}[t]
\centering
    \includegraphics[width=0.8\linewidth]{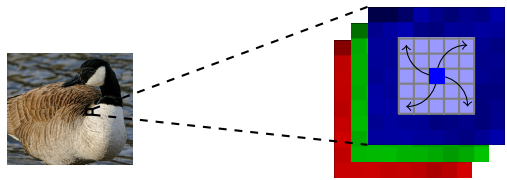}
\caption{An illustration of $5 \times 5$ local transportation.}
\label{fig:local_transportation}
\end{figure}

\subsection{Gradient Normalization}

In both PGD and FW, we normalize the gradient in each iteration by its largest absolute value of entries.
For PGD, gradient normalization is a standard practice, yielding consistent scale of gradient and easing step size tuning.
For FW, normalization in the linear minimization step \eqref{eq:frank_wolfe_lmo_primal} does not change the minimizer, but it does affect the scale of the entropic regularization $\gamma$ in \eqref{eq:frank_wolfe_entropy_primal_problem}.
In this case, normalization keeps the scale of entropic regularization consistent.
In addition, gradient normalization leads to more consistent upper bounds on the dual solutions in \eqref{eq:projection_dual_upper_bound} and \eqref{eq:frank_wolfe_entropy_dual_upper_bound}.

\subsection{Hypercube Constraint in Image Domain}
\label{sec:hypercube}
For image domain adversarial examples, there is an additional hypercube constraint, \eg, $\xv_{adv} \in [0, 1]^n$ if pixels are represented by real numbers in $[0, 1]$.
In practice, we observe that solving problem \eqref{eq:wasserstein_constrained_optimization_coupling} often generates adversarial examples that violate the hypercube constraint, \eg, images with pixel values exceeding $1$.
Although simply clipping pixels can enforce the hypercube constraint, certain amount of pixel mass is lost during clipping, leading to undefined Wasserstein distance.
This is in sharp contrast to $\ell_p$ threat models, where clipping is the typical practice that still retains validness of generated adversarial examples. 

To address this issue, we develop another specialized quadratic programming solver to project a transportation matrix $\Pi$ to the intersection of both constraints.
To the best of our knowledge, the hypercube constraint has not been addressed in previous study of Wasserstein adversarial attacks.
For instance, \textcite{WongSK19} in their implementation simply applied clipping regardless.
This new algorithm, however, is not as efficient as dual projection nor dual LMO.
Thus, we recommend using it as a post-processing procedure on $\Pi^\star$ at the very end
(see \Cref{sec:appendix_post_processing}).
Adversarial attacks tend to be weaker after the post-processing.
However, it ensures that the generated adversarial images satisfy both the  Wasserstein constraint and the hypercube constraint simultaneously hence are genuinely valid.

\section{Experiments}

\label{sec:exp}


\begin{figure}[t]
\centering
\begin{subfigure}{0.48\linewidth}
    \includegraphics[width=\linewidth]{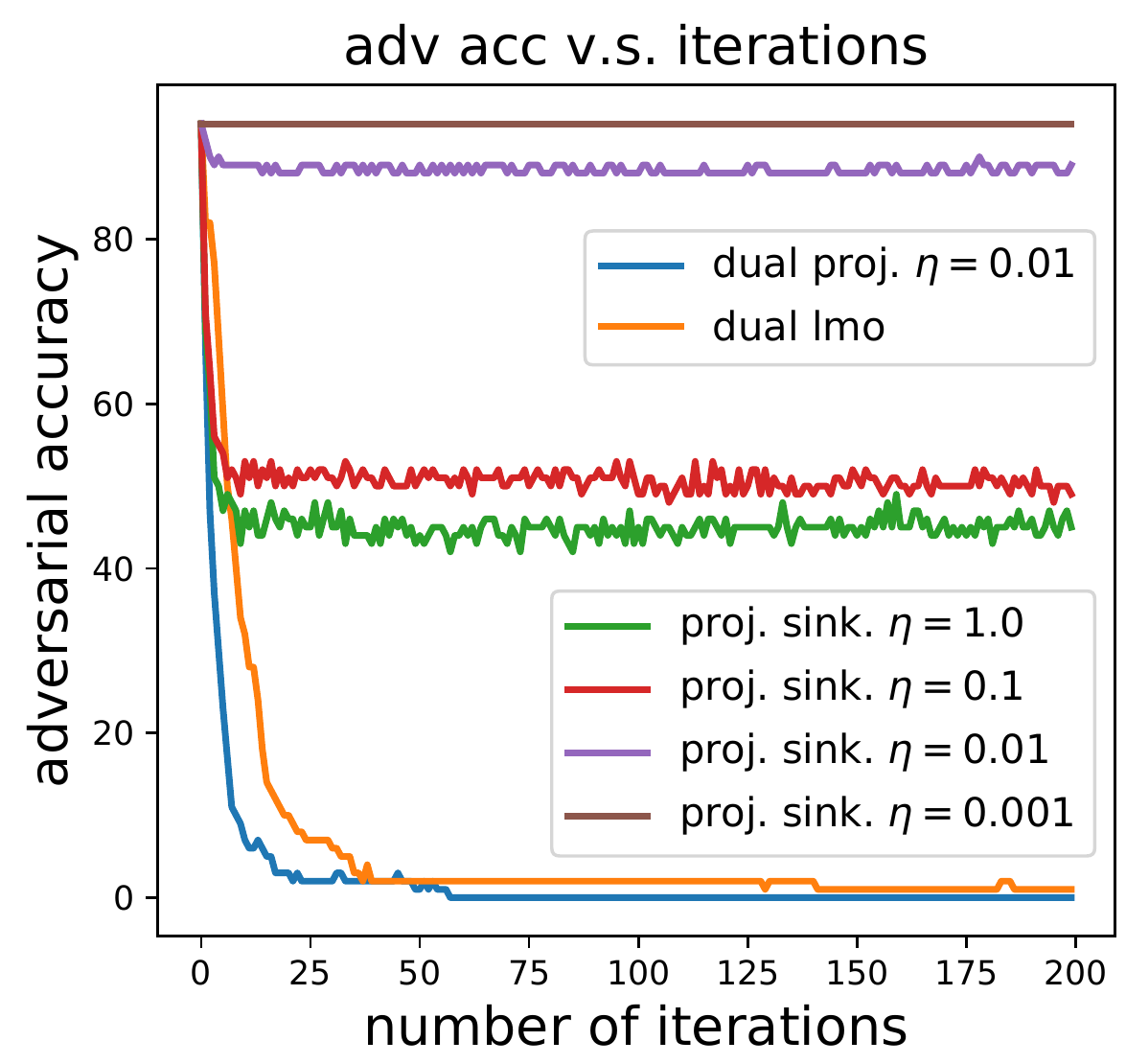}
\caption{CIFAR-10}
\end{subfigure}
\begin{subfigure}{0.48\linewidth}
    \includegraphics[width=\linewidth]{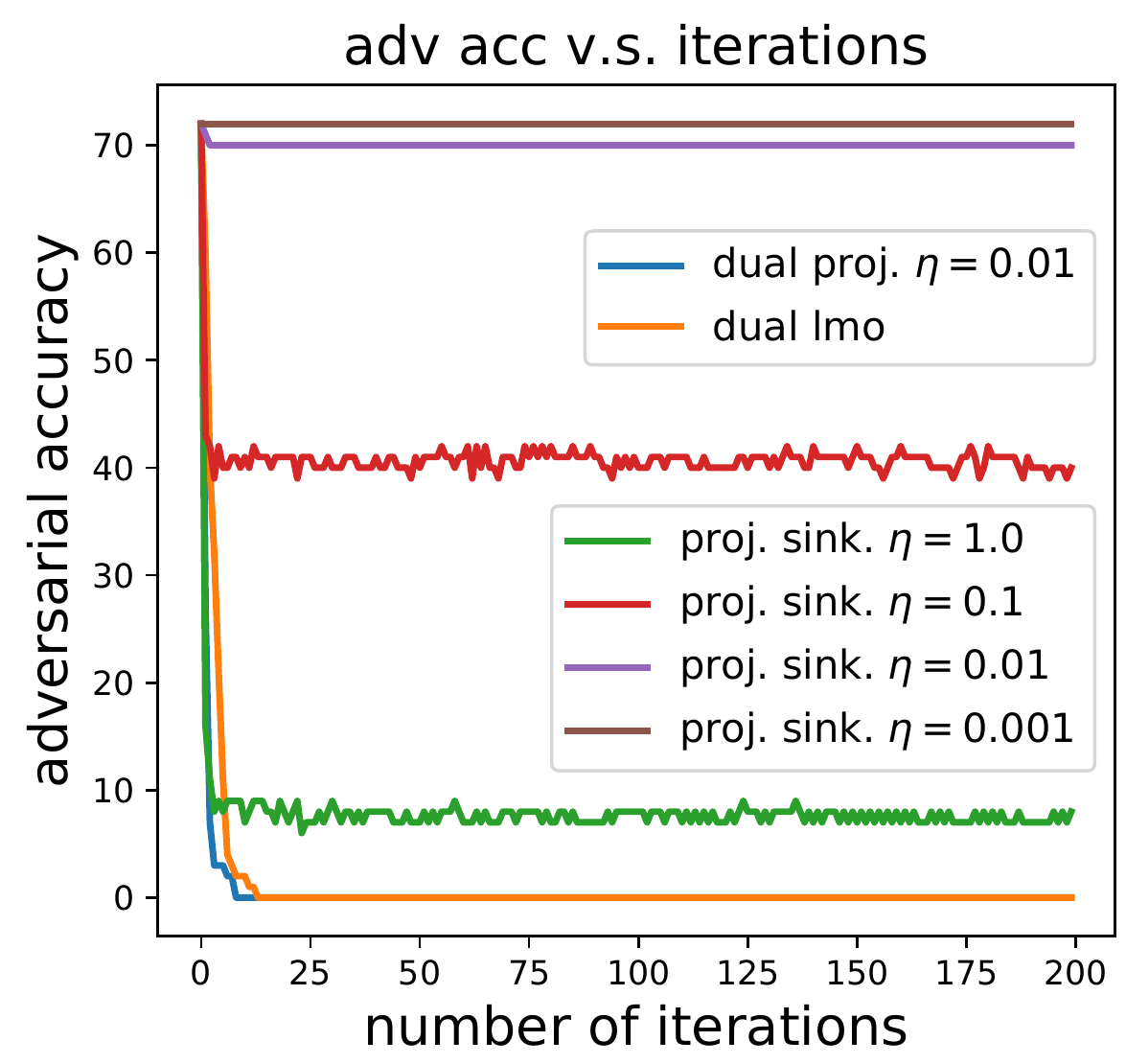}
\caption{ImageNet}
\end{subfigure}
\caption{Adversarial accuracy of models \wrt different iterations of attacks using $\epsilon = 0.005$.
Projected Sinkhorn uses $\gamma = 5 \cdot 10^{-5}$ on CIFAR-10 and $5 \cdot 10^{-6}$ on ImageNet. Dual LMO uses $\gamma = 10^{-3}$ and decay schedule $\frac{2}{t + 1}$.}
\label{fig:cifar_imagenet_converge}
\end{figure}

\textbf{Datasets and models}
We conduct experiments on MNIST \parencite{LeCun98}, CIFAR-10 \parencite{Krizhevsky09} and ImageNet \parencite{DengDSLLL09}.
On MNIST and CIFAR-10, we attack two deep networks used by \citet{WongSK19}.
On ImageNet, we attack a 50-layer residual network \citep{HeZRS16}.
ImageNet experiments are run on the first 100 samples in the validation set.
See model details in \cref{sec:appendix_exp_setting}.

\textbf{Choice of cost}
Throughout the experiment, the cost matrix $C$ is defined as
$C_{(i_1, j_1) (i_2, j_2)} = \sqrt{(i_1 - i_2)^2 + (j_1 - j_2)^2}$,
namely, the Euclidean distance between pixel indices.
We use $5 \times 5$ local transportation plan (see \S\ref{sec:local_transportation}) to accelerate computation of all algorithms.

\textbf{Choice of $\gamma$}
We follow all parameter settings reported by \textcite{WongSK19} for projected Sinkhorn, except that we try a few more $\gamma$. For dual LMO, we use a fixed $\gamma = 10^{-3}$, which we find to work well across different datasets.

\textbf{Optimization parameters}
Stopping criteria of projected Sinkhorn, dual projection and dual LMO, as well as the choice of step sizes of PGD, are deferred to \Cref{sec:appendix_exp_setting}.
FW uses a fixed decay schedule $\eta_t = \frac{2}{t + 1}$. Step sizes of PGD are tuned in $\left\{1, 10^{-1}, 10^{-2}, 10^{-3}\right\}$.
Some experiments for different step sizes are presented in \S\ref{sec:convergence_speed_outer_maximization}.

\subsection{Convergence Speed of Outer Maximization}
\label{sec:convergence_speed_outer_maximization}

Our method depends on a different but equivalent formulation \eqref{eq:wasserstein_constrained_optimization_coupling} that simplifies the constraint. It is reasonble to ask: could the reformulation make the outer maximization in PGD and FW harder? Intuitively, it does not, since the formulation simply embeds a linear transformation before the input layer (summation over columns of $\Pi$). To verify this, we plot adversarial accuracy of models \wrt number of iterations for different attack algorithms in \Cref{fig:cifar_imagenet_converge} to compare their convergence rate of outer maximization.
More thorough results are deferred to \Cref{sec:appendix_loss_acc_vs_iter}.
\footnote{The numbers in  \Cref{fig:cifar_imagenet_converge} are meant for comparison of convergence speed of Wasserstein constrained optimization problem. Thus they are shown without the post-processing algorithm discussed in \S\ref{sec:hypercube} and may differ slightly from those in \Cref{tb:acc_iter_time}.}

We observe that FW with the default decay schedule converges very fast, especially at the initial stage.
Meanwhile, PGD with dual projection converges slightly faster than FW with carefully tuned step sizes.
In contrast, PGD with projected Sinkhorn barely decreases the accuracy when using small step sizes.
The output of projected Sinkhorn is only a feasible point in the Wasserstein ball, rather than an accurate projection, due to the crude approximation.
If using small step sizes, projected Sinkhorn brings the iterates of PGD closer to the center of Wasserstein ball in every iteration.
Thus, the iterates always stay around the center of Wasserstein ball during the optimization, hence cannot decrease the accuracy.
To make progress in optimization, it is required to use aggressively large step sizes (\eg $\eta = 1.0$ and $\eta = 0.1$).

\subsection{Attack Strength and Dual Convergence Speed}

\newcommand{\midsepremove}{\aboverulesep = 0.36mm \belowrulesep = 0.6mm}

\newcommand{\midsepdefault}{\aboverulesep = 0.605mm \belowrulesep = 0.984mm}

\midsepremove

\begin{table*}[t]
\centering
\caption{
Comparison of adversarial accuracy and average number of dual iterations. $\gamma = \frac{1}{1000}$ on MNIST and $\gamma = \frac{1}{3000}$ on CIFAR-10 are the parameters used by \textcite{WongSK19}.
``$-$'' indicates numerical issues during computation.
}
\label{tb:acc_iter_time}

\begin{tabular}{c l r r r r r r r r r r}
\toprule
& \multicolumn{1}{c}{\multirow{2}{*}{method}} & \multicolumn{2}{c}{$\epsilon = 0.1$} & \multicolumn{2}{c}{$\epsilon = 0.2$} & \multicolumn{2}{c}{$\epsilon = 0.3$} & \multicolumn{2}{c}{$\epsilon = 0.4$} & \multicolumn{2}{c}{$\epsilon = 0.5$} \\
& & \multicolumn{1}{c}{acc} & \multicolumn{1}{c}{iter} & \multicolumn{1}{c}{acc} & \multicolumn{1}{c}{iter} & \multicolumn{1}{c}{acc} & \multicolumn{1}{c}{iter} & \multicolumn{1}{c}{acc} & \multicolumn{1}{c}{iter} & \multicolumn{1}{c}{acc} & \multicolumn{1}{c}{iter} \\
\cmidrule{2-12}
\multirow{7}{*}{\rotatebox[origin=c]{90}{MNIST}}
& PGD + Proj. Sink. ($\gamma=1 /  1000$)  & $96.5$ & $ 92$ &  $91.2$ & $ 88$ &  $78.0$ & $ 85$ &  $59.1$ & $ 82$ &  $40.1$ & $ 80$ \\
& PGD + Proj. Sink. ($\gamma=1 /  1500$)  & $95.2$ & $110$ &  $82.3$ & $116$ &  $58.2$ & $112$ &  $-$    & $-$   &  $-$    & $-$   \\
& PGD + Proj. Sink. ($\gamma=1 /  2000$)  & $-$    & $-$     & $-$    & $-$     & $-$    & $-$     & $-$    & $-$     & $-$    & $-$     \\
& PGD + Dual Proj.                        & $63.4$ & $ 15$ &  $13.3$ & $ 15$ &  $ 1.4$ & $ 15$ &  $ 0.1$ & $ 15$ &  $ 0.0$ & $ 15$ \\
& FW ~ + Dual LMO ($\gamma=10^{-3}$)      & $67.5$ & $ 15$ &  $16.9$ & $ 15$ &  $ 2.2$ & $ 15$ &  $ 0.4$ & $ 15$ &  $ 0.1$ & $ 15$ \\
\cmidrule{2-12}
& PGD + Dual Proj. ~(w/o post-processing)  & $42.6$ & $ 15$ &  $ 4.2$ & $ 15$ &  $ 0.4$ & $ 15$ &  $ 0.0$ & $ 15$ &  $ 0.0$ & $ 15$ \\
& FW ~ + Dual LMO (w/o post-processing)     & $48.3$ & $ 15$ &  $ 6.7$ & $ 15$ &  $ 1.0$ & $ 15$ &  $ 0.3$ & $ 15$ &  $ 0.1$ & $ 15$ \\
\midrule

& \multicolumn{1}{c}{\multirow{2}{*}{method}} & \multicolumn{2}{c}{$\epsilon = 0.001$} & \multicolumn{2}{c}{$\epsilon = 0.002$} & \multicolumn{2}{c}{$\epsilon = 0.003$} & \multicolumn{2}{c}{$\epsilon = 0.004$} & \multicolumn{2}{c}{$\epsilon = 0.005$} \\
& & \multicolumn{1}{c}{acc} & \multicolumn{1}{c}{iter} & \multicolumn{1}{c}{acc} & \multicolumn{1}{c}{iter} & \multicolumn{1}{c}{acc} & \multicolumn{1}{c}{iter} & \multicolumn{1}{c}{acc} & \multicolumn{1}{c}{iter} & \multicolumn{1}{c}{acc} & \multicolumn{1}{c}{iter} \\
\cmidrule{2-12}
\multirow{7}{*}{\rotatebox[origin=c]{90}{CIFAR-10}}
& PGD + Proj. Sink. ($\gamma=1 /  3000$)  & $93.0$ & $ 33$ &  $91.3$ & $ 33$ &  $89.5$ & $ 33$ &  $87.6$ & $ 33$ &  $85.7$ & $ 33$ \\
& PGD + Proj. Sink. ($\gamma=1 / 10000$)  & $89.9$ & $ 79$ &  $84.5$ & $ 79$ &  $78.3$ & $ 79$ &  $71.9$ & $ 79$ &  $65.6$ & $ 79$ \\
& PGD + Proj. Sink. ($\gamma=1 / 20000$)  & $-$    & $-$   & $-$     & $-$   & $-$     & $-$   & $-$     & $-$   & $-$     & $-$   \\
& PGD + Dual Proj.                        & $30.3$ & $ 15$ &  $10.5$ & $ 15$ &  $ 5.6$ & $ 15$ &  $ 4.0$ & $ 15$ &  $ 3.4$ & $ 15$ \\
& FW ~ + Dual LMO  ($\gamma=10^{-3}$)     & $33.5$ & $ 15$ &  $13.6$ & $ 15$ &  $ 7.2$ & $ 15$ &  $ 4.7$ & $ 15$ &  $ 3.7$ & $ 15$ \\
\cmidrule{2-12}
& PGD + Dual Proj. ~(w/o post-processing)  & $25.9$ & $ 15$ &  $ 6.0$ & $ 15$ &  $ 1.7$ & $ 15$ &  $ 0.5$ & $ 15$ &  $ 0.2$ & $ 15$ \\
& FW ~ + Dual LMO (w/o post-processing)     & $29.6$ & $ 15$ &  $ 9.1$ & $ 15$ &  $ 3.2$ & $ 15$ &  $ 1.1$ & $ 15$ &  $ 0.6$ & $ 15$ \\
\midrule

& \multicolumn{1}{c}{\multirow{2}{*}{method}} & \multicolumn{2}{c}{$\epsilon = 0.001$} & \multicolumn{2}{c}{$\epsilon = 0.002$} & \multicolumn{2}{c}{$\epsilon = 0.003$} & \multicolumn{2}{c}{$\epsilon = 0.004$} & \multicolumn{2}{c}{$\epsilon = 0.005$} \\
& & \multicolumn{1}{c}{acc} & \multicolumn{1}{c}{iter} & \multicolumn{1}{c}{acc} & \multicolumn{1}{c}{iter} & \multicolumn{1}{c}{acc} & \multicolumn{1}{c}{iter} & \multicolumn{1}{c}{acc} & \multicolumn{1}{c}{iter} & \multicolumn{1}{c}{acc} & \multicolumn{1}{c}{iter} \\
\cmidrule{2-12}
\multirow{7}{*}{\rotatebox[origin=c]{90}{ImageNet}}
& PGD + Proj. Sink. ($\gamma=1 / 100000$)  & $68.0$ & $ 42$ &  $61.2$ & $ 43$ &  $59.2$ & $ 43$ &  $55.8$ & $ 43$ &  $52.4$ & $ 43$ \\
& PGD + Proj. Sink. ($\gamma=1 / 200000$)  & $61.2$ & $ 72$ &  $56.5$ & $ 72$ &  $48.3$ & $ 72$ &  $42.9$ & $ 71$ &  $38.1$ & $ 71$ \\
& PGD + Proj. Sink. ($\gamma=1/1000000$)  & $-$    & $-$     & $-$    & $-$     & $-$    & $-$     & $-$    & $-$     & $-$    & $-$     \\
& PGD + Dual Proj.                        & $ 9.0$ & $ 15$ &  $ 9.0$ & $ 15$ &  $ 8.0$ & $ 15$ &  $ 8.0$ & $ 15$ &  $ 7.0$ & $ 15$ \\
& FW ~ + Dual LMO                         & $10.0$ & $ 15$ &  $ 9.0$ & $ 15$ &  $ 8.0$ & $ 15$ &  $ 8.0$ & $ 15$ &  $ 7.0$ & $ 15$ \\
\cmidrule{2-12}
& PGD + Dual Proj. ~(w/o post-processing) & $ 0.0$ & $ 15$ &  $ 0.0$ & $ 15$ &  $ 0.0$ & $ 15$ &  $ 0.0$ & $ 15$ &  $ 0.0$ & $ 15$ \\
& FW ~ + Dual LMO (w/o post-processing)   & $ 1.0$ & $ 15$ &  $ 0.0$ & $ 15$ &  $ 0.0$ & $ 15$ &  $ 0.0$ & $ 15$ &  $ 0.0$ & $ 15$ \\
\bottomrule
\end{tabular}
\end{table*}

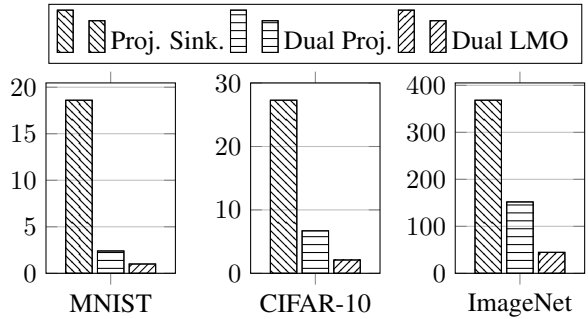
\begin{figure}
\centering
\begin{tikzpicture}
\begin{axis}[
    name=ax1,
    width=0.4\linewidth,
    height=0.18\textheight,
    ymin=0,
    enlarge x limits=0.18,
    ybar,
    xtick=data,
    symbolic x coords={MNIST},
    grid=major,
    xmajorgrids=false,
    ]
\addplot[postaction={pattern=north west lines}] coordinates {(MNIST,18.6)};
\addplot[postaction={pattern=horizontal lines}] coordinates {(MNIST,2.4)};
\addplot[postaction={pattern=north east lines}] coordinates {(MNIST,1.0)};
\end{axis}

\begin{axis}[
    name=ax2,
    at={(ax1.south east)},
    xshift=1cm,
    width=0.4\linewidth,
    height=0.18\textheight,
    legend style={
        at={(0.5,1.1)},
        anchor=south,
        legend columns=-1
    },
    legend image post style={scale=2.0},
    ymin=0,
    enlarge x limits=0.18,
    ybar,
    xtick=data,
    symbolic x coords={CIFAR-10},
    grid=major,
    xmajorgrids=false,
    ]
\addplot[postaction={pattern=north west lines}] coordinates {(CIFAR-10, 27.3)};
\addplot[postaction={pattern=horizontal lines}] coordinates {(CIFAR-10, 6.7)};
\addplot[postaction={pattern=north east lines}] coordinates {(CIFAR-10, 2.1)};
\legend{Proj. Sink., Dual Proj., Dual LMO}
\end{axis}

\begin{axis}[
    name=ax3,
    at={(ax2.south east)},
    xshift=1cm,
    width=0.4\linewidth,
    height=0.18\textheight,
    ymin=0,
    enlarge x limits=0.18,
    ybar,
    xtick=data,
    symbolic x coords={ImageNet},
    grid=major,
    xmajorgrids=false,
    ]
\addplot[postaction={pattern=north west lines}] coordinates {(ImageNet, 368.3)};
\addplot[postaction={pattern=horizontal lines}] coordinates {(ImageNet, 151.9)};
\addplot[postaction={pattern=north east lines}] coordinates {(ImageNet, 44.6)};
\end{axis}
\end{tikzpicture}
\caption{Per iteration running time (in milliseconds) of different algorithms measured on a single P100 GPU.}
\label{fig:running_time_comparison}
\end{figure}

In \Cref{tb:acc_iter_time}, we compare
(a) strength of different attacks by adversarial accuracy, \ie model accuracy under attacks and
(b) the running speed by the average number of dual iterations.
We observe that PGD with dual projection attack and FW with dual LMO attack are generally stronger than PGD with projected Sinkhorn, since the latter is only an \emph{approximate} projection hence it does not solve \eqref{eq:wasserstein_constrained_optimization_image_space} adequately.
As $\gamma$ decreases, PGD with projected Sinkhorn gradually becomes stronger due to better approximation, but at a cost of an increasing number of iterations to converge.
However, projected Sinkhorn is still weaker than PGD with dual projection and FW with dual LMO, even after tuning $\gamma$. Unfortunately, further decreasing $\gamma$ runs into numerical overflow.
We notice that PGD with dual projection is often slightly stronger than FW with dual LMO for two possible reasons: the projection step is solved exactly without any approximation error; we use the default decay schedule in FW.
Tuning the decay schedule for specific problems might improve the attack strength and convergence speed of FW. 


For completeness, we also report the results of dual projection and dual LMO without post-processing in \Cref{tb:acc_iter_time}.
After post-processing (see \S\ref{sec:hypercube}), the adversarial accuracy is increased, sometimes by a lot.
This is especially the case on MNIST (\eg, $\epsilon=0.1$)  where there are many white pixels, thus it is very easy to violate the hypercube constraint.
Note that PGD with projected Sinkhorn might be even weaker than what is indicated by the statistics in \Cref{tb:acc_iter_time}, if we post-process its adversarial examples appropriately so that they are genuinely valid.
However, we do not have an efficient algorithm for post-processing projected Sinkhorn, thus we simply let it ignore the hypercube constraint.
Even so, our attacks are still much stronger than it.

Thanks to the bisection method and the tight upper bounds \eqref{eq:projection_dual_upper_bound} and \eqref{eq:frank_wolfe_entropy_dual_upper_bound}, dual projection and dual LMO converge very fast to high precision solutions.
In practice, they often terminate exactly in $15$ iterations due to the consistent scales of the upper bounds (see \Cref{sec:stop_criterion} for a discussion).
On the other hand, projected Sinkhorn typically requires more dual iterations.
Besides convergence speed, we also compare the real running time of a single dual iteration of all three methods in \Cref{fig:running_time_comparison}.
On MNIST and CIFAR-10, dual projection is $5 \sim 7$ times faster than projected Sinkhorn; while on ImageNet, dual projection is roughly twice faster than projected Sinkhorn.
Meanwhile, dual LMO is $2 \sim 3$ times faster than dual projection due to the absence of the extra logarithm factor.
See
\Cref{sec:appendix_running_time_projected_sinkhorn}
for more details.

\begin{figure}[t]
\centering
    \includegraphics[width=\linewidth]{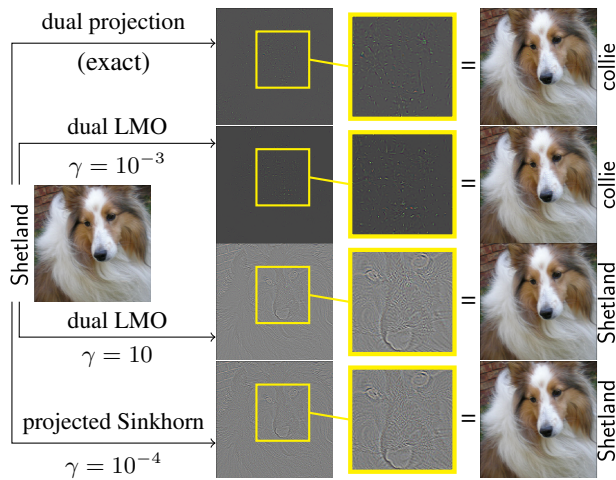}
\caption{Wasserstein adversarial examples ($\epsilon=0.005$) generated by different algorithms on ImageNet.
Perturbations are normalized to $[0, 1]$ for visualization.
Dog faces can be observed after zooming in.
}
\label{fig:imagnet_shetland}
\end{figure}

\subsection{Entropic Regularization Reflects Shapes in Images}

\begin{figure}[t]
\centering
\begin{tikzpicture}
\node[inner sep=0] (adv_imgs) at (0, 0) {\includegraphics[width=\linewidth]{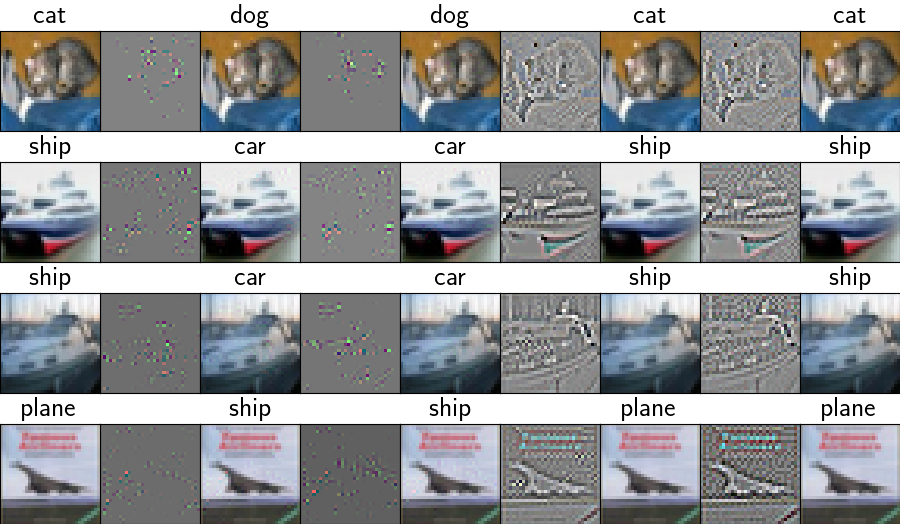}};

\getcorners{adv_imgs}

\drawtext{1}{0.888}{\scriptsize clean}{}
\drawtext{0.888}{0.666}{\scriptsize dual proj.}{}
\drawtext{0.666}{0.444}{\scriptsize dual LMO}{\scriptsize ($\gamma = 10^{-3}$)}
\drawtext{0.444}{0.222}{\scriptsize dual LMO}{\scriptsize ($\gamma = 10$)}
\drawtext{0.222}{0.0}{\scriptsize proj. Sink.}{\scriptsize ($\gamma = 1/3000$)}

\end{tikzpicture}
\caption{Wasserstein adversarial examples ($\epsilon = 0.002$) generated by different algorithms on CIFAR-10. Perturbations reflect shapes in images only when using large entropic regularization (the 6th and 8th columns).
}
\vspace{-.08em}
\label{fig:cifar_adv}
\end{figure}

\textcite{WongSK19} noted that Wasserstein perturbations reflect shapes in original images.
Instead, we argue that it is the large entropic regularization that causes the phenomenon.
We visualize adversarial examples and perturbations generated by different attacks in \Cref{fig:imagnet_shetland} and \Cref{fig:cifar_adv}.
Perturbations generated by PGD with dual projection and FW with dual LMO using small $\gamma$ tend to be very sparse, \ie, only moving a few pixels in the images.
In comparison, we gradually increase the entropic regularization in dual LMO and eventually are able to reproduce the shape reflection phenomenon observed by \textcite{WongSK19}.
The connection between entropic regularization and shape reflection phenomenon is deferred to \Cref{sec:appendix_entropy_relection_analysis}.
The fact that projected Sinkhorn generates adversarial perturbations reflecting shapes in clean images could be another evidence that the entropic regularization is too large.
Notice that large entropic regularization causes large approximation error, thus potentially requires larger $\epsilon$ in order to successfully generate adversarial examples.

\subsection{Improved Adversarial Training}

Since we have developed stronger and faster attacks, it is natural to apply them in adversarial training \parencite{MadryMSTV18} to improve the robustness of adversarially trained models.
Indeed, models adversarially trained by our stronger attack have much higher adversarial accuracy compared with models trained by projected Sinkhorn.
We apply FW with dual LMO in adversarial training due to its fast convergence speed and fast per iteration running speed.
On MNIST, we produce a robust model with $59.7\%$ accuracy against all three attacks with perturbation budget $\epsilon=0.5$, compared with $0.6\%$ using projected Sinkhorn.
On CIFAR-10, we produce a robust model with $56.8\%$ accuracy against all three attacks with perturbation budget $\epsilon=0.005$, compared with $41.2\%$ using projected Sinkhorn.
See \Cref{sec:adversarial_training} for a thorough evaluation of adversarially trained models.


\section{Conclusion}


We have demonstrated that the previous Wasserstein adversarial attack based on \emph{approximate} projection is suboptimal due to inaccurate projection.
To generate stronger Wasserstein adversarial attacks, we introduce two \emph{faster} and more \emph{accurate} algorithms for Wasserstein constrained optimization.
Each algorithm has its own advantage thus they  complement each other nicely:
PGD with dual projection employs exact projection and generates the strongest attack.
On the other hand, with minimal entropic smoothing, FW with dual LMO is extremely fast in terms of both outer maximization and linear minimization step without much tuning of hyperparameters.
Extensive experiments confirm the effectiveness of our algorithms in two ways: (a) properly evaluating Wasserstein adversarial robustness and (b) improving robustness through adversarial training.
Finally, our algorithms impose minimal assumptions on the cost matrix in Wasserstein distance, thus they can be directly applied to other applications involving Wasserstein constrained optimization problems on discrete domains.

\clearpage

\section*{Acknowledgement}
We thank the reviewers for their constructive comments. 
Resources used in preparing this research were provided, in part, by the Province of Ontario, the Government of Canada through CIFAR, and companies sponsoring the Vector Institute. We gratefully acknowledge funding support from NSERC, the Canada CIFAR AI Chairs Program, and Waterloo-Huawei Joint Innovation Lab. We thank NVIDIA Corporation (the data science grant) for donating two Titan V GPUs that enabled in part the computation in this work.

\bibliography{ref}
\bibliographystyle{icml2020}

\clearpage

\appendix

\clearpage

\onecolumn


\section{Projected Sinkhorn}
\label{sec:appendix_projected_sinkhorn}


Here, we give a brief description of the approximate projection method proposed by \citet{WongSK19}.
The projection of a (normalized) vector $\wv$ to the Wasserstein ball centered at (normalized) $\xv$ of radius of $\epsilon = \delta$ can be written as:
\begin{align*}
    & \mini_{\zv, \Pi \geq 0} ~~  \frac{1}{2} \norm{\wv - \zv}_2^2 \\
    & \sbjto ~ \Pi \one = \xv, ~ \Pi^\top \one = \zv, \langle \Pi, C \rangle \leq \epsilon.
\end{align*}
The above objective is not strongly convex in $\Pi$, but can be made strongly convex by adding an entropic regularization:
\begin{align}
\label{eq:projected_sinkhorn_entropy_primal}
\begin{split}
    & \mini_{\zv, \Pi \geq 0} ~~ \frac{1}{2} \norm{\wv - \zv}_2^2 + \gamma \sum_{i=1}^{n}\sum_{j=1}^{n} \Pi_{ij}\log \Pi_{ij} \\
    & \sbjto ~ \Pi \one = \xv, ~ \Pi^\top \one = \zv, \langle \Pi, C \rangle \leq \epsilon.
\end{split}
\end{align}
The parameter $\gamma > 0$ is the entropic regularization constant.
Projected Sinkhorn solves \eqref{eq:projected_sinkhorn_entropy_primal} through block-coordinate maximization on the dual problem of \eqref{eq:projected_sinkhorn_entropy_primal}.

\subsection{Analysis of Approximation Error in Projected Sinkhorn}

To ensure small approximation error in \eqref{eq:projected_sinkhorn_entropy_primal}, the scale of entropic regularization term should be at least much smaller than the quadratic term:
\begin{align*}
    \gamma \sum_{i=1}^{n}\sum_{j=1}^{n} \Pi_{ij}\log \Pi_{ij} \ll \frac{1}{2} \norm{\wv - \zv}_2^2.
\end{align*}
Otherwise, the objective \eqref{eq:projected_sinkhorn_entropy_primal} is dominated by the entropic regularization.
However, in practice, it is not always guaranteed, especially when $\wv$ is an interior point of the constraint in \eqref{eq:projected_sinkhorn_entropy_primal}.

Consider an simple example where $\wv = \xv = (\frac1n, \frac1n, \cdots \frac1n)^\top$.
In that case, the quadratic term $\frac12 \| \xv - \zv \|_2^2$ should be as small as zero, since we can let $\zv = \xv$.
However, if $\zv = \wv = \xv$, then $\Pi$ could be a diagonal matrix $\diag{\frac1n, \frac1n, \cdots \frac1n}$ (or more generally, $\frac1n P$, where $P$ is a permutation matrix).
Thus, the entropic term becomes
\begin{align}
\label{eq:maximum_of_negative_entropy}
    \sum_{i = 1}^{n} \sum_{j = 1}^{n} \Pi_{ij} \log \Pi_{ij} = - \log n,
\end{align}
reaching its \emph{maximum}.
The entropic regularization somewhat conflicts with the the quadratic term. 
Notice that the scale of \eqref{eq:maximum_of_negative_entropy} is much larger than $\frac12 \| \wv - \xv \|_2^2$ (which is supposed to be very close to zero), especially when the dimension $n$ is large.
Thus, the objective \eqref{eq:projected_sinkhorn_entropy_primal} may be dominated by the entropic regularization and solving the projection step accurately requires very small $\gamma$.

We make two additional remarks.
First, notice that the scale of \eqref{eq:maximum_of_negative_entropy} increases as $n$ grows, which requires smaller $\gamma$ to balance the quadratic term and entropic regularization.
This gives the intuition that projected Sinkhorn needs smaller $\gamma$ in higher dimensional spaces, which is observed in experiments.

Second, the key aspect of the above argument is that $\wv$ is relatively close to $\xv$, \eg $\Wc(\wv, \xv) \leq \epsilon$, such that the quadratic term is so small hence dominated by the entropic regularization.
In the case where $\wv$ is very far away from $\xv$, this argument does not hold anymore.
We believe this explains why large step sizes strengthen the attack when using PGD with projected Sinkhorn in experiments.
However, PGD with large step sizes tend to be unstable and may not converge to a good solution.

\subsection{Toy Experiment in \Cref{tab:sinkhorn_proj_experiment}}

Entries of $\av$ and $\bv$ are sampled from a uniform distribution in $[0, 1]^{400}$ independently. After sampling, both vectors are normalized to ensure that the pixel mass summations are exactly $1$. We reshape $\av$ and $\bv$ to $\Rb^{20 \times 20}$ and view them as images in order to use the procedure of \textcite{WongSK19}. The cost matrix is induced by Euclidean norm between pixel indices with $5 \times 5$ local transportation plan.

We use projected gradient descent and Frank-Wolfe to compute the projection by solving the following ``reparametrized'' projection problem \wrt the coupling matrix:
\begin{align*}
    & \mini_{\Pi} ~~ \frac12 \| \Pi^\top \one - \bv \|_2^2 \\
    & \sbjto ~ \Pi \geq 0, \Pi \one = \av, \langle \Pi, C \rangle \leq \epsilon.
\end{align*}
The problem is equivalent to the Euclidean projection in the image space and is convex in $\Pi$.
For PGD, we use step size $0.05$.
For Frank-Wolfe, we use $\gamma=10^{-3}$ and the default decay schedule $\frac{2}{t + 1}$.
We let both algorithms run for sufficiently many iterations in order to converge to high precision solutions.

\section{Recommended Stopping Criterion for Bisection Search}
\label{sec:appendix_bisection_converge}


When the derivative of the dual objective approaches zero, \ie, $\langle \Pi, C \rangle - \delta \approx 0$, the comparison between $\langle \Pi, C \rangle - \delta$ and $0$ is getting numerically unstable.
Thus, we recommend stopping the bisection method when either the derivative is close to zero, or the gap between the lower bound $l$ and the upper bound $u$ is relatively small.

We recommend using an upper bound $u$ to recover the coupling $\Pi$.
Since an upper bound $u$ always has a negative derivative, thus the transportation cost constraint $\langle \Pi, C \rangle \leq \delta$ is always satisfied.

We highlight that the bisection method converges very fast in practice, since it shrinks the interval by a factor of $2$ in every iteration.
Thus, it determines the next $3$ digits of $\lambda^\star$ after the decimal point after every $10$ iterations.

For a concrete stopping criterion in our experiment, please refer to \Cref{sec:stop_criterion}.

\clearpage

\section{Dykstra's Projection}
\label{sec:appendix_dykstra}


\begin{algorithm2e}[th]
  \KwIn{$G \in \Rb^{n \times n}$, two convex sets $\Cc_s$ and $\Cc_h$}
  \KwOut{The projection of $G$ to $\Cc_s \cap \Cc_h$}
  $\Pi_h^{(0)} = G$

  $I_s^{(0)} = I_h^{(0)} = O$

  \For{$t=0, 1, \ldots, \mathsf{maxiter}$}{
    $\Pi_s^{(t + 1)} = \projto{\Cc_s}{\Pi_h^{(t)} - I_s^{(t)}}$

    $I_s^{(t + 1)} = \Pi_s^{(t + 1)} - \Pi_h^{(t)} + I_s^{(t)}$

    $\Pi_h^{(t + 1)} = \projto{\Cc_h}{\Pi_s^{(t + 1)} - I_h^{(t)}}$

    $I_h^{(t + 1)} = \Pi_h^{(t + 1)} - \Pi_s^{(t+ 1)} + I_h^{(t)}$
  }
  \Return $\Pi_s^{(t + 1)}$
  \caption{Dykstra's Projection Algorithm}
  \label{alg:dykstra}
\end{algorithm2e}

Consider the projection of $G \in \Rb^{n \times n}$ to the intersection of $\Cc_s$ and $\Cc_h$, where
    $\Cc_s = \left\{ \Pi \in \Rb^{n \times n} : \Pi \geq 0,~\Pi \one = \xv \right\}$
and
    $\Cc_h = \left\{ \Pi \in \Rb^{n \times n}: \langle \Pi, C \rangle \leq \delta \right\}$.
Dykstra's algorithm, applying to these two convex sets, is presented in \Cref{alg:dykstra}. Intuitively, Dykstra's algorithm projects $G$ alternatively to $\Cc_s$ and $\Cc_h$ in each iteration. Notice that before projecting to $\Cc_s$ (or $\Cc_h$), the increment of the last iteration $I_s^{(t)}$ (or $I_h^{(t)}$) is subtracted from $\Pi_h^{(t)}$ (or $\Pi_s^{(t + 1)}$). These increments play a crucial role in the convergence of Dykstra's algorithm. It has been shown that both $\Pi_s^{(t)}$ and $\Pi_h^{(t)}$ converge to the projection of $G$ onto $\Cc_s \cap \Cc_h$ \parencite{Dykstra83,BoyleDykstra86}.

In order to implement Dykstra's algorithm, we need two subroutines to compute the projection onto $\Cc_s$ and $\Cc_h$ respectively.
The projection onto $\Cc_h$ admits a closed form expression:
\begin{align*}
    \projto{\Cc_h}{\Pi} = \Pi - \frac{\max\{\langle \Pi, C \rangle - \delta, 0\}}{\| C \|_{\mathrm{F}}^2} C.
\end{align*}
The projection onto $\Cc_s$ has an algorithm running in $O(n^2 \log n)$ time, by projecting each row of $G$ to a simplex.
For the simplex projection algorithm, we direct readers to \parencite{DuchiSSC08}.

\subsection{Toy Experiment for Dykstra's Algorithm}

\begin{figure}[h]
    \centering
    \includegraphics[width=0.4\textwidth]{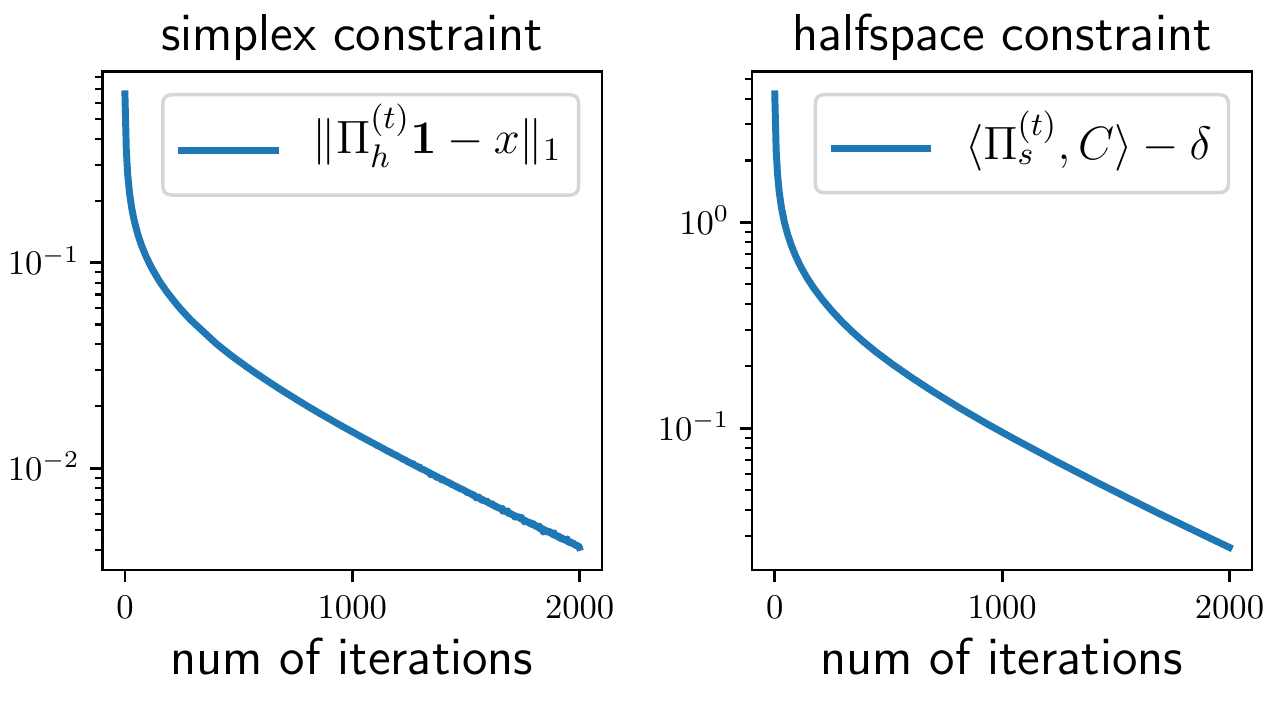}
    \caption{Convergence of Dykstra's algorithm. The violation of simplex constraint and halfspace constraint of $\Pi_h^{(t)}$ and $\Pi_s^{(t)}$ (the iterates produced by Dykstra's algorithm) gradually decrease to zero, but at a slow rate.}
    \label{fig:dykstra_convergence}
\end{figure}

We randomly sample a vector $\xv \in \Rb^{100}$ and a coupling $\Pi \in \Rb^{100 \times 100}$ (both from a uniform distribution in a hypercube). We normalize $\xv$ and $\Pi$.
We then project $\Pi$ to $\Cc_s \cap \Cc_h$. We set $\delta = 1$ and the cost matrix $C$ is the same as the one in \S\ref{sec:exp} (we reshape $\xv$ into a $10 \times 10$ matrix and view it as an image).
The convergence plots are shown in \Cref{fig:dykstra_convergence}. Dykstra's algorithm does converge, but at a slow rate.

\section{Toy Example: Failure of Dual Linear Minimization without Entropic Regularization}
\label{sec:appendix_failure_lmo}


This section presents a simple example to demonstrate the failure of dual LMO without adding entropic regularization.

Let $\delta = 0.5$, $\xv = (1, 0)^\top$. Let
\begin{align*}
G = 
\begin{pmatrix}
    1 & -1 \\
    0 & 0 
\end{pmatrix}
,
~ ~ ~ ~
C = 
\begin{pmatrix}
    0 & 1 \\
    1 & 0 
\end{pmatrix}.
\end{align*}
Let
\begin{align*}
\Pi = 
\begin{pmatrix}
    \Pi_{11} & \Pi_{12} \\
    \Pi_{21} & \Pi_{22} 
\end{pmatrix}.
\end{align*}
Then the primal linear program is
\begin{align*}
    & \mini_{\Pi_{11}, \Pi_{12} \geq 0} ~~ \Pi_{11} - \Pi_{12} \\
    & \sbjto ~ \Pi_{11} + \Pi_{12} = 1 , \Pi_{12} \leq 0.5
\end{align*}
It is easy to check that the solution is
\begin{align*}
\Pi^\star = 
\begin{pmatrix}
    0.5 & 0.5 \\
    0 & 0 
\end{pmatrix}.
\end{align*}
The dual linear program is
\begin{align*}
    \maxi_{\lambda \geq 0} - \frac12 \lambda + \min\left\{1, -1 + \lambda\right\}
\end{align*}
It is easy to see that the dual problem has a unique solution $\lambda^\star = 2$. Now we try to use the following condition to recover the primal solution:
\begin{align}
\label{eq:lp_stationary_condition}
    \Pi^\star \in \argmin_{\Pi \geq 0, \Pi \one = \xv} ~ \langle \Pi, G + \lambda^\star C \rangle - \lambda^\star \delta,
\end{align}
which is equivalent to
\begin{align}
\label{eq:lp_stationary_condition_example}
    \Pi^\star \in \argmin_{\Pi \geq 0, \Pi \one = \xv} ~
    \left\langle \Pi, 
    \begin{pmatrix}
    1 & 1 \\
    2 & 0 
    \end{pmatrix} 
    \right\rangle.
\end{align}
But it turns out that any $\Pi$ of the form
\begin{align*}
\Pi(\alpha) =
\begin{pmatrix}
    \alpha & 1 - \alpha \\
    0 & 0 
\end{pmatrix},
\end{align*}
where $0 \leq \alpha \leq 1$, is a minimizer. By varying $\alpha$, $\Pi(\alpha)$ can be suboptimal ($\alpha = 0$), optimal ($\alpha = 0.5$) or even infeasible ($\alpha = 1$). Thus, $\Pi^\star$ cannot be recovered by only considering the stationary condition.

Of course it is possible to combine \eqref{eq:lp_stationary_condition} with other KKT conditions (specifically, complementary slackness and primal feasibility) to obtain one of the primal solutions.
Particularly, in the above example, \eqref{eq:lp_stationary_condition} along with the complementary slackness determines the unique primal solution $\Pi^\star$. However, there are still two issues.
The first issue is that in more general cases, doing so requires solving a linear system whose variables are from a subset of $\Pi$, which could be GPU unfriendly.
More critically, the above solution is numerically unstable.
Suppose that there is a slight numerical inaccuracy due to floating point precision, such that \eqref{eq:lp_stationary_condition_example} becomes
\begin{align}
\label{eq:lp_stationary_condition_numerical_example}
    \Pi^\star \in \argmin_{\Pi \geq 0, \Pi \one = \xv} ~
    \left\langle \Pi, 
    \begin{pmatrix}
    1 + \xi &  1 - \xi \\
    2 & 0 
    \end{pmatrix} 
    \right\rangle.
\end{align}
for some small constant $\xi > 0$.
Now solving \eqref{eq:lp_stationary_condition_numerical_example} gives
\begin{align*}
\Pi(1) =
\begin{pmatrix}
    0 & 1 \\
    0 & 0 
\end{pmatrix},
\end{align*}
which is infeasible.

\section{Proofs}

This section presents all proofs in the paper.
Recall that without loss of generality we assume that all entries in the cost matrix $C$ are nonnegative and only diagonal entries of $C$ are zeros.
All proofs below assume it implicitly.


\subsection{Dual Projection}


\projectiondualobjective*

\begin{proof}
Introducing the Lagrange multiplier $\lambda \geq 0$ for the constraint $\langle \Pi, C \rangle \leq \delta$, we arrive at the following dual problem
\begin{align*}
    \maxi_{\lambda \geq 0} ~ g(\lambda),
\end{align*}
where
\begin{align*}
    g(\lambda) = \min_{\Pi \one = \xv, \Pi \geq 0} ~ \frac{1}{2} \| \Pi - G \|_{\mathrm{F}}^2 + \lambda \left(\langle \Pi, C \rangle - \delta \right).
\end{align*}
We complete the square in the inner problem, which leads to
\begin{align*}
    g(\lambda) = \min_{\Pi \one = \xv, \Pi \geq 0} ~ & \frac{1}{2} \| \Pi - \left(G - \lambda C\right) \|_{\mathrm{F}}^2 - \frac12 \lambda^2 \| C \|_{\mathrm{F}}^2 + \lambda \langle G, C \rangle - \lambda \delta.
\end{align*}
Notice that the constraint in the minimization is independent for each row of $\Pi$.
Thus, it can be reduced to a simplex projection for each row of $G - \lambda C$, which can be solved in $O(n^2 \log n)$ time.

By Danskin's theorem, $g$ is differentiable and the derivative is
\begin{align*}
    g^\prime(\tilde \lambda) = \langle \tilde\Pi, C \rangle - \delta,
\end{align*}
given the solution $\tilde \Pi$ to the minimization problem.
\end{proof}

Before proving \Cref{prop:projection_dual_upper_bound}, we first prove \Cref{lma:projection_simple_case}, which characterizes the solution of simplex projection in a special case.
Intuitively, the projection of a vector to a simplex is very sparse if one of its entries is much larger than the others.

\begin{lemma}
\label{lma:projection_simple_case}
Consider the following projection of a vector $\vv$ to a simplex:
\begin{align*}
    & \mini_{\wv \in \Rb^{n}} ~~ \norm{\wv - \vv}_2^2 \\
    & \sbjto ~ \sum_{i = 1}^{n} w_i = z, ~ w_i \geq 0,
\end{align*}
where $z > 0$.
Suppose that there exists $i$ such that $v_i \geq v_j + z$ for all $j \neq i$.
Then the solution is
$\wv^\star = (0, \cdots 0, z, 0, \cdots, 0)^\top$,
where the only nonzero entry is $w_i^\star = z$. 
\end{lemma}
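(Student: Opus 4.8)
The plan is to verify directly that the proposed point $\wv^\star = (0,\dots,0,z,0,\dots,0)^\top$ (with the $z$ in coordinate $i$) satisfies the Karush--Kuhn--Tucker (KKT) conditions of the projection problem. The objective $\frac12\norm{\wv-\vv}_2^2$ is strictly convex and the feasible set is cut out by linear equality and inequality constraints, so the KKT conditions are both necessary and sufficient for global optimality, and strict convexity guarantees that the minimizer is unique. Hence it suffices to exhibit Lagrange multipliers under which the candidate $\wv^\star$ is stationary, feasible, and complementary.

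First I would form the Lagrangian $L = \frac12\norm{\wv-\vv}_2^2 + \tau\big(\sum_{k} w_k - z\big) - \sum_{k} \mu_k w_k$, with equality multiplier $\tau \in \Rb$ and inequality multipliers $\mu_k \geq 0$. Stationarity in $w_k$ gives $w_k = v_k - \tau + \mu_k$, and complementary slackness requires $\mu_k w_k = 0$ for every $k$. Next I would read off the multipliers forced by the candidate: since $w_i^\star = z > 0$, complementary slackness forces $\mu_i = 0$, and stationarity then pins down $\tau = v_i - z$. For each $j \neq i$ the candidate has $w_j^\star = 0$, so stationarity gives $\mu_j = \tau - v_j = (v_i - z) - v_j$.

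The crux is dual feasibility: I must check $\mu_j \geq 0$ for all $j \neq i$, which reads $v_i - z - v_j \geq 0$, i.e.\ $v_i \geq v_j + z$ --- exactly the hypothesis of the lemma. Everything else is immediate: primal feasibility holds because the entries are nonnegative (using $z>0$) and sum to $z$, and complementary slackness holds because $\mu_j w_j^\star = 0$ trivially while $\mu_i w_i^\star = 0$ since $\mu_i = 0$. With all KKT conditions verified, $\wv^\star$ is the unique global minimizer. I expect no real obstacle here; the only place the gap assumption is used --- and therefore the only thing to watch --- is the nonnegativity of the multipliers $\mu_j$, which is precisely what fails when $v_i \geq v_j + z$ is violated. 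As an alternative one-line route, I could instead invoke the soft-thresholding form of simplex projection $w_k^\star = \max(v_k - \tau, 0)$ of \parencite{DuchiSSC08} and check that $\tau = v_i - z$ yields $w_i^\star = \max(z,0) = z$ and $w_j^\star = \max(v_j - v_i + z,\, 0) = 0$ for $j \neq i$, with $\sum_k w_k^\star = z$ confirming that this threshold is the correct one.
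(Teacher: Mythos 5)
Your proof is correct, but it takes a genuinely different route from the paper's. The paper deliberately avoids duality and the structure of the simplex-projection algorithm: it gives a direct exchange argument, assuming for contradiction that some $w_j^\star > 0$ for $j \neq i$, constructing a competitor $\hat\wv$ that moves the mass $w_j^\star$ onto coordinate $i$, and computing the objective difference $\norm{\wv^\star - \vv}_2^2 - \norm{\hat\wv - \vv}_2^2 = 2 w_j^\star (v_i - v_j - w_i^\star) > 2 w_j^\star(v_i - v_j - z) \geq 0$, which contradicts optimality. You instead verify the KKT conditions at the candidate point: stationarity pins down $\tau = v_i - z$ and $\mu_j = v_i - z - v_j$, and the gap hypothesis $v_i \geq v_j + z$ is exactly dual feasibility $\mu_j \geq 0$; convexity makes KKT sufficient and strict convexity of the objective gives uniqueness. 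Both arguments are sound, and interestingly your closing one-line alternative via the soft-thresholding form $w_k^\star = \max(v_k - \tau, 0)$ of \parencite{DuchiSSC08} is precisely the route the paper mentions in its opening sentence (``a careful analysis of the simplex projection algorithm would give a proof'') but explicitly declines to carry out. The trade-off: your KKT route is systematic, yields uniqueness for free, and makes transparent that the gap hypothesis is used exactly once (for $\mu_j \geq 0$); the paper's exchange argument is more elementary, needing no Lagrangian machinery at all, at the cost of being an ad hoc perturbation that one must guess. One small point worth making explicit in your write-up: since all constraints are affine, no constraint qualification is needed for KKT sufficiency in this convex problem, so your argument is airtight as stated.
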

\begin{proof}
A careful analysis of the simplex projection algorithm \citep{DuchiSSC08} would give a proof. Here, we give an alternative simple proof that does not rely on the algorithm. Assume to the contrary that there exists $j \neq i$ such that $w^\star_{j} > 0$ hence also $w^\star_i < z$. We construct another feasible point $\hat\wv$ by
\begin{align*}
    \hat w_i & = w_i^\star + w_j^\star \\
    \hat w_j & = 0 \\
    \hat w_k & = w_k^\star ~ ~ \forall k \neq i, k \neq j.
\end{align*}
Comparing the objective value of $\wv^\star$ and $\hat\wv$, we have
\begin{align*}
    \norm{\wv^\star - \vv}_2^2 - \norm{\hat\wv - \vv}_2^2
    & = (w_i^\star - v_i)^2 + (w_j^\star - v_j)^2 - (w_i^\star + w_j^\star - v_i)^2 - (0 - v_j)^2 \\
    & = 2 w_j^\star (v_i - v_j - w_i^\star) \\
    & > 2 w_j^\star (v_i - v_j - z) \\
    & \geq 0.
\end{align*}
$\hat\wv$ has even smaller objective value than $\wv^\star$, contradicting the optimality of $\wv^\star$.
Thus all $w_j^\star = 0$ for all $j \neq i$, which finishes the proof.
\end{proof}

\projectiondualupperbound*

\begin{proof}
By Danskin's theorem the dual problem \eqref{eq:projection_dual_problem} is differentiable in $\lambda$.
Moreover, for any given $\tilde\lambda$, suppose the solution to the minimization is $\tilde\Pi$, then the gradient \wrt $\tilde\lambda$ is $\langle \tilde\Pi, C \rangle - \delta$.

Consider the $i$-th row of $G - \lambda C$. Assume on the contrary that
\begin{align*}
    \lambda > \frac{2 \norm{\mathrm{vec}(G)}_\infty + \norm{\xv}_\infty}{\min_{i \neq j} \{C_{ij}\}}.
\end{align*}
Then, for all $i \neq j$ we have (recall that $C_{ii} = 0$)
\begin{align*}
    G_{ii} = G_{ii} - \lambda C_{ii} > G_{ij} - \lambda C_{ij} + x_i.
\end{align*}
The condition in \Cref{lma:projection_simple_case} is satisfied. A projection of $G - \lambda C$ results in a diagonal matrix $\tilde\Pi$. Thus
\begin{align*}
    g^\prime(\tilde \lambda) & = \langle \tilde\Pi, C \rangle - \delta \\
                                                  & = \sum_{i = j} \tilde\Pi_{ij} C_{ij} + \sum_{i \neq j} \tilde\Pi_{ij} C_{ij} - \delta \\
                                                  & = - \delta.
\end{align*}
The derivative is strictly negative, hence $\tilde \lambda$ is suboptimal, which finishes the proof.

\end{proof}

\projectiondualrecover*
\begin{proof}
This is a direct implication of KKT conditions.
\end{proof}

\subsection{Dual Linear Minimization Oracle without Entropic Regularization}


\frankdualobjective*
\begin{proof}
Introducing the Lagrange multiplier $\lambda \geq 0$ for the constraint $\langle \Pi, C \rangle \leq \delta$, we arrive at the following dual problem
\begin{align*}
    \maxi_{\lambda \geq 0} ~ g(\lambda),
\end{align*}
where
\begin{align*}
    g(\lambda) & = \min_{\Pi \geq 0, \Pi \one = \xv} ~ \langle \Pi, H \rangle + \lambda \left(\langle \Pi, C \rangle - \delta\right) \\
    & = \min_{\Pi \geq 0, \Pi \one = \xv} ~ \langle \Pi, H + \lambda C \rangle - \lambda\delta \\
    & = - \lambda \delta + \sum_{i = 1}^{n} x_i \min_{1 \leq j \leq n} \Big(H_{ij} + \lambda C_{ij}\Big).
\end{align*}
The last equality uses the fact that the constraints are independent for each row, thus the minimization is separable.
\end{proof}

\frankdualupperbound*
\begin{proof}
A key observation is that the dual objective is a piece-wise linear function \wrt $\lambda$. We can roughly estimate the range of the maximizer, by analyzing the slope of this function. 

Suppose
\begin{align}
\label{eq:frank_lmo_contrary_upper_bound}
    \lambda > \frac{2 \norm{\mathrm{vec}(H)}_\infty}{\min_{i \neq j} \left\{C_{ij}\right\}}.
\end{align}
Then for all $i \neq j$, we have
\begin{align*}
    \lambda C_{ij} > H_{ii} - H_{ij},
\end{align*}
which implies $H_{ii} + \lambda C_{ii} < H_{ij} + \lambda C_{ij}$ for all $i \neq j$ (recall that $C_{ii} = 0$).
Thus 
\begin{align*}
    g(\lambda) & = - \lambda \delta + \sum_{i = 1}^{n} x_i \min_{1 \leq j \leq n} \Big(H_{ij} + \lambda C_{ij}\Big) \\
               & = - \lambda \delta + \sum_{i = 1}^{n} x_i \left(H_{ii} + \lambda C_{ii}\right) \\
               & = - \lambda \delta + \sum_{i = 1}^{n} x_i H_{ii}
\end{align*}
is a linear function with negative slope.
Thus, any $\lambda$ satisfies \eqref{eq:frank_lmo_contrary_upper_bound} cannot be a dual solution, which completes the proof.
\end{proof}

\subsection{Dual Linear Minimization Oracle with Dual Entropic Regularization}


\frankwolfeentropydualproblem*
\begin{proof}
Introducing the Lagrange multiplier $\lambda \geq 0$ for the constraint $\langle \Pi, C \rangle \leq \delta$, we arrive at the following dual problem
\begin{align*}
    \maxi_{\lambda \geq 0} ~ g(\lambda),
\end{align*}
where
\begin{align}
    g(\lambda) & = \min_{\Pi \geq 0, \Pi \one = \xv} \langle \Pi, H \rangle + \gamma \sum_{i = 1}^{n}\sum_{j = 1}^{n} \Pi_{ij} \log \Pi_{ij} + \lambda (\langle \Pi, C \rangle - \delta) \\
\label{eq:frank_kl_projection}
               & = \min_{\Pi \geq 0, \Pi \one = \xv} - \lambda \delta + \langle \Pi, H + \lambda C \rangle + \gamma \sum_{i = 1}^{n}\sum_{j = 1}^{n} \Pi_{ij} \log \Pi_{ij}.
\end{align}
Without loss of generality, we assume that $x_i$ is strictly positive for all $i$.
Otherwise, $x_i = 0$ implies $\Pi_{ij} = 0$ for all $j$, thus the $i$-th row of $\Pi$ does not even appear in the minimization.

Notice that the inner minimization in \eqref{eq:frank_kl_projection} is separable, since the constraint on $\Pi$ is independent for each row.
For each row, the minimization is equivalent to a Kullback–Leibler projection to a simplex, which admits a closed form.
For the sake of completeness, we give a derivation here.
For the $i$-th row,
\begin{align*}
    \sum_{j = 1}^{n} \Pi_{ij} \left(H_{ij} + \lambda C_{ij}\right) + \gamma \sum_{j = 1}^{n} \Pi_{ij} \log \Pi_{ij} & = \gamma \sum_{j = 1}^{n} \Pi_{ij} \log\frac{\Pi_{ij}}{\exp\left(- \frac{H_{ij} + \lambda C_{ij}}{\gamma}\right)} \\
    & = \gamma \sum_{j = 1}^{n} \Pi_{ij} \left( \log\frac{\Pi_{ij} / x_i}{\exp\left(- \frac{H_{ij} + \lambda C_{ij}}{\gamma}\right) / a} + \log x_i - \log a \right) \\
    & = \gamma \sum_{j = 1}^{n} \Pi_{ij} \log\frac{\Pi_{ij} / x_i}{\exp\left(- \frac{H_{ij} + \lambda C_{ij}}{\gamma}\right) / a} + \gamma x_i \left(\log x_i - \log a \right) \\
    & \geq \gamma x_i \left(\log x_i - \log a \right),
\end{align*}
where $a = \sum_{j = 1}^{n} \exp\left(- \frac{H_{ij} + \lambda C_{ij}}{\gamma}\right)$ is a normalization constant. The last inequality holds if and only if
\begin{align*}
    \Pi_{ij} = x_i \frac{\exp\left(- \frac{H_{ij} + \lambda C_{ij}}{\gamma}\right)}{\sum_{j = 1}^{n} \exp\left(- \frac{H_{ij} + \lambda C_{ij}}{\gamma}\right)}.
\end{align*}
Plugging in the above expression finishes the proof.
\end{proof}

\frankwolfeentropydualrecover*
\begin{proof}
This is a direct implication of KKT conditions. See the KL projection derivation in the proof of \Cref{prop:frank_wolfe_entropy_dual_problem} for a detailed explanation.
\end{proof}

\frankentropydualupperbound*
\begin{proof}
To begin with, we have the following bound on the derivative:
\begin{align*}
    g^\prime(\lambda)
    & = -\delta + \sum_{i=1}^n x_i \frac{\sum_{j=1}^n \exp\left(- \frac{H_{ij} + \lambda C_{ij}}{\gamma}\right) C_{ij}}{\sum_{j=1}^{n} \exp\left(- \frac{H_{ij} + \lambda C_{ij}}{\gamma}\right)} \\
    & = -\delta + \sum_{i=1}^n x_i \frac{\sum_{j=1}^n \exp\left(- \frac{H_{ij} + \lambda C_{ij} - H_{ii}}{\gamma}\right) C_{ij}}{\sum_{j = 1}^{n} \exp\left(- \frac{H_{ij} + \lambda C_{ij} - H_{ii} }{\gamma}\right)} \\
    & = -\delta + \sum_{i=1}^n x_i \frac{\sum_{j \neq i} \exp\left(- \frac{H_{ij} + \lambda C_{ij} - H_{ii}}{\gamma}\right) C_{ij}}{1 + \sum_{j \neq i} \exp\left(- \frac{H_{ij} + \lambda C_{ij} - H_{ii} }{\gamma}\right)} \\
    & \leq -\delta + \sum_{i = 1}^n \sum_{j \neq i} x_i \exp\left(- \frac{H_{ij} + \lambda C_{ij} - H_{ii}}{\gamma}\right) C_{ij}.
\end{align*}
The first equality uses translation invariance property of softmin function.
The last inequality uses the fact that $C_{ii} = 0$.
Notice that
\begin{align*}
    \lambda > \frac{ \left[2 \norm{\mathrm{vec}(H)}_\infty + \gamma \log \left(\frac{1}{\delta}\xv^\top C \one \right)\right]_+}{\min_{i \neq j}\left\{ C_{ij} \right\}}
\end{align*}
implies
\begin{align*}
\label{eq:implication_negation_proposition}
    \lambda C_{ij} & > 2 \norm{\mathrm{vec}(H)}_\infty + \gamma \log\left( \frac{1}{\delta}\xv^\top C \one \right) \\
                   & \geq H_{ii} - H_{ij} + \gamma \log\left( \frac{1}{\delta} \xv^\top C \one\right),
\end{align*}
for all $i \neq j$.
Thus, we have
\begin{align*}
    \exp\left(- \frac{H_{ij} + \lambda C_{ij} - H_{ii}}{\gamma} \right) < \frac{\delta}{\xv^\top C \one}
\end{align*}
for all $i \neq j$.
Plug it back to $g^\prime(\lambda)$.
We have
\begin{align*}
    g^\prime(\lambda)
    & < - \delta + \frac{\delta}{\xv^\top C \one}  \cdot \sum_{i=1}^n \sum_{j = 1}^{n} x_i C_{ij} \\
    & = - \delta + \frac{\delta}{\xv^\top C \one}  \cdot  \xv^\top C \one \\
    & = 0.
\end{align*}
The derivative is strictly negative hence $\lambda$ cannot be optimal, which concludes the proof.
\end{proof}

\clearpage

\section{Further Experimental Details}
\label{sec:appendix_exp_setting}


In this section, we present further experimental details as well as some implementation details.

\subsection{Models}

Our MNIST and CIFAR-10, models are taken from \citep{WongSK19}. The MNIST model is a  convolutional network with ReLU activations which achieves $98.89\%$ clean accuracy. The CIFAR-10 model is a residual network with $94.76\%$ clean accuracy.
The ImageNet model is a ResNet-50 pretrained neural network, downloaded from PyTorch models subpackage, which achieves $72.0\%$ top-1 clean accuracy on the first 100 samples from the validation set.
All experiments are run on a single P100 GPU.

\subsection{Stopping Criteria of Projection and Linear Minimization Step}
\label{sec:stop_criterion}

\textbf{Stopping criterion of projected Sinkhorn}
Denote $\textrm{obj}^{(t)}$ as the dual objective value of projected Sinkhorn in $t$-th iteration, we stop the algorithm upon the following condition is satisfied:
\begin{align*}
    \lvert \textrm{obj}^{(t + 1)} - \textrm{obj}^{(t)} \rvert \leq 10^{-4} + 10^{-4} \cdot \textrm{obj}^{(t)},
\end{align*}
which is also used by \citet{WongSK19}.

\textbf{Stopping criterion of dual projection and dual LMO} Both dual projection and dual LMO use the bisection method to solve dual problems. Bisection is terminated upon either
\begin{align*}
    u - l \leq 10^{-4} ~~ \text{or} ~~ \lvert g^\prime(\tilde\lambda) \rvert \leq 10^{-4}
\end{align*}
This condition lets us determine the $4$-th digit after the decimal point of $\lambda^\star$, or the violation of transportation cost constraint is less than $10^{-4}$. Note that a violation of $10^{-4}$ is extremely small, compared with $\delta = \epsilon \sum_{i = 1}^{n} \xv_i$, which is much (usually at least $10^{5}$ times) larger than the tolerance since the pixel sum $\sum_{i = 1}^{n} \xv_i$ is usually a large number.

In practice, the upper bound \eqref{eq:projection_dual_upper_bound} and \eqref{eq:frank_wolfe_entropy_dual_upper_bound} are often between $2$ and $3$ thanks to gradient normalization.
Thus, the bisection method satisfies the stopping criterion in at most $15$ iterations ($ 2 \times 2^{-15} \approx 10^{-4}$).

\subsection{Step Sizes of PGD}

\textbf{PGD with projected Sinkhorn} On MNIST, CIFAR-10 and ImageNet, the step sizes are set to $0.1$. Notice that $0.1$ is also the step size used by \citet{WongSK19} on MNIST and CIFAR-10. The gradient is normalized using $\ell_\infty$ norm:
\begin{align*}
    \argmax_{\norm{v}_\infty \leq 1} v^\top \nabla_{\xv} \ell(\xv, y) = \sign\left(\nabla_{\xv} \ell(\xv, y) \right).
\end{align*}
Again, this is the same setting used by \citet{WongSK19}.
While $\eta = 1.0$ achieves lower adversarial accuracy on the first batch of samples in \Cref{sec:appendix_loss_acc_vs_iter}, we find this large step size causes numerical overflow easily on the remaining batches.
Thus we choose $\eta = 0.1$ to present the experimental results.

\textbf{PGD with dual projection} On MNIST, the step size is set to $0.1$. On CIFAR-10 and ImageNet, the step size is set to $0.01$. The gradient is normalized in the following way:
\begin{align*}
    \frac{\nabla_{\Pi} \ell(\Pi^\top \one, y)}{\max_{i, j} \left\lvert \nabla_{\Pi} \ell(\Pi^\top \one, y) \right\rvert}.
\end{align*}

\subsection{Implementation of Local Transportation and Sparse Matrices Computation}
\label{sec:sparse_matrix_operation}

The local transportation technique in \S\ref{sec:local_transportation} requires computation on a sparse matrix $\Pi$.
However, a big challenge is that sparse matrices computation is not easily parallelizable on GPUs.
As such, current deep learning packages (PyTorch, TensorFlow) do not support general sparse matrices well.

To fully utilize GPU acceleration, we explore the sparsity pattern in $\Pi$.
Notice that each row of $\Pi$ has at most $k^2$ nonzero entries.
We store $\Pi$ as a $n \times k^2$ dense matrix, with some possible dummy entries.
The advantage is that now $\Pi$ is a dense matrix. Any row operations on $\Pi$ (\eg softmin along each row, sorting along each row) can be parallelized easily.
The downside, however, is that column operations (\eg summation over each column) might take extra efforts.  
Nevertheless, this is not a bottleneck of the speed of dual projection and dual LMO, since they only require efficient row operations.

\clearpage

\subsection{Further Results on Convergence of Outer Maximization}
\label{sec:appendix_loss_acc_vs_iter}

\begin{figure*}[h]
\centering
\begin{subfigure}{0.44\textwidth}
    \includegraphics[width=\textwidth]{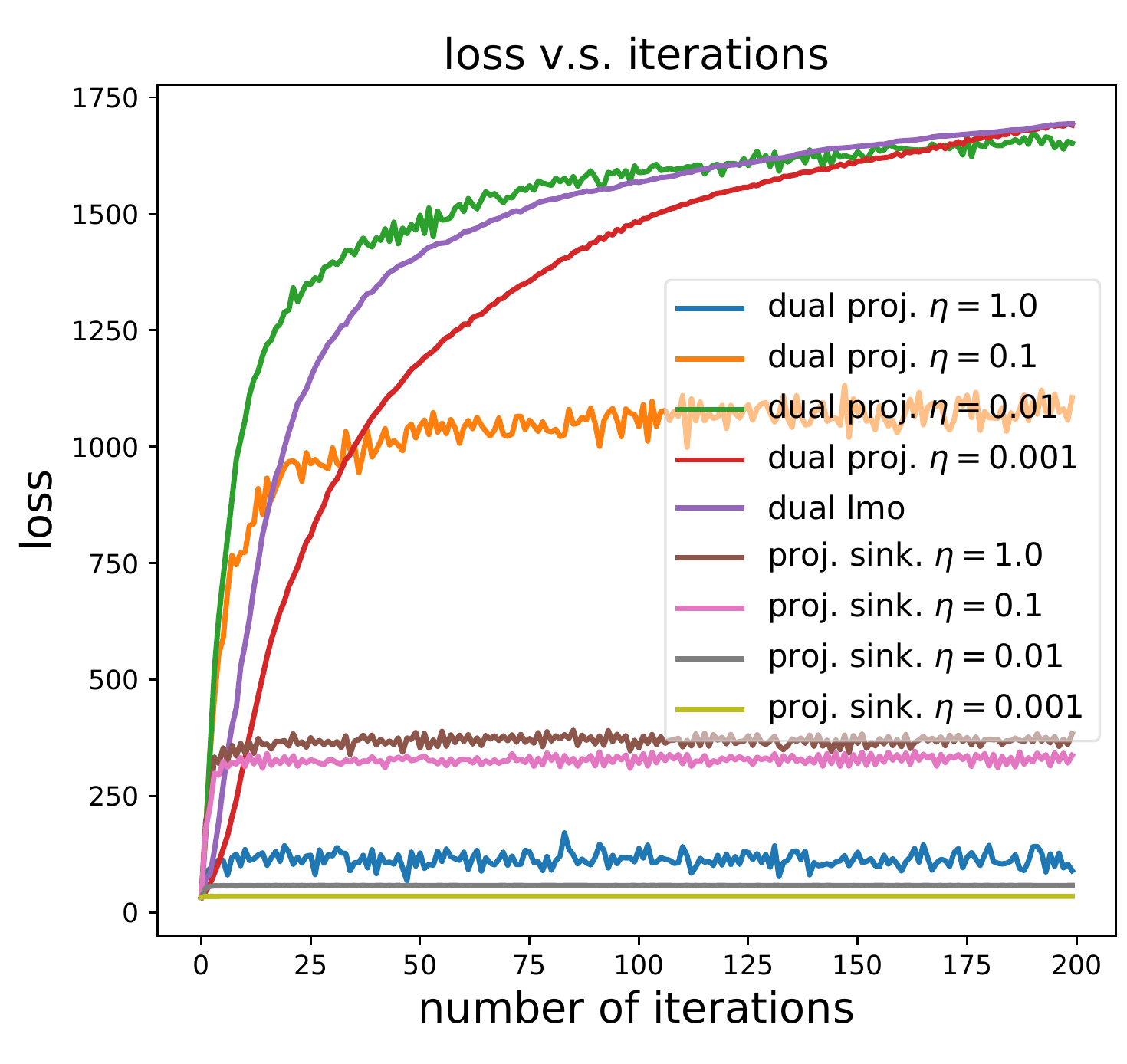}
    \caption{loss \wrt iterations on CIFAR-10}
\end{subfigure}
\begin{subfigure}{0.44\textwidth}
    \includegraphics[width=\textwidth]{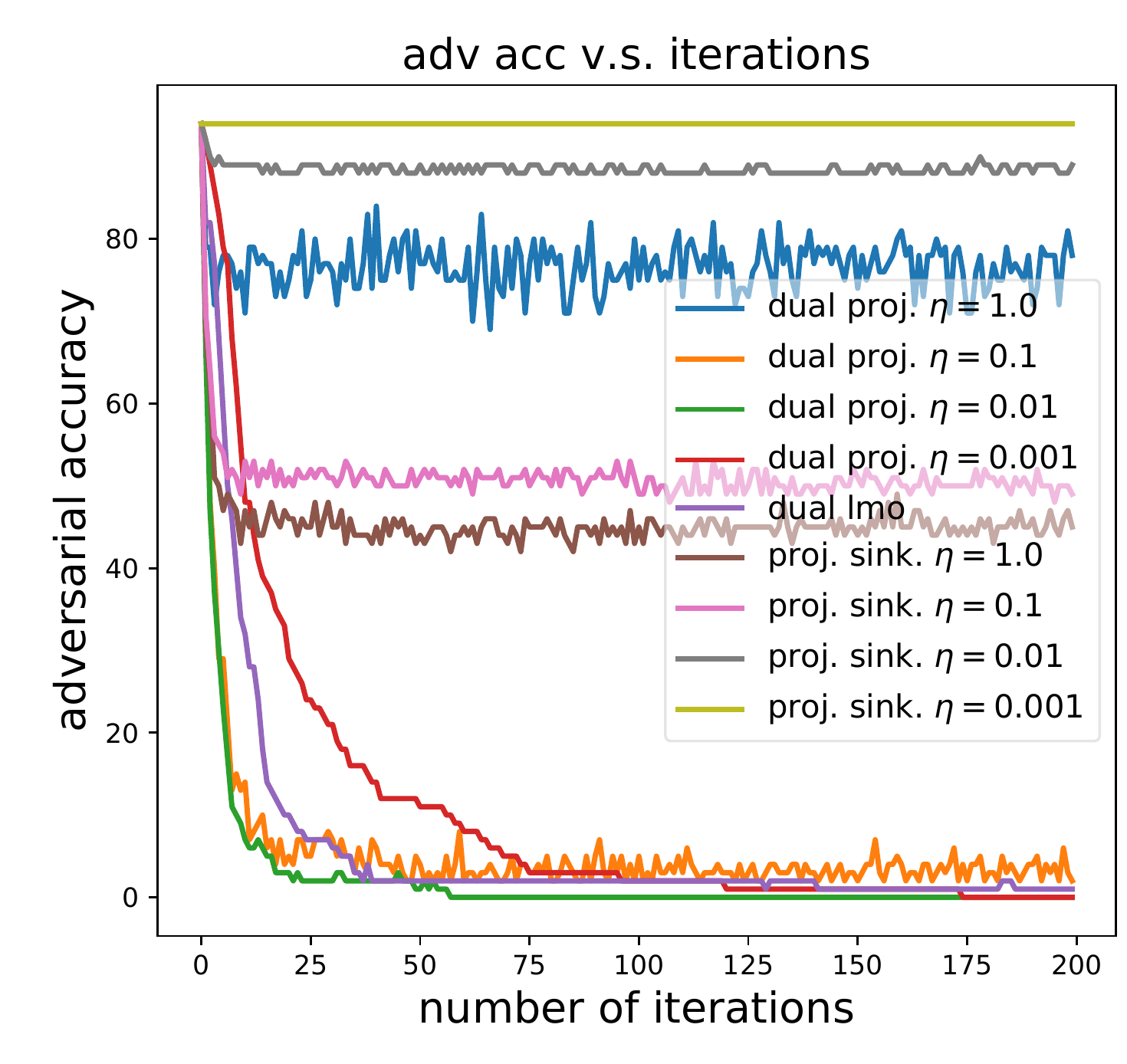}
    \caption{adversarial accuracy \wrt iterations on CIFAR-10}
\end{subfigure}

\begin{subfigure}{0.44\textwidth}
    \includegraphics[width=\textwidth]{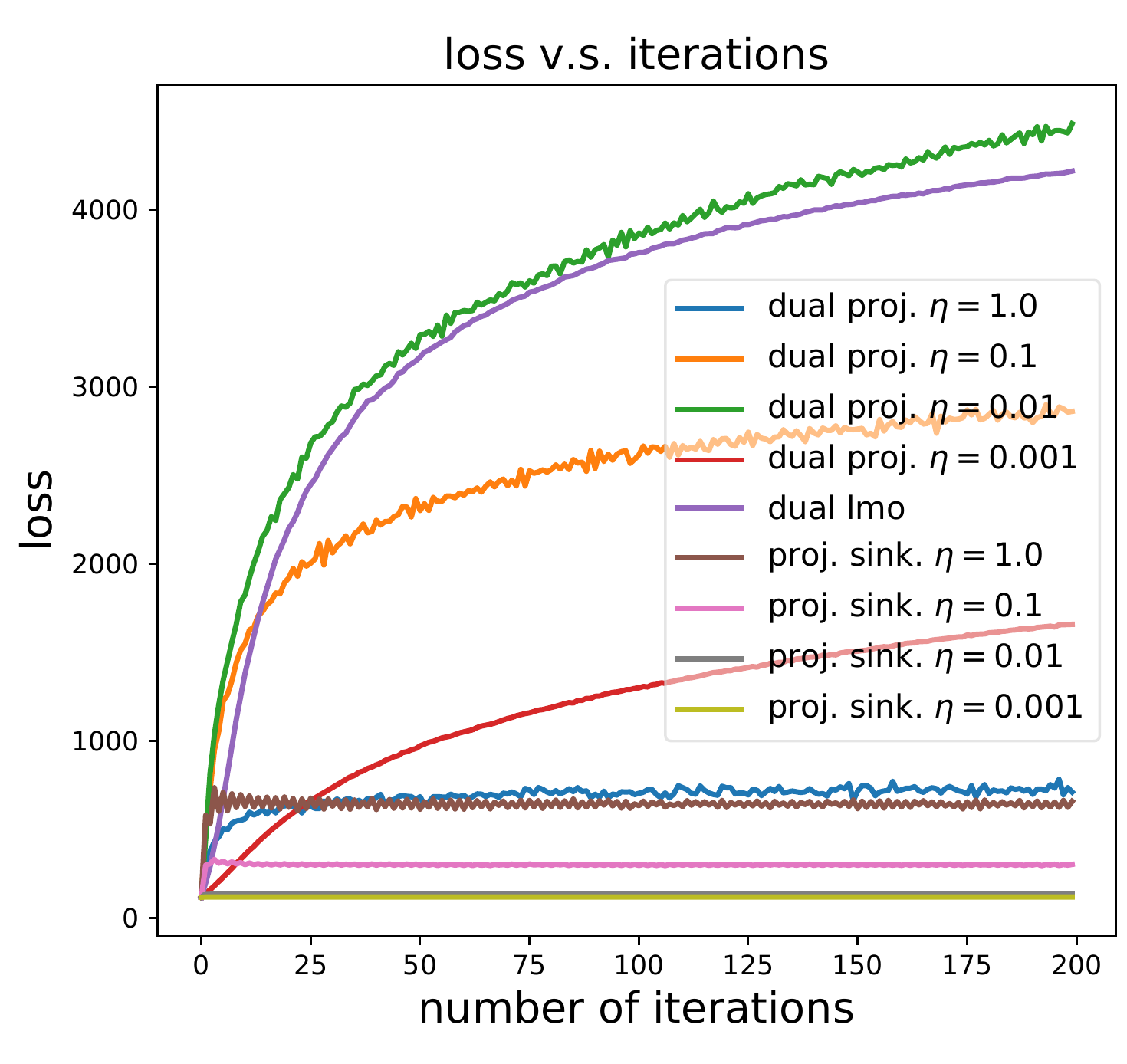}
    \caption{loss \wrt iterations on ImageNet}
\end{subfigure}
\begin{subfigure}{0.44\textwidth}
    \includegraphics[width=\textwidth]{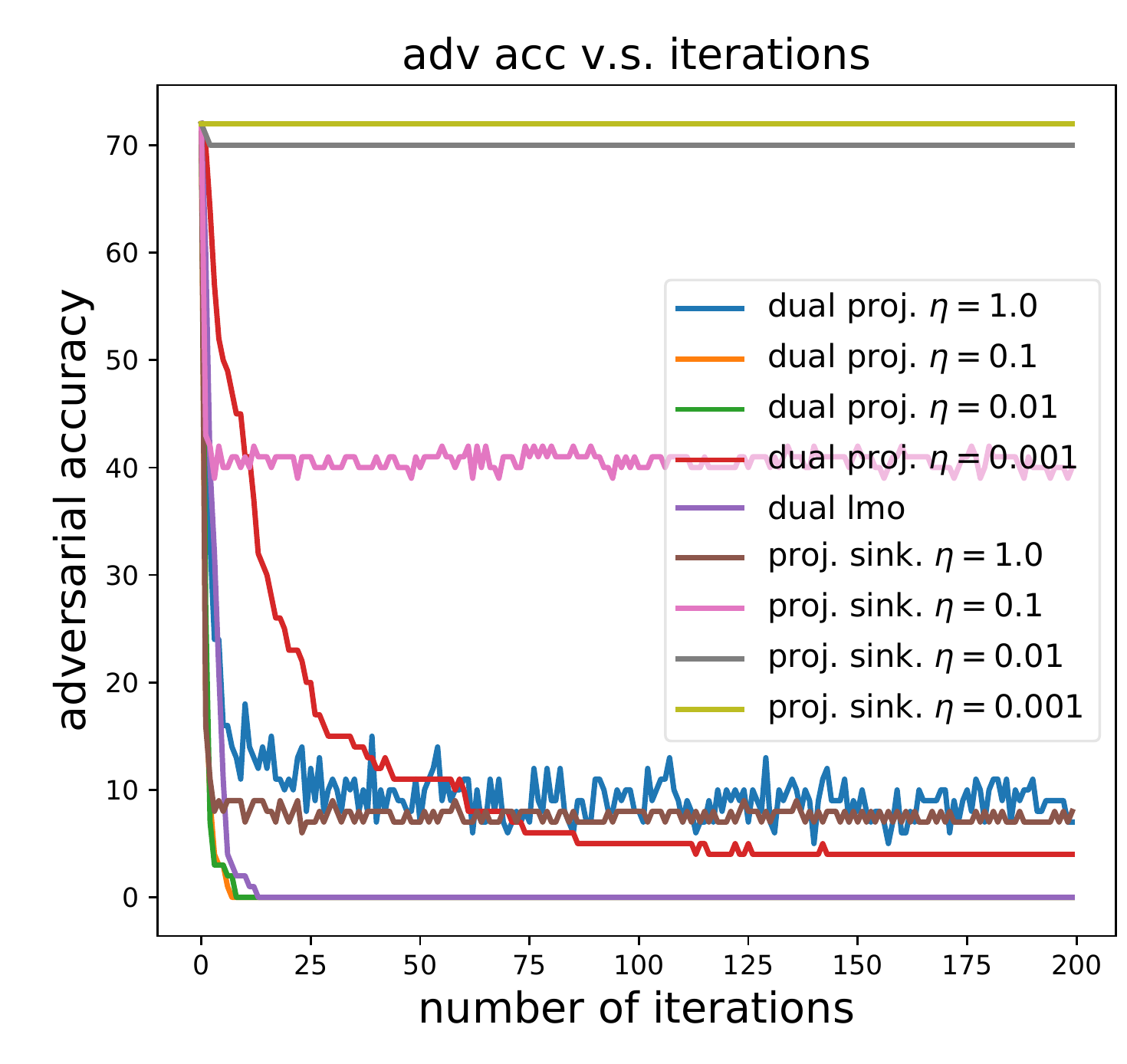}
    \caption{adversarial accuracy \wrt iterations on ImageNet}
\end{subfigure}
\caption{Convergence of outer maximization of different attacks.}
\label{fig:full_cifar_imagenet_converge}
\end{figure*}

We plot the loss and adversarial accuracy \wrt the number of iterations in \Cref{fig:full_cifar_imagenet_converge}
($\epsilon = 0.005$ on both CIFAR-10 and ImageNet).
Dual LMO uses $\gamma = 10^{-3}$.
Projected Sinkhorn uses $\gamma = 5 \cdot 10^{-5}$ on CIFAR-10 and $\gamma = 5 \cdot 10^{-6}$ on ImageNet.

\textbf{Frank-Wolfe with dual LMO} FW uses the default decay schedule $\frac{2}{t + 1}$. We observe that FW with dual LMO converges very fast especially at the initial stage, even when using the simple default decay schedule.

\textbf{PGD with Projected Sinkhorn}
We observe that when $\eta$ is small (\eg $\eta = 0.01, 0.001$), PGD with projected Sinkhorn barely makes progress in the optimization.
While aggressively large step sizes (\eg $1.0$ and $0.1$) can make progress, the curves are very noisy indicating the steps sizes are too large, and the loss is still much lower than the other two attacks.

\textbf{PGD with Dual Projection}
In contrast, PGD with dual projection has more meaningful curves: small $\eta$ (\eg $0.001$) converges very slowly; large $\eta$ (\eg $1.0$ and $0.1$) makes the curves noisy, while appropriate choice of $\eta$ (\eg $0.01$) always achieves the highest loss and also the lowest adversarial accuracy. In these cases, PGD with dual projection converges comparably and sometimes slightly faster than Frank-Wolfe.

\clearpage


\subsection{MNIST and CIFAR-10 Adversarial Examples}

\begin{figure}[h]
\centering

\begin{subfigure}{0.48\textwidth}
\begin{tikzpicture}
\node[inner sep=0] (adv_imgs) at (0, 0) {\includegraphics[width=\textwidth]{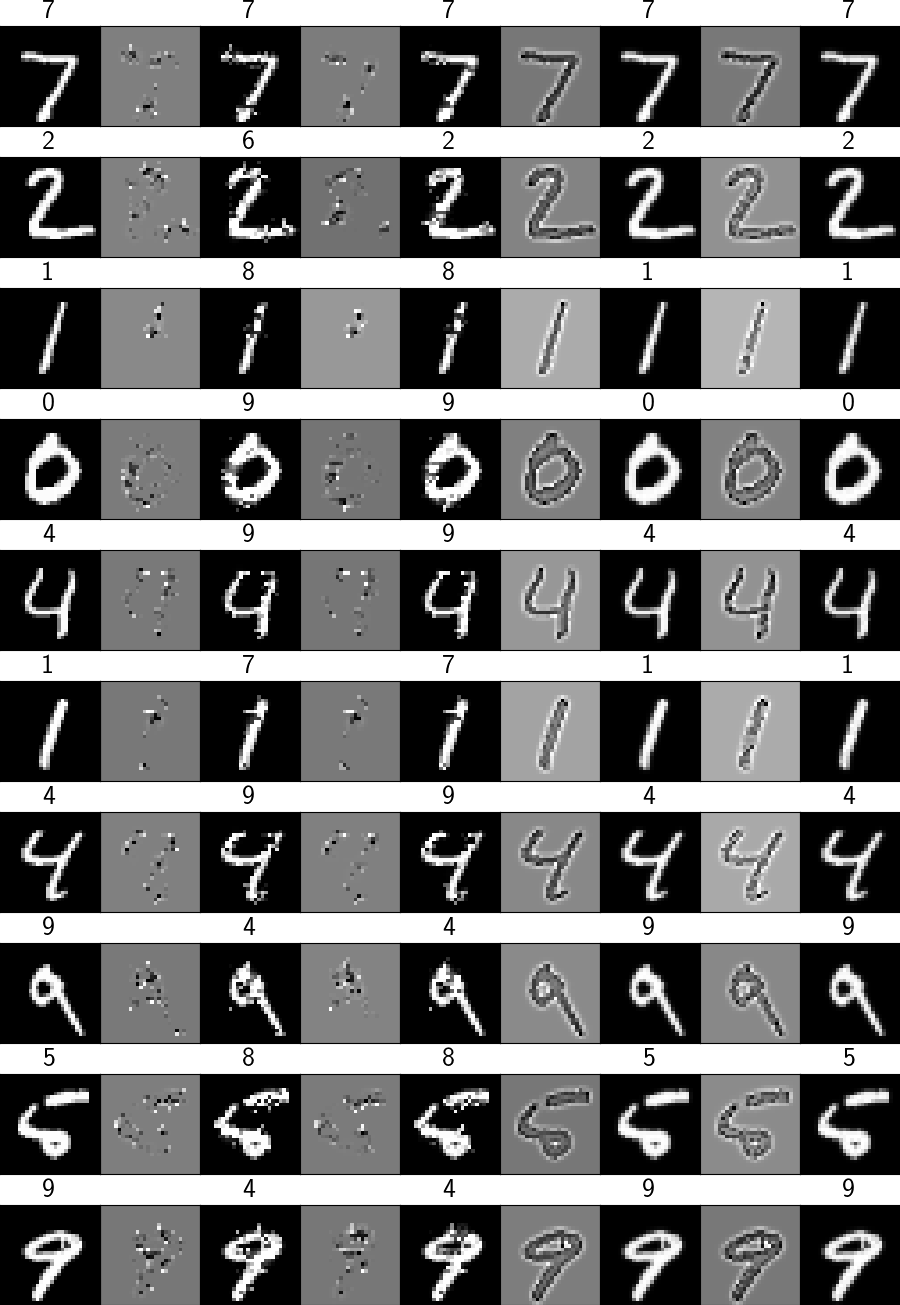}};

\getcorners{adv_imgs}

\drawtext{1}{0.888}{\scriptsize clean}{}
\drawtext{0.888}{0.666}{\scriptsize dual proj.}{}
\drawtext{0.666}{0.444}{\scriptsize dual LMO}{\scriptsize ($\gamma = 10^{-3}$)}
\drawtext{0.444}{0.222}{\scriptsize dual LMO}{\scriptsize ($\gamma = 10$)}
\drawtext{0.222}{0.0}{\scriptsize proj. Sink.}{\scriptsize ($\gamma = 1/1000$)}

\end{tikzpicture}

\caption{MNIST}

\end{subfigure}
\begin{subfigure}{0.48\textwidth}
\begin{tikzpicture}
\node[inner sep=0] (adv_imgs) at (0, 0) {\includegraphics[width=\textwidth]{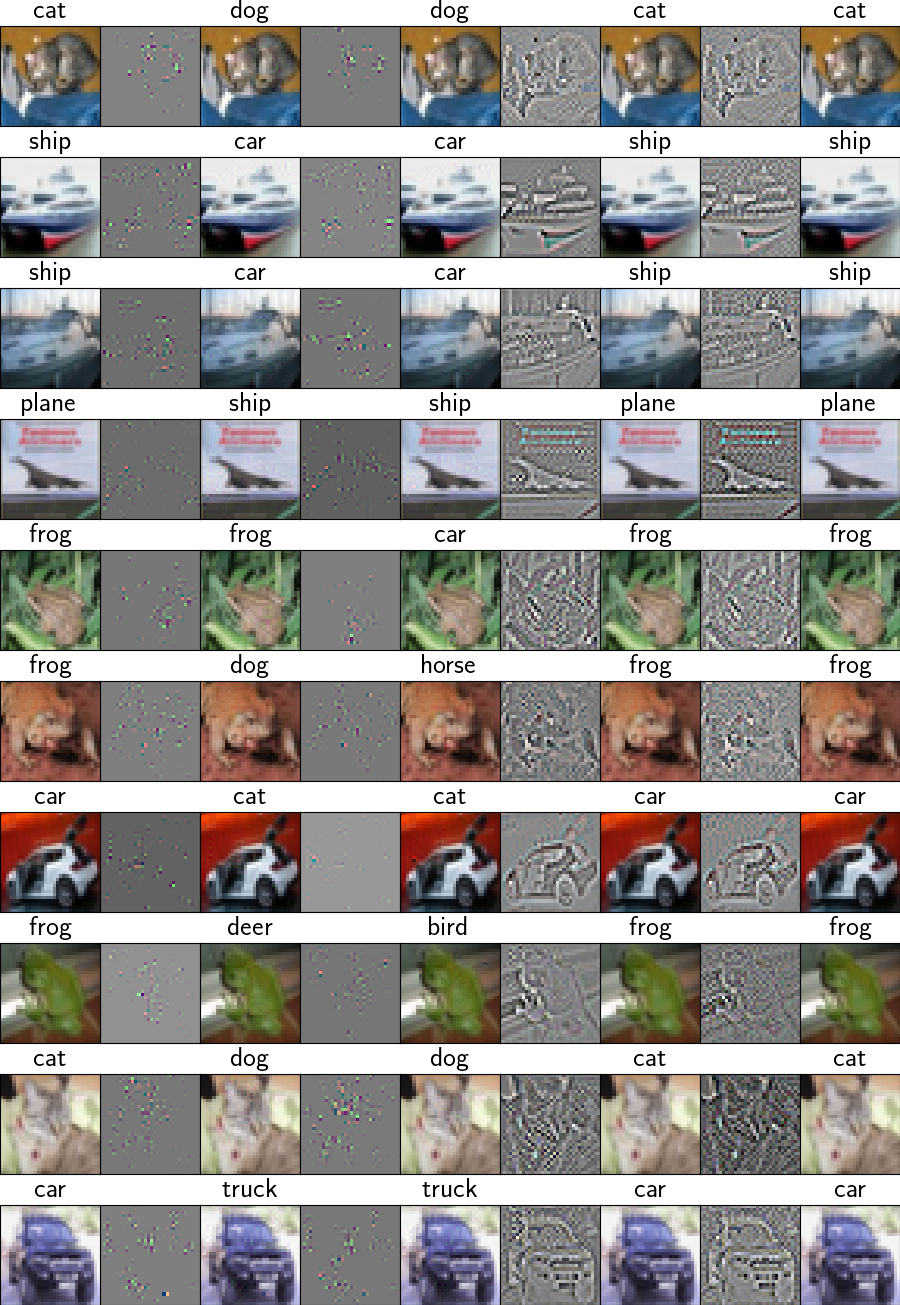}};

\getcorners{adv_imgs}

\drawtext{1}{0.888}{\scriptsize clean}{}
\drawtext{0.888}{0.666}{\scriptsize dual proj.}{}
\drawtext{0.666}{0.444}{\scriptsize dual LMO}{\scriptsize ($\gamma = 10^{-3}$)}
\drawtext{0.444}{0.222}{\scriptsize dual LMO}{\scriptsize ($\gamma = 10$)}
\drawtext{0.222}{0.0}{\scriptsize proj. Sink.}{\scriptsize ($\gamma = 1/3000$)}

\end{tikzpicture}

\caption{CIFAR-10}

\end{subfigure}

\caption{Comparison of Wasserstein adversarial examples and Wasserstein perturbations generated by different attacks on MNIST ($\epsilon = 0.2$) and CIFAR-10 ($\epsilon = 0.002$). Predicted labels are shown on the top of images. Perturbations are scaled linearly to $[0, 1]$ for visualization.}

\end{figure}

Wasserstein adversarial perturbations generated by PGD with dual projection and FW with dual LMO ($\gamma = 10^{-3}$) are very sparse.
Hence, they do not reflect the shapes in the clean images.
Perturbations reflect the shapes only when the entropic regularization introduces large approximation error (\eg FW with dual LMO ($\gamma = 10$) and PGD with projected Sinkhorn).

For the sake of visualization of the approximation error, we let PGD use smaller step sizes ($\eta = 0.01$) and more iterations (1000) when combined with projected Sinkhorn.\footnote{This is the same case for \Cref{fig:cifar_adv} in the main paper. Notice that our normalization method is slightly different from that of \textcite{WongSK19}, which might account for the slight difference of the visualization results.}
We observe that using large step size, \eg, $\eta=0.1$, often generates blurry perturbations (not necessarily reflecting shapes clearly).
But they are still much more dense compared with adversarial perturbations generated dual projection and dual LMO ($\gamma = 10^{-3}$).

\clearpage

\subsection{ImageNet Adversarial Examples}

\begin{figure}[!h]
\centering
\begin{subfigure}{0.46\textwidth}
    \includegraphics[width=\textwidth]{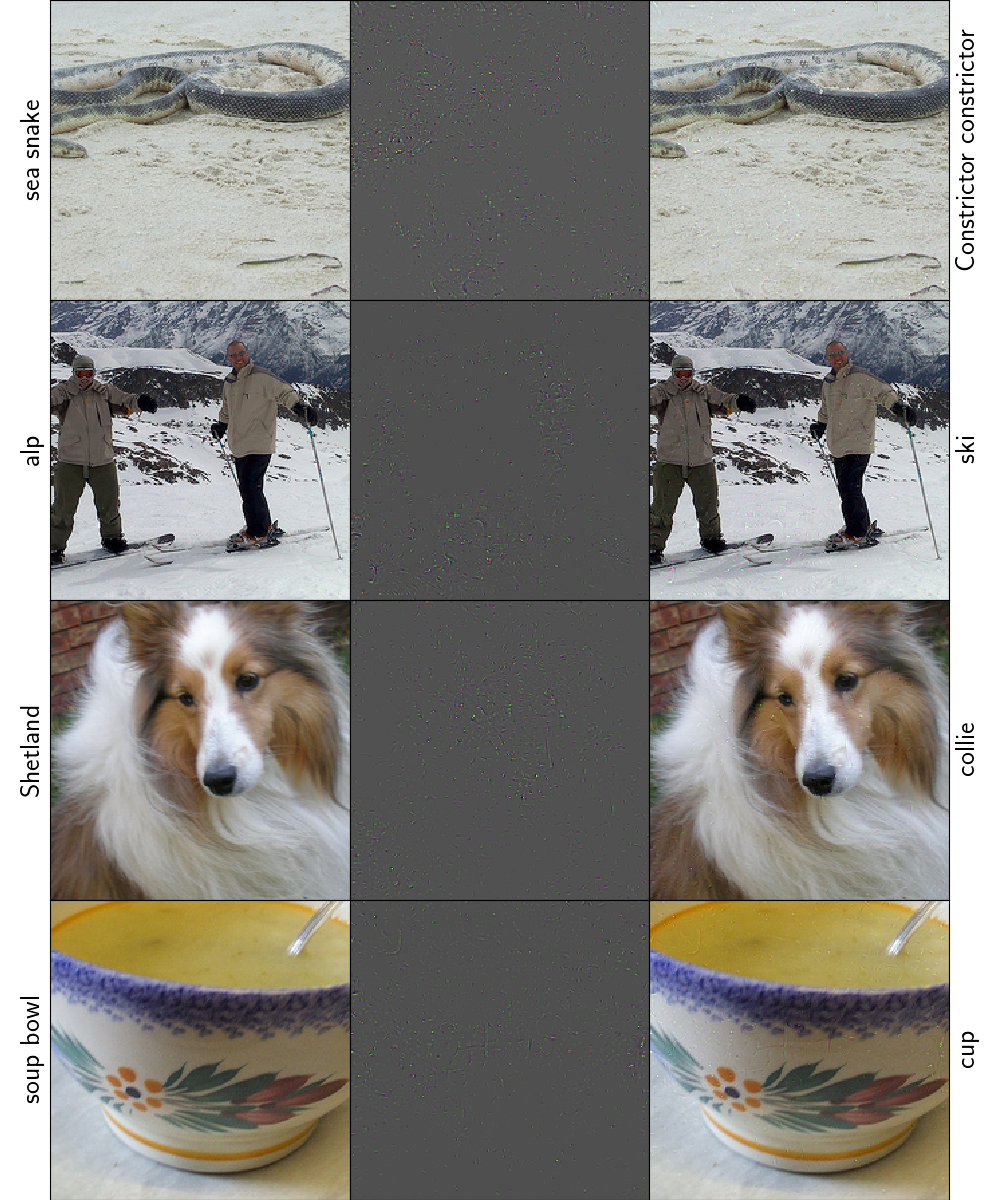}
    \caption{Dual projection}
\end{subfigure}
\begin{subfigure}{0.46\textwidth}
    \includegraphics[width=\textwidth]{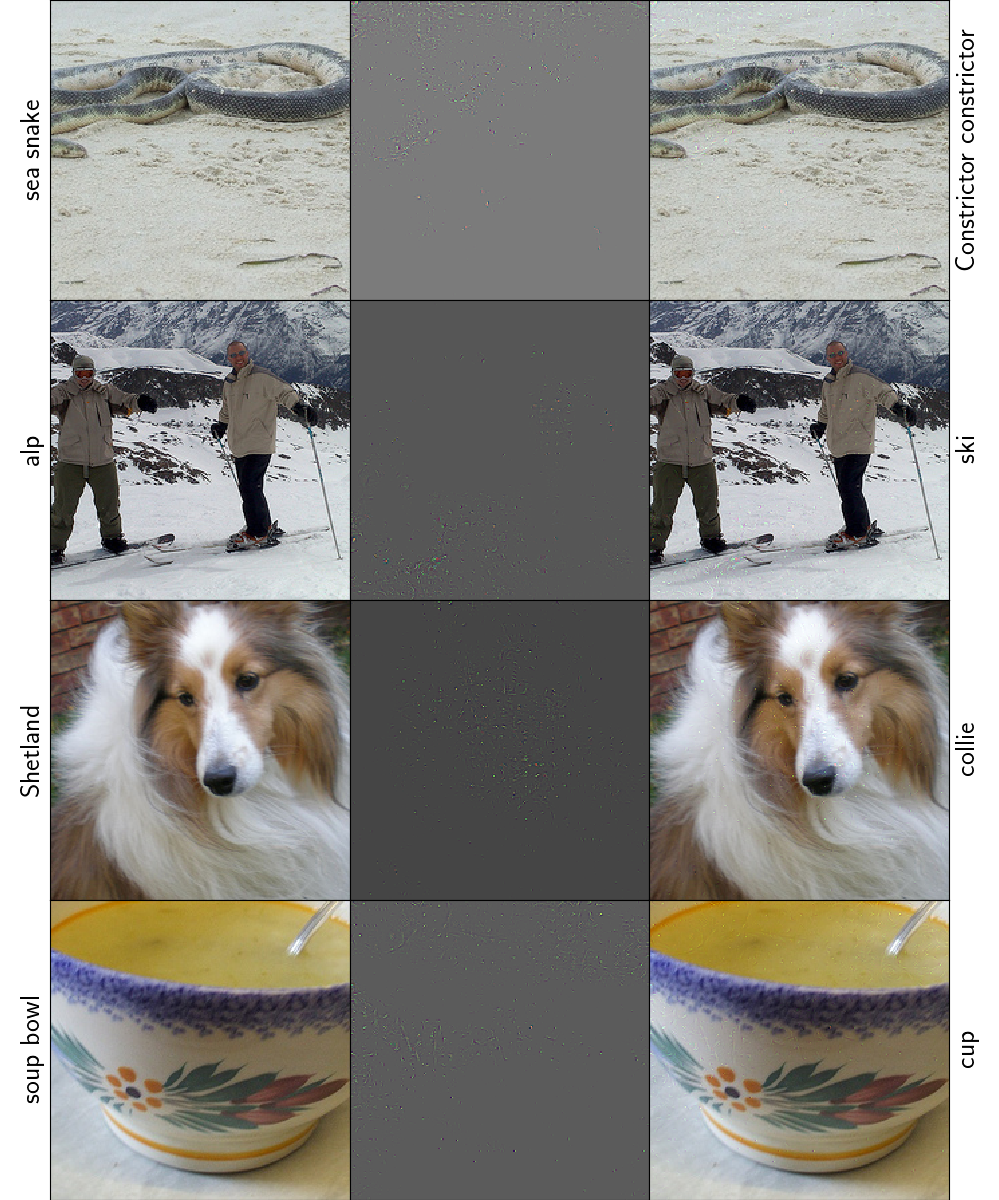}
    \caption{Dual LMO ($\gamma = 10^{-3}$)}
\end{subfigure}

\smallskip

\begin{subfigure}{0.46\textwidth}
    \includegraphics[width=\textwidth]{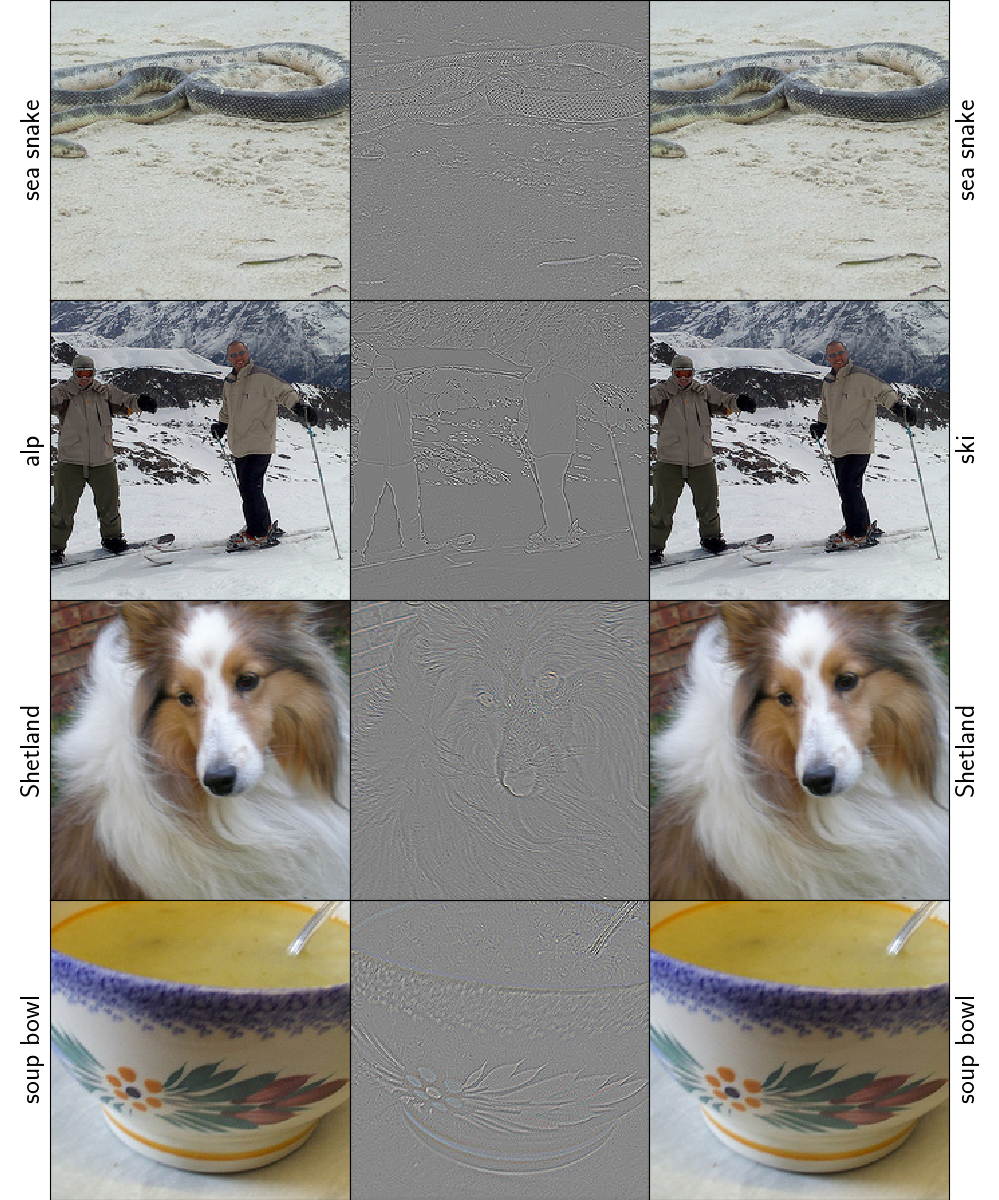}
    \caption{Dual LMO ($\gamma = 10$)}
\end{subfigure}
\begin{subfigure}{0.46\textwidth}
    \includegraphics[width=\textwidth]{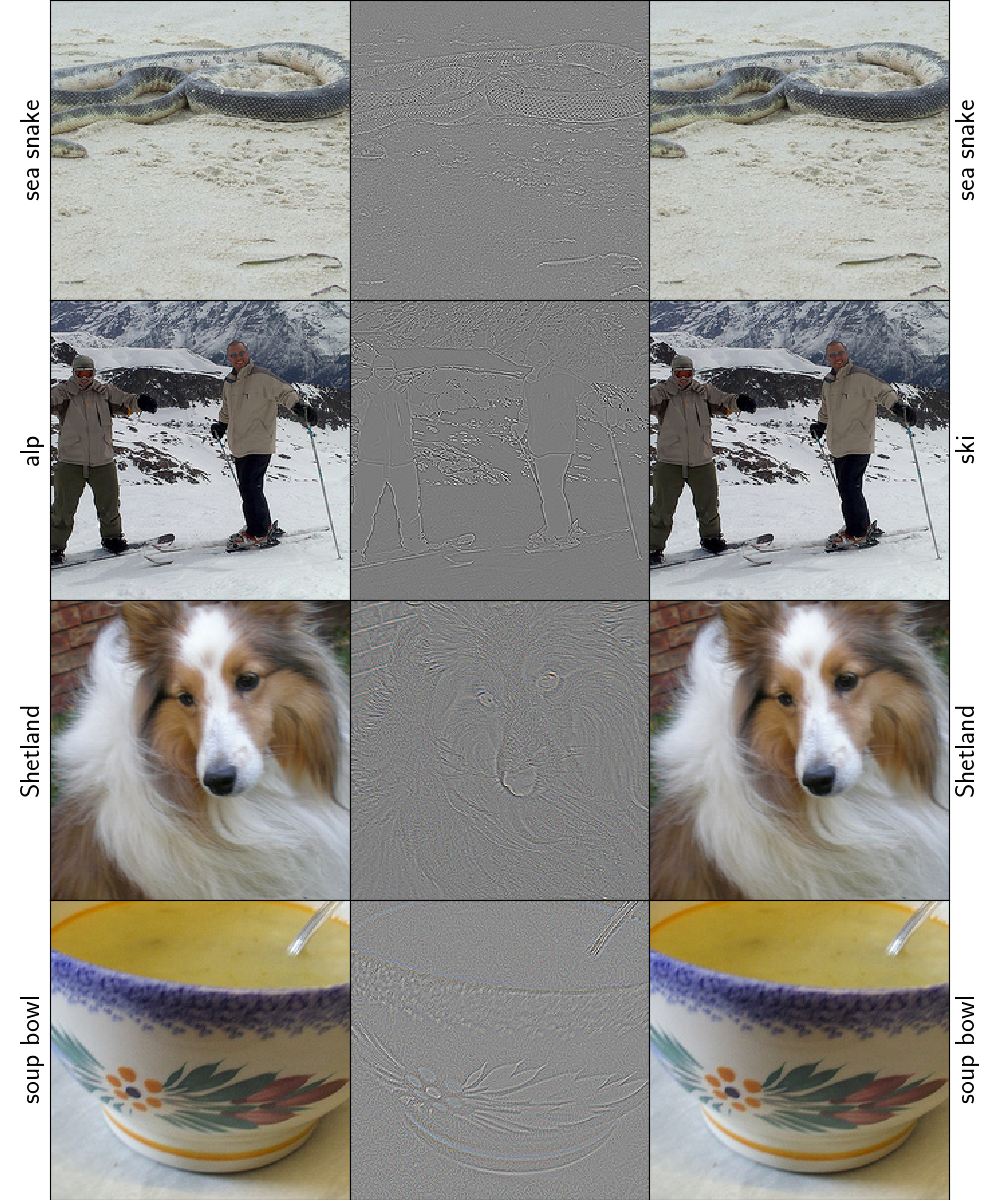}
    \caption{Projected Sinkhorn ($\gamma = 10^{-4}$)}
\end{subfigure}
\caption{Comparison of Wasserstein adversarial examples generated by different attacks ($\epsilon = 0.005$). Notice that adversarial pertubations reflect shapes in the images only when the entropic regularization is large (bottom two subfigures).}
\end{figure}

\subsection{Why Does Entropic Regularization Reflect Shapes in Images?}
\label{sec:appendix_entropy_relection_analysis}


A large entropic regularization term in the optimization objective (of projected Sinkhorn and dual LMO) encourages the transportation to be uniform, thus each pixel tends to spread its mass evenly to its nearby pixels.
In the region where pixel intensities do not change much, the transportations cancel out;
while in the region where pixel intensities change drastically (\eg, edges in an image), pixel mass flows from the high pixel region to low pixel region.
As a result, it reflects the edges in the original image.

\subsection{Analysis of Running Time of Projected Sinkhorn}
\label{sec:appendix_running_time_projected_sinkhorn}


One possible explanation for slow per iteration running speed of projected Sinkhorn is that, each iteration of projected Sinkhorn requires solving $n$ nonlinear univariate equations. In implementation, this step is done by calling Lambert function, which eventually calls Halley's method, a root-finding algorithm. While solving each nonlinear equation is counted as $O(1)$ in analysis, this operation is potentially much more expensive than other basic arithmetic operations (\eg addition and multiplication). In contrast, both dual projection and dual linear minimization only rely on more standard arithmetic operations (\eg multiplication, division and comparison, logarithm and exponential functions).

We test the running time of the nonlinear equation solvers in experiments. On a single RTX 2080 Ti GPU, for a batch of $100$ samples on MNIST, we observe that Lambert function evaluation takes about $26\%$ of the running time during each iteration of projected Sinkhorn. For a batch of $100$ samples on CIFAR-10, Lambert function evaluation takes about $28$\% of the running time during each iteration of projected Sinkhorn. For a batch of $50$ samples on ImageNet, Lambert function evaluation takes about $32\%$ of the running time of one iteration of projected Sinkhorn.

We note that GPU time recording is somewhat inconsistent on different machines.
In our case, all experiments in the main paper are run on a single P100 GPU on a cluster.
However, on a 2080 Ti local GPU, we observe that projected Sinkhorn is slightly faster than dual projection on ImageNet in terms of per iteration running time (but still at least twice slower than dual LMO).
While on MNIST and CIFAR-10, dual projection and dual LMO are consistently faster than projected Sinkhorn, both on P100 cluster and 2080 Ti local machine.

\clearpage

\subsection{Case Study: Feasibility of Generated Wasserstein Adversarial Examples}

In this section, we present a sanity check of feasibility of adversarial examples generated by different algorithms on MNIST.

\begin{table}[h]
\centering
\caption{
A sanity check of feasibility of Wasserstein adversarial examples generated by different algorithms on MNIST.
\textbf{2nd column:} The average Wasserstein distance between adversarial examples and clean images (the higher the better).
\textbf{3rd column:} The maximum pixel value in the generated adversarial examples (the lower the better).
\textbf{4th column:} The percentage of pixel mass that exceeds $1$ (the lower the better).
The third and the forth columns are largest values over a mini-batch, while the second column is the average over a mini-batch.
}
\label{tb:sanity_check} \begin{tabular}{l | c c c}
\toprule
\multicolumn{1}{c|}{\multirow{1}{*}{method}} &  $\Wc(\xv, \xv_{adv})$ & $\max_{i} \left\{ \xv_i \right\}$ & $\frac{\sum_{i = 1}^{n} \max\{\xv_i - 1, 0\}}{\sum_{i=1}^{n} \xv_i}$ \\
\midrule
PGD + Proj. Sink. ($\gamma = 1 / 1000$) & $0.109965$ & $1.474048$ & $3.646899\%$ \\
PGD + Dual Proj. ~(w/o post-processing) & $0.493146$ & $4.118621$ & $15.812986\%$ \\
PGD + Dual Proj. & $0.444885$ & $1.000030$ & $0.000034\%$ \\
FW ~ + Dual LMO (w/o post-processing) & $0.428238$ & $3.955514$ & $10.406417\%$ \\
FW ~ + Dual LMO & $0.399014$ & $1.000015$ & $0.000025\%$ \\
\bottomrule
\end{tabular}
\end{table}

\textbf{Setup}
We test all attacks on the first 100 samples on the test set of MNIST with $\epsilon = 0.5$.
All parameter settings are the same as the table in the main paper.
Dual LMO uses $\gamma = 10^{-3}$ and projected Sinkhorn uses $\gamma = 1 / 1000$.

\textbf{Wasserstein Constraint}
The (exact) Wasserstein distances presented in \Cref{tb:sanity_check} are calculated by a linear programming solver.
We observe that all adversarial examples strictly satisfy the the Wasserstein constraint $\Wc(\xv, \xv_{adv}) \leq 0.5$.
We also observe that the Wasserstein adversarial examples generated by PGD with dual projection and FW with dual LMO have much larger Wasserstein distance than those of projected Sinkhorn.
This again hightlights that projected Sinkhorn is only an approximate projection operator, thus PGD cannot fully explore the Wasserstein ball, resulting in a weak attack.

\textbf{Hypercube Constraint}
Without post-processing, all attacks generate Wasserstein adversarial examples that violate the hypercube constraint a lot.
However, after applying our post-processing algorithm, the generated Wasserstein adversarial examples roughly satisfy the hypercube constraint $[0, 1]^n$ up to a reasonable precision.

\clearpage

\subsection{Adversarially Trained Models}
\label{sec:adversarial_training}

In this section, we present additional experiments on adversarial training and attacking adversarially trained models.

\subsubsection{MNIST}
We adversarially train a robust model using Frank-Wolfe with dual LMO and a fixed perturbation budget $\epsilon=0.3$.
The inner maximization is approximated with $40$ iterations of Frank-Wolfe.\footnote{We also have tried larger perturbation budgets $\epsilon=0.4$ and $\epsilon=0.5$, but the training collapses.}
This model achieves $95.83\%$ clean accuracy.
The adversarial accuracy of this model is shown in \Cref{tab:mnist_adv_training_frank_wolfe}.

For comparison, we also present results on attacking an adversarially trained model by PGD with projected Sinkhorn.
We use a pretrained model released by \textcite{WongSK19}, which is adversarially trained using an adaptive perturbation budget $\epsilon \in [0.1, 2.1]$. 
This model achieves $97.28\%$ clean accuracy.

We notice that the model adversarially trained by PGD with projected Sinkhorn seems to overfit to the same attack.
Compared with the standard trained model in \Cref{tb:acc_iter_time}, the adversarially trained model has higher adversarial accuracy under projected Sinkhorn, but has lower adversarial accuracy under our stronger attacks.

In \Cref{tab:mnist_adv_training_projected_sinkhorn}, the post-processing algorithm does not work quite well with Frank-Wolfe.
Normally, increasing the perturbation budget $\epsilon$ should decrease the adversarial accuracy.
Indeed, if we do not post-process the output, the adversarial accuracy decreases monotonically for Frank-Wolfe.
However, after post-processing, the adversarial accuracy increases a lot, and even increases as the the perturbation budget increases.
For this model, our post-processing algorithm does not seems to be ``compatible'' with Frank-Wolfe.
Doing a better job on optimizing the Wasserstein constrained problem ignoring the hypercube constraint does not necessarily give a good solution to the problem with hypercube constraint. 

\begin{table*}[h]
\centering
\caption{MNIST model adversarially trained by Frank-Wolfe with dual LMO ($\epsilon=0.3$).}
\label{tab:mnist_adv_training_frank_wolfe}
\begin{tabular}{l c c c c c}
\toprule
\multicolumn{1}{c}{\multirow{1}{*}{method}} & $\epsilon = 0.1$ & $\epsilon = 0.2$ & $\epsilon = 0.3$ & $\epsilon = 0.4$ & $\epsilon = 0.5$ \\
\midrule
PGD + Proj. Sink. ($\gamma = 1/ 1000$)  & $94.9$ &  $94.0$ &  $93.0$ &  $91.9$ &  $90.5$ \\
PGD + Proj. Sink. ($\gamma = 1/ 1500$)  & $94.5$ &  $93.2$ &  $91.5$ &  $89.3$ &  $86.8$ \\
PGD + Proj. Sink. ($\gamma = 1/ 2000$)  & $-$    &  $-$    &  $-$    &  $-$    &  $-$    \\
PGD + Dual Proj.                        & $92.6$ &  $87.8$ &  $80.2$ &  $70.7$ &  $59.7$ \\
FW ~ + Dual LMO                         & $92.5$ &  $88.1$ &  $82.1$ &  $75.3$ &  $66.8$ \\
\midrule
PGD + Dual Proj. ~(w/o post-processing) & $91.1$ &  $82.9$ &  $71.3$ &  $58.4$ &  $44.6$ \\
FW ~ + Dual LMO (w/o post-processing)   & $91.1$ &  $83.9$ &  $73.9$ &  $63.0$ &  $50.9$ \\
\bottomrule
\end{tabular}
\end{table*}

\begin{table*}[h]
\centering
\caption{MNIST model adversarially trained by PGD with projected Sinkhorn.}
\label{tab:mnist_adv_training_projected_sinkhorn}
\begin{tabular}{l c c c c c}
\toprule
\multicolumn{1}{c}{\multirow{1}{*}{method}} & $\epsilon = 0.1$ & $\epsilon = 0.2$ & $\epsilon = 0.3$ & $\epsilon = 0.4$ & $\epsilon = 0.5$ \\
\midrule
PGD + Proj. Sink. ($\gamma = 1/ 1000$)  & $95.0$ &  $92.4$ &  $90.5$ &  $88.5$ &  $86.5$ \\
PGD + Proj. Sink. ($\gamma = 1/ 1500$)  & $93.8$ &  $90.0$ &  $82.7$ &  $85.2$ &  $90.5$ \\
PGD + Proj. Sink. ($\gamma = 1/ 2000$)  & $-$    &  $-$    &  $-$    &  $-$    &  $-$    \\
PGD + Dual Proj.                        & $ 1.1$ &  $ 0.8$ &  $ 0.6$ &  $ 0.6$ &  $ 0.6$ \\
FW ~ + Dual LMO                         & $ 4.8$ &  $21.6$ &  $34.5$ &  $39.2$ &  $39.6$ \\
\midrule
PGD + Dual Proj. ~(w/o post-processing) & $ 1.0$ &  $ 0.4$ &  $ 0.3$ &  $ 0.3$ &  $ 0.3$ \\
FW ~ + Dual LMO (w/o post-processing)   & $ 0.8$ &  $ 0.3$ &  $ 0.2$ &  $ 0.1$ &  $ 0.0$ \\
\bottomrule
\end{tabular}
\end{table*}

\clearpage

\subsubsection{CIFAR-10}

We adversarially train a robust model using Frank-Wolfe with dual LMO and a fixed perturbation budget $\epsilon=0.005$.
The inner maximization is approximated with at most $30$ iterations of Frank-Wolfe.
This model achieves $82.57\%$ clean accuracy.
The adversarial accuracy of this model is shown in \Cref{tab:cifar_adv_training_frank_wolfe}.

For comparison, we also present results on attacking an adversarially trained model by PGD with projected Sinkhorn.
We use a pretrained model released by \textcite{WongSK19}, which is adversarially trained using an adaptive perturbation budget $\epsilon \in [0.01, 0.38]$. 
This model achieves $81.68\%$ clean accuracy.

\begin{table*}[h]
\centering
\caption{CIFAR-10 model adversarially trained by Frank-Wolfe with dual LMO ($\epsilon=0.005$)}
\label{tab:cifar_adv_training_frank_wolfe}
\begin{tabular}{l c c c c c}
\toprule
\multicolumn{1}{c}{\multirow{1}{*}{method}} & $\epsilon = 0.001$ & $\epsilon = 0.002$ & $\epsilon = 0.003$ & $\epsilon = 0.004$ & $\epsilon = 0.005$ \\
\midrule
PGD + Proj. Sink. ($\gamma=1 /  3000$)  & $82.4$ & $82.3$ & $82.1$ & $81.9$ & $81.8$ \\
PGD + Proj. Sink. ($\gamma=1 / 10000$)  & $82.0$ & $81.5$ & $81.0$ & $80.6$ & $80.1$ \\
PGD + Proj. Sink. ($\gamma=1 / 20000$)  & $-$ & $-$ & $-$ & $-$ & $-$  \\
PGD + Dual Proj.                        & $76.8$ &  $71.8$ &  $67.0$ &  $62.0$ &  $56.8$ \\
FW ~ + Dual LMO                         & $77.4$ &  $73.7$ &  $70.2$ &  $66.8$ &  $62.6$ \\
\midrule
PGD + Dual Proj. ~(w/o post-processing) & $76.5$ &  $71.3$ &  $66.4$ &  $61.4$ &  $56.2$ \\
FW ~ + Dual LMO (w/o post-processing)   & $77.2$ &  $73.7$ &  $70.4$ &  $67.3$ &  $63.9$ \\
\bottomrule
\end{tabular}
\end{table*}

\begin{table}[h]
\centering
\caption{CIFAR-10 model adversarially trained by PGD with projected Sinkhorn.}
\label{tab:cifar_adv_training_projected_sinkhorn}
\begin{tabular}{l c c c c c}
\toprule
\multicolumn{1}{c}{\multirow{1}{*}{method}} & $\epsilon = 0.001$ & $\epsilon = 0.002$ & $\epsilon = 0.003$ & $\epsilon = 0.004$ & $\epsilon = 0.005$ \\
\midrule
PGD + Proj. Sink. ($\gamma=1 /  3000$)  & $81.7$ & $81.6$ & $81.6$ & $81.6$ & $81.5$ \\
PGD + Proj. Sink. ($\gamma=1 / 10000$)  & $81.6$ & $81.4$ & $81.3$ & $81.2$ & $81.0$ \\
PGD + Proj. Sink. ($\gamma=1 / 20000$)  & $-$    & $-$    & $-$    & $-$    & $-$    \\
PGD + Dual Proj.                        & $72.5$ &  $64.4$ &  $56.3$ &  $48.8$ &  $42.2$ \\
FW ~ + Dual LMO                         & $72.4$ &  $64.0$ &  $55.5$ &  $47.8$ &  $41.2$ \\
\midrule
PGD + Dual Proj. ~(w/o post-processing)  & $72.0$ &  $63.3$ &  $54.9$ &  $47.2$ &  $40.6$ \\
FW ~ + Dual LMO (w/o post-processing)     & $71.7$ &  $62.2$ &  $52.9$ &  $44.7$ &  $37.5$ \\
\bottomrule
\end{tabular}
\end{table}

\clearpage

\section{Post-processing: Capacity Constrained Projection for Hypercube Constraint}
\label{sec:appendix_post_processing}


In this section, we present the post-processing algorithm mentioned in \S\ref{sec:hypercube}.

\subsection{Algorithm}

Suppose that pixel values of input images are represented by real numbers in $[0, 1]^n$. We need to add one additional constraint $\Pi^\top \one \leq \one$, which results in the following Euclidean projection problem:
\begin{align}
\begin{split}
    & \mini_{\Pi \geq 0} ~~ \frac{1}{2} \| \Pi - G \|_\mathrm{F}^2 \\
    & \sbjto ~ \Pi \one = \xv, ~ \Pi^\top \one \leq \one, \langle \Pi, C \rangle \leq \delta.
\end{split}
\end{align}
We call this problem a \emph{capacity constrained projection}, since the additional constraint essentially specifies the maximum mass that a pixel location can receive. We introduce the partial Lagrangian to derive the following dual problem:
\begin{align*}
    \maxi_{\lambda \geq 0,  \muv \geq 0} ~ g(\lambda, \muv),
\end{align*}
where
\begin{align*}
    g(\lambda, \muv) = \min_{\Pi \geq 0, \Pi \one = \xv} ~ & \frac{1}{2} \| \Pi - G \|_\mathrm{F}^2 + \lambda \left(\langle \Pi, C \rangle - \delta\right) + \muv^\top\left(\Pi^\top \one - \one\right) \\
                     = \min_{\Pi \geq 0, \Pi \one = \xv} ~ & \frac{1}{2} \| \Pi - G \|_\mathrm{F}^2 + \lambda \left(\langle \Pi, C \rangle - \delta\right) + \langle \Pi, \one \muv^\top \rangle - \muv^\top \one \\
                     = \min_{\Pi \geq 0, \Pi \one = \xv} ~ & \frac{1}{2} \| \Pi - G + \lambda C + \one \muv^\top \|_\mathrm{F}^2 - \langle G, \lambda C + \one \muv^\top \rangle + \frac12 \| \lambda C + \one \muv^\top \|_{\mathrm{F}}^2 - \lambda \delta - \muv^\top \one
\end{align*}
We optimize $g(\lambda, \muv)$ by alternating maximization on $\lambda$ and $\muv$ respectively:
fixing $\muv$, we maximize $\lambda$ via bisection method; fixing $\lambda$, we maximize $\muv$ via $k$ steps gradient ascent with nesterov acceleration, where $k$ is a hyperparameter.
Evaluating the gradient of $\muv$ in bisection method, as well as evaluating the gradient of $\muv$, can be reduced to simplex projections following similar derivations in dual projection (\S\ref{sec:dual_proj}).

Due to sublinear convergence rate of gradient ascent, it may be slow to obtain a very high precision solution.
However, empirical evidences show that a few hundred alternating maximization (with $k$ around $10$ or $20$) converges to a solution with reasonable precision (\eg satisfying hypercube constraint up to the third digit after the decimal point), and convergence to these modest precision solutions is already sufficient for generating valid Wasserstein adversarial examples.

\section{Additional Related Work}


In this section, we comment on a few more works related to threat models beyond the standard $\ell_p$ metric.
A number of authors have recently explored geometric transformations as an adversarial attack against deep models.
As already mentioned in the main paper, \citeadd{EngstromTTSM19} studied adversarial rotations and translations where perturbation is measured by the degree of rotation and translation.
Similarly, \citetadd{AlaifariAG19} \citetadd{AlaifariAG19} considered adversarial deformations and used the maximum Euclidean size of the deformation as the perturbation budget.
\citetadd{KanbakMF18} studied the more general case where spatial transformation is parameterized as a Lie group, and employed the geodesic distance in the image appearance manifold to measure  perturbation size. \citetadd{XiaoZLHLS18}, on the other hand, modeled spatial transformations as a vector flow and used the total variation of the flow to measure  perturbation size.
On a high level, spatial transformation shares some similarity with the Wasserstein threat model, as they both involve pixel mass movement.
In fact, the Wasserstein threat model can be treated as a further relaxation of spatial transformations, where we are not only allowed to move pixels but also to change pixel values (\eg pixel mass splitting).
In this sense, the Wasserstein threat model is a combination of spatial transformation and the standard $\ell_p$ additive perturbation, although the way it measures perturbation is entirely different from either.
All of the above methods employ first order gradient algorithms to solve their respective optimization problems.
We note that some authors, \eg \citetadd{AthalyeEIK18,LiSK19}, have already considered physical attack for real world objects.

Lastly, we mention that certification algorithms for spatial transformations and the Wasserstein threat model have recently been developed by \citetadd{BalunovicBSGV19} and \citetadd{LevineFeizi19}, respectively.  

\bibliographyadd{ref}
\bibliographystyleadd{icml2020}

\end{document}